\DeclareMathAlphabet{\mathcal}{OMS}{cmsy}{m}{n}
\newtheoremstyle{nthmstyle}
{3pt}
{3pt}
{}
{}
{\bfseries}
{.}
{.5em}
{}
\theoremstyle{nthmstyle}
\newtheorem{proposition}{Proposition}
\newtheorem{definition}{Definition}
\newtheorem{corollary}{Corollary}
\newtheorem{example}{Example}
\newtheorem{assumption}{Assumption}
\crefname{theorem}{Theorem}{Theorems}
\crefname{lemma}{Lemma}{Lemmas}
\crefname{proposition}{Proposition}{Propositions}
\crefname{definition}{Definition}{Definitions}
\crefname{corollary}{Corollary}{Corollaries}
\crefname{example}{Example}{Examples}
\crefname{claim}{Claim}{Claims}
\crefname{assumption}{Assumption}{Assumptions}
\newtheoremstyle{exmpstyle}
{4.0pt} 
{4.0pt} 
{\mdseries} 
{} 
{\itshape} 
{.} 
{ } 
{} 
\theoremstyle{thmstyle}
\crefname{thrm}{Theorem}{Theorems}
\crefname{prop}{Proposition}{Propositions}
\crefname{defn}{Definition}{Definitions}
\crefname{lem}{Lemma}{Lemmas}
\crefname{cor}{Corollary}{Corollaries}
\crefname{crit}{Criterion}{Criteria}
\crefname{clm}{Claim}{Claims}
\crefname{assump}{Assumption}{Assumptions}
\theoremstyle{exmpstyle}
\newtheorem*{exmp*}{Example}
{\popQED\endexmpl}
\crefname{rmrk}{Remark}{Remarks}
\crefname{exmp}{Example}{Examples}
\newcommand{\fml}[1]{{\mathcal{#1}}}
\newcommand{\tn}[1]{\textnormal{#1}}
\newcommand{\tbf}[1]{\textbf{#1}}
\newcommand{\tsf}[1]{{\textsf{\small #1}}}
\newcommand{\mbf}[1]{\ensuremath\mathbf{#1}}
\newcommand{\msf}[1]{\ensuremath\mathsf{#1}}
\newcommand{\mbb}[1]{\ensuremath\mathbb{#1}}
\newcommand{\mrm}[1]{\ensuremath\mathrm{#1}}
\newcommand{\mfrak}[1]{\ensuremath\mathfrak{#1}}
\newcommand{\fpnplog}{\tn{FP}^{\tn{NP}}[\fml{O}(\log m)]}
\newcommand{\Prob}{\ensuremath\tn{Pr}}
\newcommand{\prob}{\Prob}
\newcommand{\isweakpaxp}{\ensuremath\mathsf{isWeakPAXp}}
\newcommand{\findlmpaxp}{\ensuremath\mathsf{findLmPAXp}}
\newcommand{\wdrset}{\ensuremath\mathsf{WeakPAXp}}
\newcommand{\waxp}{\ensuremath\mathsf{WeakAXp}}
\newcommand{\axp}{\ensuremath\mathsf{AXp}}
\newcommand{\lmdrset}{\ensuremath\mathsf{LmPAXp}}
\newcommand{\wpaxp}{\ensuremath\mathsf{WeakPAXp}}
\newcommand{\paxp}{\ensuremath\mathsf{PAXp}}
\newcommand{\mpaxp}{\ensuremath\mathsf{MinPAXp}}
\newcommand{\lmpaxp}{\ensuremath\mathsf{LmPAXp}}
\newcommand{\apaxp}{\ensuremath\mathsf{LmPAXp}}
\newcommand{\nnf}{\ensuremath\tn{NNF}}
\newcommand{\dnnf}{\ensuremath\tn{DNNF}}
\newcommand{\ddnnf}{\ensuremath\tn{d-DNNF}}
\newcommand{\sddnnf}{\ensuremath\tn{sd-DNNF}}
\newcommand{\lvt}{\mathbf{t}}
\newcommand{\lvf}{\mathbf{f}}
\newcommand{\ite}{\tn{ite}}
\DeclareMathOperator*{\nentails}{\nvDash}
\DeclareMathOperator*{\entails}{\vDash}
\DeclareMathOperator*{\limply}{\rightarrow}
\DeclareMathOperator*{\argmax}{\msf{argmax}}
\DeclareMathOperator*{\outdeg}{\tn{deg}^{\tn{+}}}
\DeclareMathOperator*{\lprob}{lPr}
\definecolor{gray}{rgb}{.4,.4,.4}
\definecolor{midgrey}{rgb}{0.5,0.5,0.5}
\definecolor{middarkgrey}{rgb}{0.35,0.35,0.35}
\definecolor{darkgrey}{rgb}{0.3,0.3,0.3}
\definecolor{darkred}{rgb}{0.7,0.1,0.1}
\definecolor{midblue}{rgb}{0.2,0.2,0.7}
\definecolor{darkblue}{rgb}{0.1,0.1,0.5}
\definecolor{darkgreen}{rgb}{0.1,0.5,0.1}
\definecolor{defseagreen}{cmyk}{0.69,0,0.50,0}
\newcommand{\jnoteF}[1]{}
\newcolumntype{L}[1]{>{\raggedright\let\newline\\\arraybackslash\hspace{0pt}}m{#1}}
\newcolumntype{C}[1]{>{\centering\let\newline\\\arraybackslash\hspace{0pt}}m{#1}}
\newcolumntype{R}[1]{>{\raggedleft\let\newline\\\arraybackslash\hspace{0pt}}m{#1}}
\tikzset{
  0 my edge/.style={densely dashed, my edge},
  my edge/.style={-{Stealth[]}},
}
\setlist{nosep}
\newcommand{\PaperTitle}{On Computing Probabilistic Abductive Explanations}
\begin{document}

\title{\PaperTitle}

\author{%
  \name Yacine Izza \email yacine.izza@univ-toulouse.fr \\
  \addr University of Toulouse, Toulouse, France \\
  \addr Monash University, Melbourne, Australia
  \AND
  \name Xuanxiang Huang \email xuanxiang.huang@univ-toulouse.fr \\
  \addr University of Toulouse, Toulouse, France
  \AND
  \name Alexey Ignatiev \email alexey.ignatiev@monash.edu \\
  \addr Monash University, Melbourne, Australia
  \AND
  \name Nina Narodytska \email nnarodytska@vmware.com \\
  \addr VMware Research, Palo Alto, CA, U.S.A.
  \AND
  \name Martin C.\ Cooper \email martin.cooper@irit.fr \\
  \addr IRIT, UPS, Toulouse, France
  \AND
  \name Joao Marques-Silva \email joao.marques-silva@irit.fr \\
  \addr IRIT, CNRS, Toulouse, France
}


\maketitle

\begin{abstract}
  %
  The most widely studied explainable AI (XAI) approaches are
  unsound. This is the case with well-known model-agnostic explanation
  approaches, and it is also the case with approaches based on
  saliency maps.
  One solution is to consider intrinsic interpretability, which does
  not exhibit the drawback of unsoundness.
  Unfortunately, intrinsic interpretability can display unwieldy
  explanation redundancy.
  %
  Formal explainability represents the alternative to these
  non-rigorous approaches, with one example being PI-explanations.
  Unfortunately, PI-explanations also exhibit important drawbacks, the
  most visible of which is arguably their size.  
  %
  Recently, it has been observed that the (absolute) rigor of
  PI-explanations can be traded off for a smaller explanation size, by
  computing the so-called relevant sets.
  Given some positive $\delta$, a set $\fml{S}$ of features
  is $\delta$-relevant if, when the features in $\fml{S}$ are fixed,
  the probability of getting the target class exceeds $\delta$.
  %
  %
  However, even for very simple classifiers, the complexity of
  computing relevant sets of features
  is prohibitive, with the decision problem
  being $\tn{NP}^{\tn{PP}}$-complete for circuit-based classifiers.
  In contrast with earlier negative results, this paper investigates
  practical approaches for computing relevant sets for a number of
  widely used classifiers that include  
  Decision Trees (DTs),
  Naive Bayes Classifiers (NBCs),  
  and several families of classifiers obtained from propositional
  languages.
  Moreover, the paper shows that, in practice, and for these families
  of classifiers, relevant sets are easy to compute. Furthermore, the
  experiments confirm that succinct sets of relevant features
  can be obtained for the families of classifiers considered.
\end{abstract}


\clearpage
\tableofcontents
\clearpage

\section{Introduction} \label{sec:intro}

The advances in Machine Learning (ML) in recent years motivated an
ever increasing range of practical applications of systems of
Artificial Intelligence (AI).
Some uses of ML models are deemed \emph{high-risk} given the impact
that their operation can have on people~\cite{eu-aiact21}. (Other
authors refer to \emph{high-stakes}
applications~\cite{rudin-naturemi19}.)
In some domains, with high-risk applications representing one key
example, the deployment of AI systems is premised on the availability
of mechanisms for explaining the often opaque operation of ML
models~\cite{eu-aiact21}.
The need for explaining the opaque operation of ML models motivated
the emergence of eXplainable AI (XAI).
%
Moreover, the rigor of explanations is a cornerstone to delivering
trustworthy  AI~\cite{msi-aaai22}. For example, rigorous explanations
can be mathematically proven correct, and so a human decision maker
can independently validate that an explanation is (logically) rigorous.
The rigor of explanations is even more significant in high-risk AI
systems, where human decision makers must be certain that provided
explanations are sound.

Motivated by the importance of understanding the operation of blackbox
ML models, recent years have witnessed a growing interest in
%
%
XAI~\cite{muller-dsp18,pedreschi-acmcs19,xai-bk19,muller-xai19-ch01,molnar-bk20,muller-ieee-proc21}. The
best-known XAI approaches can be broadly categorized as either model-agnostic
methods, that include for example LIME~\cite{guestrin-kdd16},
SHAP~\cite{lundberg-nips17} and Anchor~\cite{guestrin-aaai18}, or
intrinsic interpretability~\cite{rudin-naturemi19,molnar-bk20}, for
which the explanation is represented by the actual (interpretable) ML
model.
For specific ML models, e.g.\ neural networks, there are dedicated
explainability approaches, including those based on saliency
maps~\cite{vedaldi-iclr14,muller-plosone15}; these exhibit limitations
similar to model-agnostic
approaches~\cite{adebayo-nips18,adebayo-xai19,eisemann-corr19,landgraf-icml20}.
Moreover, intrinsic interpretability may not represent a viable option
in some uses of AI systems. Also, it has been shown, both in theory
and in practice, that intrinsic interpretable models (such as decision
trees or sets) also require being
explained~\cite{iims-corr20,ims-sat21,iims-corr22}.
On the other hand, model-agnostic methods, even if locally accurate,
can produce explanations that are unsound~\cite{ignatiev-ijcai20}, in
addition to displaying several other
drawbacks~\cite{lukasiewicz-corr19,lakkaraju-aies20a,lakkaraju-aies20b,weller-ecai20}.
Unsound explanations are hopeless whenever rigor is a key requirement;
thus, model-agnostic explanations ought not be used  in high-risk
settings.
%
Indeed, it has been reported~\cite{ignatiev-ijcai20} that an
explanation $\fml{X}$ can be consistent with different predicted classes.
For example, for a bank loan application, $\fml{X}$ might be consistent with
an approved loan application, but also with a declined loan
application.
An explanation that is consistent with both a declined and an approved
loan applications offers no insight to why one of the loan
applications was declined.
%
%
There have been recent efforts on rigorous XAI
approaches~\cite{darwiche-ijcai18,inms-aaai19,darwiche-ecai20,marquis-kr20,kwiatkowska-ijcai21,mazure-cikm21,rubin-aaai22},
most of which are based on feature selection, namely the computation
of so-called abductive explanations (AXp's). However, these efforts
have  mostly focused on the scalability of computing rigorous
explanations, with more recent work investigating input
distributions~\cite{rubin-aaai22}. Nevertheless, another important
limitation of rigorous XAI approaches is the often unwieldy size of
explanations.
Recent work studied probabilistic explanations, as a mechanism to
reduce the size of rigorous
explanations~\cite{vandenbroeck-ijcai21,kutyniok-jair21,waldchen-phd22}.
Some approaches for computing probabilistic explanations have extended
model-agnostic approaches~\cite{vandenbroeck-ijcai21}, and so can
suffer from unsoundness. Alternatively, more rigorous approaches to
computing probabilistic explanations have been shown to be
computationally hard, concretely hard for
$\tn{NP}^{\tn{PP}}$~\cite{kutyniok-jair21,waldchen-phd22}, and so are
all but certain to fall beyond the reach of modern automated
reasoners.

This paper builds on recent work~\cite{kutyniok-jair21,waldchen-phd22}
on computing rigorous probabilistic explanations, and investigates
their practical scalability.
However, instead of considering classifiers represented as boolean
circuits (as in~\cite{kutyniok-jair21}), the paper studies families of
classifiers that are shown to be amenable to the practical computation
of relevant sets.
As the paper shows, such examples include
decision trees (DTs),
naive Bayes classifiers (NBCs),
but also several propositional classifiers, graph-based classifiers,
and their multi-valued variants~\footnote{%
  This paper aggregates and extends recent preprints that compute
  relevant sets for concrete families of
  classifiers~\cite{iincms-corr22,ims-corr22}.}.

The paper revisits the original definition of $\delta$-relevant set,
and considers different variants, in addition to the one proposed in
earlier work~\cite{kutyniok-jair21,waldchen-phd22}, i.e.\ smallest
relevant sets. These include subset-minimal relevant sets, and
locally-minimal relevant sets. Throughout the paper, relevant sets
will be referred to as \emph{probabilistic abductive explanations}
(PAXp's).
Whereas computing a smallest and a subset-minimal PAXp's is shown
to be in NP for decision trees and different propositional
classifiers, computing locally-minimal PAXp's is shown to be
in P for decision trees and different propositional classifiers, and
in pseudo-polynomial time for naive Bayes classifiers.
Furthermore, the experiments confirm that locally-minimal PAXp's most
often match subset-minimal PAXp's. As a result, locally-minimal
relevant sets are shown to represent a practically efficient approach
for computing (in polynomial or pseudo-polynomial time) PAXp's that
are most often subset-minimal.
%
%
\begin{figure}
  \begin{center}
    \input{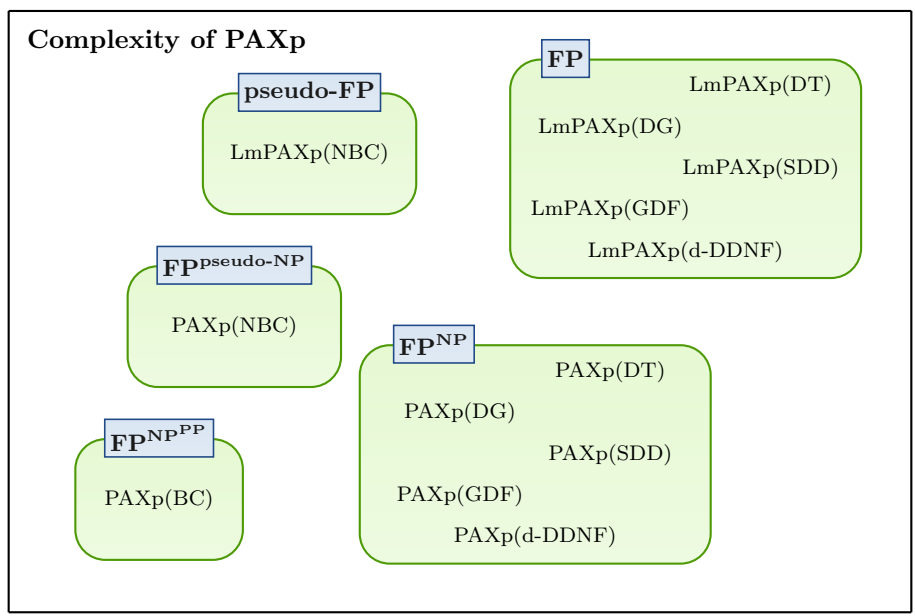}
\begin{tikzpicture}
[
  box/.style = {draw=tgreen3, thick, rectangle, rounded corners=4mm,
                     inner sep=10pt, inner ysep=20pt,
                     top color=tgreen1!22, bottom color=tgreen1!15
                   },
  title/.style = {draw=tblue3, fill=black!5, semithick, top color=white,
                     bottom color = black!5, text=black!90, rectangle,
                     font=\footnotesize, inner sep=2pt, minimum height=1.3em,
                     top color=tblue1!27, bottom color=tblue1!22
                    },
  fp/.style={
    draw=black,
    thick,
    minimum width=4cm,
    minimum height=3cm,
    rounded corners=4mm,
    label={[xshift=1cm, yshift=1cm]},
    font=\tiny\sffamily\bfseries
  },
  bc/.style={
    draw=black,
    thick,
    minimum width=2cm,
    minimum height=1cm,
    rounded corners=4mm,
    label={[xshift=1cm, yshift=1cm]},
    font=\tiny\sffamily\bfseries
  }
]

\def\a{3}
\def\b{2}

\draw[solid,thick] (-2*\a,-2*\b)--+(0:4*\a);
\draw[solid,thick] (-2*\a,2*\b)--+(0:4*\a);
\draw[solid,thick] (-2*\a,-2*\b)--+(90:4*\b);
\draw[solid,thick] (2*\a,-2*\b)--+(90:4*\b);

\path (-3.9, 3.6) node{\small \bf Complexity of PAXp};

\node [box] (fpbox) at (3, 1.9) {%
\begin{minipage}[t!]{0.26\textwidth}
    \vspace{1.5cm}\hspace{3.5cm}
\end{minipage}
};
\node[title, xshift=0.75cm] at (fpbox.north west) {\textbf{FP}};
\path
  (3, 0.8) node{\scriptsize $\tn{LmPAXp}(\tn{d-DDNF})$}
  (2, 1.35) node{\scriptsize $\tn{LmPAXp}(\tn{GDF})$}
  (4, 1.9) node{\scriptsize $\tn{LmPAXp}(\tn{SDD})$}
  (2, 2.45) node{\scriptsize $\tn{LmPAXp}(\tn{DG})$}
  (4, 3) node{\scriptsize $\tn{LmPAXp}(\tn{DT})$}
;

\node [box] (fpnpbox) at (1, -1.9) {%
\begin{minipage}[t!]{0.26\textwidth}
    \vspace{1.5cm}\hspace{3.5cm}
\end{minipage}
};
\node[title, xshift=1cm] at (fpnpbox.north west) {$\textbf{FP}^\textbf{NP}$};
\path
  (1, -3) node{\scriptsize $\tn{PAXp}(\tn{d-DDNF})$}
  (0, -2.45) node{\scriptsize $\tn{PAXp}(\tn{GDF})$}
  (2, -1.9) node{\scriptsize $\tn{PAXp}(\tn{SDD})$}
  (0, -1.35) node{\scriptsize $\tn{PAXp}(\tn{DG})$}
  (2, -0.8) node{\scriptsize $\tn{PAXp}(\tn{DT})$}
;

\node [box] (bcbox) at (-4, -2.5) {%
\begin{minipage}[t!]{0.1\textwidth}
    \vspace{0.2cm}\hspace{1.5cm}
\end{minipage}
};
\node[title] at (bcbox.north) {$\textbf{FP}^{\textbf{NP}^\textbf{PP}}$};
\path
  (-4, -2.5) node{\scriptsize $\tn{PAXp}(\tn{BC})$}
;

\node [box] (pseudonpbox) at (-3, -0.2) {%
\begin{minipage}[t!]{0.14\textwidth}
    \vspace{0.2cm}\hspace{1.5cm}
\end{minipage}
};
\node[title] at (pseudonpbox.north) {$\textbf{FP}^{\textbf{pseudo-NP}}$};
\path
  (-3, -0.2) node{\scriptsize $\tn{PAXp}(\tn{NBC})$}
;

\node [box] (pseudofpbox) at (-2, 2.1) {%
\begin{minipage}[t!]{0.14\textwidth}
    \vspace{0.2cm}\hspace{1.5cm}
\end{minipage}
};
\node[title] at (pseudofpbox.north) {$\textbf{pseudo-FP}$};
\path
  (-2, 2.1) node{\scriptsize $\tn{LmPAXp}(\tn{NBC})$}
;
\end{tikzpicture}
  \end{center}
  \caption{Summary of results} \label{fig:paxp-res}
\end{figure}
The paper's contributions are summarized in \cref{fig:paxp-res}.
\cref{fig:paxp-res} overviews complexity class membership results for
several families of classifiers, most of which are established in this
paper. For several families of classifiers, the paper proves that
computing one locally-minimal PAXp is either in polynomial time or
pseudo-polynomial time. Similarly, computing one subset-minimal (and
also one cardinality-minimal) PAXp is shown to be in NP (or pseudo-NP)
for several families of classifiers.

The paper is organized as follows.
\cref{sec:prelim} introduces the definitions and notation used
throughout.
\cref{sec:paxp} offers an overview of probabilistic abductive
explanations (or relevant sets).
\cref{sec:rsdt} investigates the computation of relevant sets in the
case of decision trees.
\cref{sec:rsnbc,sec:rsxtra} replicate the same exercise, respectively in
the case of naive Bayes classifiers, and also for graph-based
classifiers and classifiers based on propositional languages.
\cref{sec:res} presents experimental results on computing relevant
sets for the classifiers studied in the earlier sections.
The paper concludes in~\cref{sec:conc}.

%

\section{Preliminaries} \label{sec:prelim}

\paragraph{Complexity classes.}
The paper addresses a number of well-known classes of decision and
function problems, that include P, NP, $\tn{FP}^{\tn{NP}}$,
$\tn{NP}^{\tn{PP}}$, among others. The interested reader is referred
to a standard reference on the computational
complexity~\cite{arora-bk09}.

\subsection{Logic Foundations}

\paragraph{Propositional logic.}
The paper assumes the notation and definitions that are standard when
reasoning about the decision problem for propositional logic,
i.e.\ the Boolean Satisfiability (SAT) problem~\cite{sat21}. SAT is
well-known to be an NP-complete~\cite{Cook71} decision problem.
A propositional formula $\varphi$ is defined over a finite set of
Boolean variables 
$X = \{x_1, x_2, \ldots, x_n\}$. 
Formulas are most often represented in \emph{conjunctive normal form} (CNF). 
A CNF formula is a conjunction of clauses, a clause is a
disjunction of literals, and a literal is a variable ($x_i$) or its
negation ($\neg{x_i}$).
A term is a conjunction of literals.
Whenever convenient, a formula is viewed as a set of sets of literals. 
A Boolean interpretation $\mu$ of a formula $\varphi$ is a total mapping
of $X$ to $\{0,1\}$ 
($0$ corresponds to \emph{False} and $1$ corresponds to \emph{True}).
Interpretations can be extended to literals, clauses and formulas with
the usual semantics; hence we can refer to $\mu(l)$, $\mu(\omega)$,
$\mu(\varphi)$, to denote respectively the value of a literal, clause
and formula given an interpretation.
Given a formula $\varphi$,  $\mu$ is a \emph{model} of $\varphi$ 
if it makes $\varphi$ \emph{True}, i.e. $\mu(\phi) = 1$. 
A formula $\varphi$ is \emph{satisfiable} ($\varphi \nentails \perp$)
if it admits a model,
otherwise,  it is \emph{unsatisfiable}  ($\varphi \entails \perp$). 
Given two formulas $\varphi$ and $\psi$, we say that $\varphi$ {\it
  entails} $\psi$ (denoted $\varphi\entails\psi$) if all models of
$\varphi$ are also models of $\psi$. 
$\varphi$ and $\psi$ are equivalent (denoted $\varphi \equiv \psi$) if
$\varphi \entails \psi$ and $\psi \entails \varphi$.

\paragraph{First Order Logic (FOL) and SMT.}
When necessary, the paper will consider the restriction of FOL to
Satisfiability Modulo Theories (SMT). These represent restricted (and
often decidable) fragments of FOL~\cite{BarrettSST09}. All the
definitions above apply to SMT.
A SMT-solver reports whether a formula is satisfiable, and if so, may provide 
a model of this satisfaction. 
(Other possible features include dynamic addition and retraction of
constraints, production of proofs, and optimization.)
%



\subsection{Classification Problems}
This paper considers classification problems, which are defined on a
set of features (or attributes) $\fml{F}=\{1,\ldots,m\}$ and a set of
classes $\fml{K}=\{c_1,c_2,\ldots,c_K\}$.
Each feature $i\in\fml{F}$ takes values from a domain $\mbb{D}_i$.
In general, domains can be categorical or ordinal, with values that
can be boolean or integer. (Although real-valued could be considered
for some of the classifiers studied in the paper, we opt not to
specifically address real-valued features.)
%
%
Feature space is defined as
$\mbb{F}=\mbb{D}_1\times{\mbb{D}_2}\times\ldots\times{\mbb{D}_m}$;
$|\mbb{F}|$ represents the total number of points in $\mbb{F}$. 
For boolean domains, $\mbb{D}_i=\{0,1\}=\mbb{B}$, $i=1,\ldots,m$, and
$\mbb{F}=\mbb{B}^{m}$.
The notation $\mbf{x}=(x_1,\ldots,x_m)$ denotes an arbitrary point in
feature space, where each $x_i$ is a variable taking values from
$\mbb{D}_i$. The set of variables associated with features is
$X=\{x_1,\ldots,x_m\}$.
Moreover, the notation $\mbf{v}=(v_1,\ldots,v_m)$ represents a
specific point in feature space, where each $v_i$ is a constant
representing one concrete value from $\mbb{D}_i$. 
With respect to the set of classes $\fml{K}$, the size of $\fml{K}$
is assumed to be finite; no additional restrictions are imposed on
$\fml{K}$. Nevertheless, with the goal of simplicity, the paper
considers examples where $|\fml{K}|=2$, concretely
$\fml{K}=\{\ominus,\oplus\}$, or alternatively $\fml{K}=\{0,1\}$. 
An ML classifier $\mbb{M}$ is characterized by a (non-constant)
\emph{classification function} $\kappa$ that maps feature space
$\mbb{F}$ into the set of classes $\fml{K}$,
i.e.\ $\kappa:\mbb{F}\to\fml{K}$.
An \emph{instance} (or observation)
denotes a pair $(\mbf{v}, c)$, where $\mbf{v}\in\mbb{F}$ and
$c\in\fml{K}$, with $c=\kappa(\mbf{v})$. 


%
\subsection{Families of Classifiers}

Throughout the paper, a number of families of classifiers will be
studied in detail. These include
decision trees~\cite{breiman-bk84,quinlan-ml86},
naive Bayes classifiers~\cite{friedman-ml97},
graph-based classifiers~\cite{oliver-tr92,oliver-aai92,hiims-kr21} and
classifiers based on propositional
languages~\cite{darwiche-jair02,darwiche-ijcai11}.

\paragraph{Decision trees.}
A decision tree $\fml{T}=(V,E)$ is a directed acyclic graph,
with $V=\{1,\ldots,|V|\}$, having at most one path between every pair
of nodes. $\fml{T}$ has a root node, characterized by having no
incoming edges. All other nodes have one incoming edge. We consider
univariate decision trees where each non-terminal node is associated
with a single feature $x_i$.
Each edge is labeled with a literal, relating a feature (associated
with the edge's starting node) with some values (or range of values)
from the feature's domain. We will consider literals to be of the form
$x_i\in\mbb{E}_i$. $x_i$ is a variable that denotes the value taken
by feature $i$, whereas $\mbb{E}_i\subseteq\mbb{D}_i$ is a subset of
the domain of feature $i\in\fml{F}$.
The type of literals used to label the edges of a DT allows the
representation of the DTs generated by a wide range of decision tree
learners (e.g.~\cite{utgoff-ml97}). 
The set of paths of $\fml{T}$ is denoted by $\fml{R}$.
$\mrm{\Phi}(R_k)$ denotes the set of features associated with path
$R_k\in\fml{R}$, one per node in the tree, with repetitions allowed.
It is assumed that for any $\mbf{v}\in\mbb{F}$ there exists
\emph{exactly} one path in $\fml{T}$ that is consistent with
$\mbf{v}$. By \emph{consistent} we mean that the literals associated
with the path are satisfied (or consistent) with the feature values
in~$\mbf{v}$.
Given $\mbf{v}$, the set of paths $\fml{R}$ is partitioned into
$\fml{P}$ and $\fml{Q}$, such that each of the paths in $\fml{P}$
yields the prediction $c=\kappa(\mbf{v}$, whereas each of the paths in
$\fml{Q}$ yields a prediction in $\fml{K}\setminus\{c\}$.
For the purposes of this paper, the path consistent with $\mbf{v}$ as
$P_t\in\fml{P}$, i.e.\ the target path. 
(A more in-depth analysis of explaining decision trees is available
in~\cite{iims-corr22}.)

\paragraph{Naive Bayes Classifiers (NBCs).}
An NBC~\cite{duda-bk73} is a Bayesian Network
model~\cite{friedman-ml97} characterized by  strong conditional
independence  assumptions among the features.
%
%
%
Given some observation $\mbf{x} \in \mbb{F}$, the predicted class 
is given by:
\begin{equation} \label{eq:nbc1}
  \kappa(\mbf{x}) = \argmax\nolimits_{c\in\fml{K}}\left(\prob(c|\mbf{x})\right)
\end{equation}
Using Bayes's theorem, $\prob(c|\mbf{x})$ can be computed as follows:
$\prob(c|\mbf{x})=\nicefrac{\prob(c,\mbf{x})}{\prob(\mbf{x})}$.
In practice, we compute only the numerator of the fraction, since 
the denominator $\prob(\mbf{x})$ is constant for every $c\in\fml{K}$. 
Moreover, given the conditional mutual independency of the features, 
we have:  
\[\prob(c,\mbf{x}) = \prob(c)\times\prod\nolimits_i\prob(x_i|c)\] 
Furthermore, it is also common in practice to apply logarithmic 
transformations on probabilities of $\prob(c,\mbf{x})$, thus getting:
\[ \log \prob(c,\mbf{x}) = \log{\prob(c)}+\sum\nolimits_i\log{\prob(x_i|c)} \]
Therefore,~\eqref{eq:nbc1} can be rewritten as follows: 
\begin{equation} \label{eq:nbc4}
  \kappa(\mbf{x}) =
  \argmax\nolimits_{c\in\fml{K}}\left(\log{\prob(c)}+\sum\nolimits_i\log{\prob(x_i|c)}\right)
\end{equation}
For simplicity, and following the notation used in earlier
work~\cite{msgcin-nips20}, $\lprob$ denotes a logarithmic probability.
Thus, we get:
\begin{equation} \label{eq:nbc5}
  \kappa(\mbf{x}) =
  \argmax\nolimits_{c\in\fml{K}}\left(\lprob(c)+\sum\nolimits_i\lprob(x_i|c)\right)
\end{equation}
(Note that also for simplicity, it is common in  practice to add 
a sufficiently large  positive constant $T$ to each probability,
which allows using only positive values.)

\paragraph{Graph-based classifiers.} 
\label{def:dg}
The paper considers the class of graph-based classifiers (referred to
as Decision Graphs) studied in earlier work~\cite{hiims-kr21}.
A Decision Graph (DG) is a Directed Acyclic Graph (DAG)
consisting of two types of nodes, non-terminal nodes and terminal nodes,
and has a single root node.
Each non-terminal node is labeled with a feature and has at least one child node,
each terminal node is labeled with a class and has no child node.
Moreover, for an arbitrary non-terminal node $p$ labeled with feature $i$,
we use $\mbb{C}_i \subseteq \mbb{D}_i$ to 
denote the set of values of feature $i$ which are consistent with any path connecting the root to $p$.
Each outgoing edge of node $p$ represents a literal of the form 
$x_i\in{\mbb{E}_i}$ ($\mbb{E}_i \neq \emptyset$) where $\mbb{E}_i\subseteq\mbb{C}_i$.
Furthermore, the following constraints are imposed on DGs:
\begin{enumerate}[nosep]
\item
The literals associated with the outgoing edges of $p$ represent a partition of $\mbb{C}_i$.
\item
Any path connecting the root node to a terminal node contains no inconsistent literals.
\item
No dead-ends.
\end{enumerate}
A DG is \emph{read-once} if each feature tested at most once on any path.
A DG is \emph{ordered}, if features are tested in the same order on all paths.
An Ordered Multi-valued Decision Diagrams (OMDD) is a \emph{read-once} and \emph{ordered} DG
such that every $\mbb{E}_i$ is a singleton, which means multiple-edges between two nodes may exist.
Furthermore, the OMDDs considered in this paper are
\emph{reduced}~\cite{KamM90, srinivasan1990}, i.e.\
\begin{enumerate}[nosep]
\item No node $p$ such that all child nodes of $p$ are isomorphic; and
\item No isomorphic subgraphs.
\end{enumerate}
When the domains of features are all boolean and the set of
classes is binary, then OMDDs corresponds to Ordered Binary Decision
Diagrams (OBDDs)~\cite{bryant-tcomp86}.

\jnoteF{Can we impose instead that any $\mbf{v}\in\mbb{F}$ is
  consistent with exactly a single (complete) path, and that any path
  is consistent with some $\mbf{v}$?
}

\paragraph{Propositional languages \& classifiers.}
For classification problems for which the feature space $\mbb{F}$ is
restricted to $\mbb{B}^m$, then boolean circuits can be used as binary
classifiers.
Each boolean circuit is a sentence of some propositional language.
We briefly review some well-known propositional languages and
queries/transformations that these languages support in polynomial
time.

The language \emph{negation normal form} ($\nnf$) is the set of
all directed acyclic graphs, where each terminal node is labeled
with either $\top$, $\bot$, $x_i$ or $\neg{x_i}$, for
$x_i\in{X}$. Each non-terminal node is labeled with either $\land$
(or \tn{AND}) or $\lor$ (or \tn{OR}).
The language \emph{decomposable} $\nnf$ ($\dnnf$)~\cite{darwiche2001decomposable,darwiche-jair02} is the set of
all NNFs, where for every node labeled with $\land$,
$\alpha=\alpha_1\land\cdots\land\alpha_k$, no variables are
shared between the conjuncts $\alpha_j$.
A \emph{deterministic} DNNF ($\ddnnf$)~\cite{darwiche-jair02,DBLP:conf/ai/MuiseMBH12}
is a $\dnnf$, where for every node labeled with
$\lor$, $\beta=\beta_1\lor\cdots\lor\beta_k$, each pair
$\beta_p,\beta_q$, with $p\not=q$, is inconsistent,
i.e.\ $\beta_p\land\beta_q\entails\bot$.
A \emph{Smooth} $\ddnnf$ ($\sddnnf$)~\cite{darwiche-jancl01} is a $\ddnnf$, where for every
node labeled with
$\lor$, $\beta=\beta_1\lor\cdots\lor\beta_k$, each pair
$\beta_p,\beta_q$ is defined on the same set of variables.
We focus in this paper on $\ddnnf$, but for simplicity of algorithms,
$\sddnnf$ is often considered.
Furthermore, \emph{sentential decision diagrams} (SDDs)~\cite{darwiche-ijcai11,broeck-aaai2015}
represent a well-known subset of the $\ddnnf$. (Furthermore, it
should be noted that OBDD is a proper subset of SDD.)
SDDs are based on a strongly deterministic
decomposition~\cite{darwiche-ijcai11}, which is used to decompose a
Boolean function into the form: $(p_1 \land s_1) \lor \dots \lor (p_n
\land s_n)$, where each $p_i$ is called a \textit{prime} and each
$s_i$ is called a \textit{sub} (both primes and subs are
sub-functions).
Furthermore, the process of decomposition
is governed by a variable tree (\textit{vtree}) which stipulates the variable order
\cite{darwiche-ijcai11}.

The languages $\ddnnf$, $\sddnnf$ and SDD satisfy
the query \emph{polytime model counting} (\tbf{CT}),
and the transformation \emph{polytime conditioning} (\tbf{CD}).
Let $\Delta$ represent a propositional formula and let $\rho$ denote
a consistent term ($\rho\nentails\bot$).
The \emph{conditioning}~\cite{darwiche-jair02} of $\Delta$ on $\rho$,
denoted $\Delta|_{\rho}$ is the formula obtained by replacing each
variable $x_i$ by $\top$ (resp.~$\bot$) if $x_i$ (resp.~$\neg{x_i}$)
is a positive (resp.~negative) literal of $\rho$.
A propositional language
\tbf{L} satisfies \tbf{CT} if there exists a polynomial-time
algorithm that maps every formula $\Delta$ from \tbf{L} into a
non-negative integer denoting the number of models of $\Delta$.
\tbf{L} satisfies \tbf{CD} iff there exists a polynomial-time
algorithm that maps every formula
$\Delta$ from \tbf{L} and every consistent term $\rho$ into a
formula in \tbf{L} that is logically equivalent to $\Delta|_{\rho}$.
There are additional queries and transformations of
interest~\cite{darwiche-jair02}, but these are beyond the goals of
this paper.
It is important to note that OMDD and OBDD
also satisfy \tbf{CT} and \tbf{CD}~\cite{darwiche-jair02,niveau2012representing}.


\subsection{Running Examples}
%

%

\paragraph{Example Decision Tree.}
\cref{fig:runex01:dt} shows the example DT used throughout the paper. This
example DT also illustrates the notation used to represent DTs.
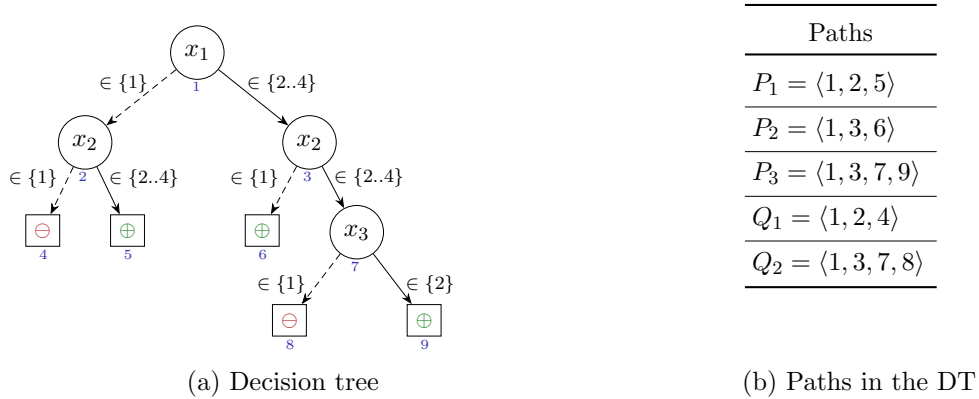
\begin{figure}[t]
  \begin{subfigure}[b]{0.68875\columnwidth}
    \hspace*{1.0cm}
    \scalebox{0.885}{\tikzset{every label/.style={xshift=-0.35ex,
  yshift=-6.225ex,
  text width=1ex,
  align=right, inner sep=1pt, font=\tiny, text=midblue}}
\tikzset{tlabel/.style={xshift=0.25ex, yshift=2ex, text width=1ex,
    align=right, inner sep=1pt, font=\tiny, text=midblue}}
\forestset{
  BDT/.style={
    for tree={
      l=1.35cm,s sep=1.5cm,
      if n children=0{}{circle},
      draw,
      edge={
        my edge
      },
      if n=1{
        edge+={0 my edge},
      }{},
    }
  },
}
\begin{forest}
  BDT
  [$x_{1}$, label={1}
    [$x_{2}$, s sep=0.75cm, label={2},
      edge label={node[near start,left,xshift=-0.75pt] {{\scriptsize$\in\{1\}$}}}
      [{\footnotesize\color{darkred}$\ominus$},
        label={[xshift=0.25ex,yshift=1.875ex]4},
        edge label={node[near start,left,xshift=-0.5pt]
          {{\scriptsize$\in\{1\}$}}}]
      [{\footnotesize\color{darkgreen}$\oplus$},
        label={[xshift=0.25ex,yshift=1.875ex]5},
        edge label={node[near start,right,xshift=-1pt]
          {{\scriptsize$\in\{2..4\}$}}}]
    ]
    [$x_2$, s sep=0.75cm, label={3},
      edge label={node[near start,right,xshift=1pt]
        {{\scriptsize$\in\{2..4\}$}}}
      [{\footnotesize\color{darkgreen}$\oplus$},
        label={[xshift=0.25ex,yshift=1.875ex]6},
        edge label={node[near start,left,xshift=-1pt]
          {{\scriptsize$\in\{1\}$}}}]
      [$x_3$, label={7},
        edge label={node[near start,right,xshift=-1pt]
          {{\scriptsize$\in\{2..4\}$}}}
        [{\footnotesize\color{darkred}$\ominus$},
          label={[xshift=0.25ex,yshift=1.875ex]8},
          edge label={node[pos=0.6,left,xshift=-1pt]
            {{\scriptsize$\in\{1\}$}}}]
        [{\footnotesize\color{darkgreen}$\oplus$},
          label={[xshift=0.25ex,yshift=1.875ex]9},
          edge label={node[pos=0.6,right,xshift=0.25pt]
            {{\scriptsize$\in\{2\}$}}}]
      ]
    ]
  ]
\end{forest}}
    \caption{Decision tree} \label{fig:runex:dt}
  \end{subfigure}
  \begin{subfigure}[b]{0.3\columnwidth}
    \renewcommand{\arraystretch}{1.25}
    \renewcommand{\tabcolsep}{3pt}
        \hspace*{0.5cm}
    \scalebox{0.895}{
      \begin{tabular}{l} \toprule
        \multicolumn{1}{c}{Paths}
        \\ \toprule
        $P_1=\langle1,2,5\rangle$
        \\[1.5pt] \hline
        $P_2=\langle1,3,6\rangle$
        \\[1.5pt] \hline
        $P_3=\langle1,3,7,9\rangle$
        \\[1.5pt] \hline
        $Q_1=\langle1,2,4\rangle$
        \\[1.5pt] \hline
        $Q_2=\langle1,3,7,8\rangle$
        \\ \bottomrule
      \end{tabular}
    }

    \bigskip\bigskip

    \caption{Paths in the DT} \label{fig:runex:paths}
  \end{subfigure}
  \caption{Example DT.} \label{fig:runex01:dt}
\end{figure}
The set of paths $\fml{R}$ is partitioned into two sets $\fml{P}$ and
$\fml{Q}$, such that the paths in $\fml{P}=\{P_1,P_2,P_3\}$
yield a prediction of $\oplus$, and such that the paths in
$\fml{Q}=\{Q_1,Q_2\}$ yield a prediction of $\ominus$. (In general, $\fml{P}$
denotes the paths with prediction $c\in\fml{K}$, and $\fml{Q}$ denotes
the paths with prediction other than $c$, i.e.\ any class in
$\fml{K}\setminus\{c\}$.)

\paragraph{Example Naive Bayes Classifier.}
Consider the NBC depicted graphically in \autoref{fig:ex01:nbc}~\footnote{%
  This example of an NBC is adapted from~\cite{msgcin-nips20}, and it
  was first studied in~\cite[Ch.10]{barber-bk12}.}.
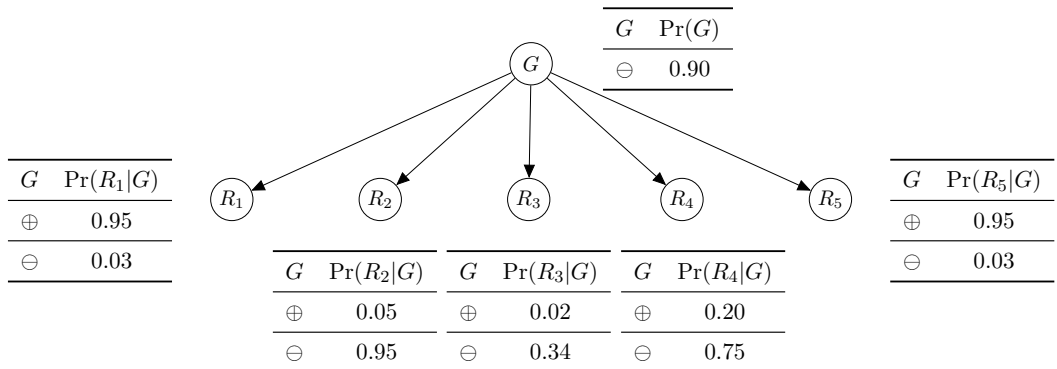
\begin{figure*}[t]
  \begin{center}
    \scalebox{0.8}{\begin{tikzpicture} 
  \node[latent]                     (G)      {$G$};   %
  \node[latent,below left=1.75cm and 2.0cm of G]    (R2)     {$R_2$}; %
   \node[latent,right=1.75cm of R2]   (R3)     {$R_3$}; %
  \node[latent,left=1.75cm of R2]    (R1)     {$R_1$}; %
  \node[latent,below right=1.75cm and 2.0cm of G]   (R4)     {$R_4$}; %
  \node[latent,right=1.75cm of R4]  (R5)     {$R_5$}; %

  \edge[->] {G} {R1,R2,R3,R4,R5} ;


  \node[right=0.7cm of G,yshift=7pt] (CPT0) { 
    \begin{tabular}{cc}\toprule
      $G$ & $\prob(G)$ \\ \midrule
      $\ominus$ & 0.90 \\ \bottomrule
  \end{tabular} } ;

  \node[left=0.5cm of R1,yshift=-10pt] (CPT1) {
    \begin{tabular}{cc}\toprule
      $G$ & $\prob(R_1|G)$ \\ \midrule
      $\oplus$ & 0.95 \\\midrule
      $\ominus$ & 0.03 \\\bottomrule\end{tabular} } ;

  \node[below=0.35cm of R2,xshift=-12pt] (CPT2) {
    \begin{tabular}{cc}\toprule
      $G$ & $\prob(R_2|G)$ \\ \midrule
      $\oplus$ & 0.05 \\\midrule
      $\ominus$ & 0.95 \\\bottomrule\end{tabular} } ; 

  \node[below=0.35cm of R3,xshift=0pt] (CPT3) {
    \begin{tabular}{cc}\toprule
      $G$ & $\prob(R_3|G)$ \\ \midrule
      $\oplus$ & 0.02 \\\midrule
      $\ominus$ & 0.34 \\\bottomrule\end{tabular} } ;

  \node[below=0.35cm of R4,xshift=10pt] (CPT4) {
    \begin{tabular}{cc}\toprule
      $G$ & $\prob(R_4|G)$ \\ \midrule
      $\oplus$ & 0.20 \\\midrule
      $\ominus$ & 0.75 \\\bottomrule\end{tabular} } ;

  \node[right=0.5cm of R5,yshift=-10pt] (CPT5) {
    \begin{tabular}{cc}\toprule
      $G$ & $\prob(R_5|G)$ \\ \midrule
      $\oplus$ & 0.95 \\\midrule
      $\ominus$ & 0.03 \\\bottomrule\end{tabular} } ;

\end{tikzpicture}}
  \end{center}
  \caption{Example NBC.} \label{fig:ex01:nbc} 
\end{figure*}
The features are the boolean random variables $R_1$, $R_2$, $R_3$,
$R_4$ and $R_5$. Each $R_i$ can take values $\lvt$ or $\lvf$ denoting,
respectively, whether a listener likes or not that radio station.
The boolean random variable $G$ corresponds to an \tsf{age} class: 
the target class $\oplus$ denotes the
prediction that the listener is \tsf{young} and
$\ominus$ denotes the prediction that the listener is \tsf{old}. Thus, $\fml{K}=\{\ominus,\oplus\}$. 
Let us consider $\mbf{v}=(R_1,R_2,R_3,R_4,R_5)=(\lvt,\lvf,\lvf,\lvf,\lvt)$. 
We associate  $r_i$ to each literal ($R_i=\lvt$) and  $\neg{r_i}$ 
to literals ($R_i=\lvf$). 
Using~\eqref{eq:nbc5}, we get the values shown in~\autoref{fig:ex02:nbc}. 
(Note that to use positive values, we added $T=+4$ to each
$\lprob(\cdot)$.).
As can be seen by comparing the values of $\lprob(\oplus|\mbf{v})$
and $\lprob(\ominus|\mbf{v})$, the classifier will predict $\oplus$.
\begin{figure*}[t]
  \begin{subfigure}[t]{\linewidth}
    \centering\scalebox{0.85}{
\renewcommand{\tabcolsep}{0.35em}
\renewcommand{\arraystretch}{1.175}
\begin{tabular}{cccccccc} \toprule
  & $\prob(\oplus)$ & $\prob(r_1|\oplus)$ & $\prob(\neg{r_2}|\oplus)$ &
  $\prob(\neg{r_3}|\oplus)$ & $\prob(\neg{r_4}|\oplus)$ & $\prob(r_5|\oplus)$ &
  $\lprob(\oplus|\mbf{v})$
  \\ \cmidrule(lr){2-7} \cmidrule(lr){8-8}
  %
  $\prob(\cdot)$ & 0.10 & 0.95 & 0.95 & 0.98 & 0.80 & 0.95 &  
  \\
  %
  $\lprob(\cdot)$ & 1.70 & 3.95 & 3.95 & 3.98 & 3.78 & 3.95 & 21.31
  \\ \bottomrule
\end{tabular}
}
    \caption{Computing $\lprob(\oplus|\mbf{v})$}
  \end{subfigure}

  \bigskip
  \begin{subfigure}[t]{\linewidth}
    \centering\scalebox{0.85}{
\renewcommand{\tabcolsep}{0.35em}
\renewcommand{\arraystretch}{1.175}
\begin{tabular}{cccccccc} \toprule
  & $\prob({\ominus})$ & $\prob(r_1|{\ominus})$ &
  $\prob(\neg{r_2}|{\ominus})$ & $\prob(\neg{r_3}|{\ominus})$ &
  $\prob(\neg{r_4}|{\ominus})$ & $\prob(r_5|{\ominus})$ &
  $\lprob(\ominus|\mbf{v})$
  \\ \cmidrule(lr){2-7} \cmidrule(lr){8-8}
  %
  $\prob(\cdot)$ & 0.90 & 0.03 & 0.05 & 0.66 & 0.25 &  0.03 &
  \\
  %
  $\lprob(\cdot)$ & 3.89 & 0.49 & 1.00 & 3.58 & 2.61 &  0.49 & 12.06
  \\ \bottomrule
\end{tabular}
}
    \caption{Computing $\lprob(\ominus|\mbf{v})$}
  \end{subfigure}
  \centering
  \caption{Deciding prediction for
    $\mbf{v}=(\lvt,\lvf,\lvf,\lvf,\lvt)$.
    (Note that to use positive values, $T=+4$ was added to each
    $\lprob(\cdot)$.) } \label{fig:ex02:nbc}
\end{figure*}

\paragraph{Example graph-based classifiers.}
\cref{fig:dd} shows two examples of graph-based classifiers.
\cref{fig:bdd} shows an OBDD, and \cref{fig:mdd} an OMDD.
\cref{fig:mdd} represents a function defined on $\fml{F} = \{1, 2, 3\}$
and $\fml{K} = \{\ominus, \oplus, \otimes\}$, with
the domains of features being $\mbb{D}_1 = \{0, 1\}$,
$\mbb{D}_2 = \mbb{D}_3 = \{0, 1, 2\}$.
If we consider the instance $\mbf{v} = \{1, 1, 2\}$, the classifier predicts the class $\otimes$.

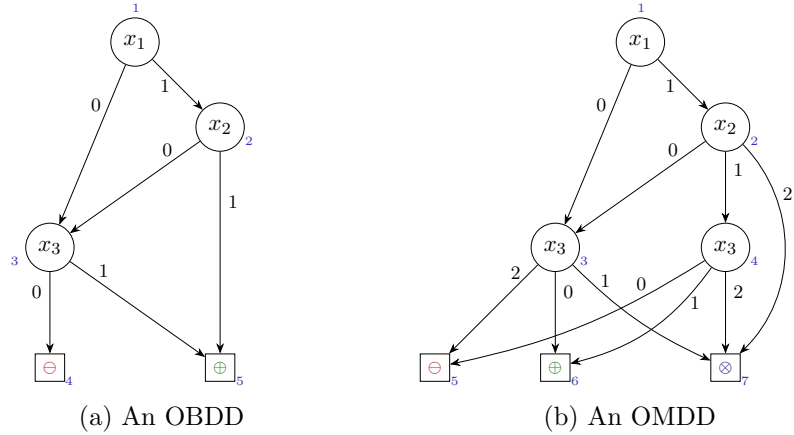
\begin{figure*}
  \centering
  \begin{subfigure}[b]{0.35\textwidth}
    \scalebox{0.8}{\tikzset{>=Stealth}
\tikzset{tlabel/.style={xshift=-0.175ex, yshift=0.25ex, text width=1.0ex,
    align=right, inner sep=1pt, font=\tiny, text=midblue}}
\tikzset{blabel/.style={xshift=-0.1575ex, yshift=-0.1375ex, text width=1.25ex,
    align=right, inner sep=1pt, font=\tiny, text=midblue}}
\tikzset{rlabel/.style={xshift=0.25ex, yshift=0ex, text width=1ex,
    align=right, inner sep=1pt, font=\tiny, text=midblue,align=left}}
\tikzset{llabel/.style={xshift=-3.25ex, yshift=2.775ex, text width=1.25ex,
    align=right, inner sep=1pt, font=\tiny, text=midblue,align=right}}
\tikzset{rblabel/.style={xshift=-0.175ex, yshift=-1.35ex, text width=1ex,
    align=right, inner sep=1pt, font=\tiny, text=midblue,align=left}}
\begin{tikzpicture}[->,%
    EV/.style = {font=\footnotesize},
    node distance={2.0cm}, thin, main/.style = {draw, circle}] 
  \node[main] (1) {$x_1$};
  \node[main] (2) [below left of=1, draw=none]  {};
  \node[main] (3) [below right of=1] {$x_2$};
  \node[main] (4) [below of=2]       {$x_3$};
  \node[main] (5) [below of=3, draw=none]       {};
  \node[main] (7) [shape=rectangle,below of=4]       {\color{darkred}{\scriptsize$\ominus$}};
  \node[main] (8) [shape=rectangle,below of=5]       {\color{darkgreen}{\scriptsize$\oplus$}};
  \node[above = 0 of 1,style=tlabel]  (l1) {1};
  \node[right = 0 of 3,style=rblabel] (l3) {2};
  \node[left = 0 of 4,style=rblabel] (l4) {3};
  \node[right = 0 of 7,style=rblabel] (l7) {4};
  \node[right = 0 of 8,style=rblabel] (l8) {5};
  \draw[] (1) -- node [EV, near start, left=0pt]  {0} (4);
  \draw[] (1) edge[] node [EV, near start, below=0pt]  {1} (3);
  \draw[] (3) edge[] node [EV, near start, above=0pt] {0} (4);
  \draw[] (3) edge[] node [EV, near start, right=0pt] {1} (8);
  \draw[] (4) -- node [EV, near start, left=0pt] {0} (7);
  \draw[] (4) edge [] node [EV, near start, above=0pt] {1} (8);
\end{tikzpicture} }
    \caption{An OBDD}
    \label{fig:bdd}
  \end{subfigure}
  \begin{subfigure}[b]{0.45\textwidth}
    \scalebox{0.8}{\tikzset{>=Stealth}
\tikzset{tlabel/.style={xshift=-0.175ex, yshift=0.25ex, text width=1.0ex,
    align=right, inner sep=1pt, font=\tiny, text=midblue}}
\tikzset{blabel/.style={xshift=-0.1575ex, yshift=-0.1375ex, text width=1.25ex,
    align=right, inner sep=1pt, font=\tiny, text=midblue}}
\tikzset{rlabel/.style={xshift=0.25ex, yshift=0ex, text width=1ex,
    align=right, inner sep=1pt, font=\tiny, text=midblue,align=left}}
\tikzset{llabel/.style={xshift=-3.25ex, yshift=2.775ex, text width=1.25ex,
    align=right, inner sep=1pt, font=\tiny, text=midblue,align=right}}
\tikzset{rblabel/.style={xshift=-0.175ex, yshift=-1.35ex, text width=1ex,
    align=right, inner sep=1pt, font=\tiny, text=midblue,align=left}}
\begin{tikzpicture}[->,%
    EV/.style = {font=\footnotesize},
    node distance={2.0cm}, thin, main/.style = {draw, circle}]
  \node[main] (1) {$x_1$};
  \node[main] (2) [below left of=1, draw=none]  {};
  \node[main] (3) [below right of=1] {$x_2$};
  \node[main] (4) [below of=2]       {$x_3$};
  \node[main] (5) [below of=3]       {$x_3$};
  \node[main] (7) [shape=rectangle,below of=4]       {\color{darkgreen}{\scriptsize$\oplus$}};
  \node[main] (6) [shape=rectangle,left of=7]        	   {\color{darkred}{\scriptsize$\ominus$}};
  \node[main] (8) [shape=rectangle,below of=5]       {\color{darkblue}\scriptsize$\otimes$};
  \node[above = 0 of 1,style=tlabel]  (l1) {1};
  \node[right = 0 of 3,style=rblabel] (l3) {2};
  \node[right = 0 of 4,style=rblabel] (l4) {3};
  \node[right = 0 of 5,style=rblabel] (l5) {4};
  \node[right = 0 of 6,style=rblabel] (l6) {5};
  \node[right = 0 of 7,style=rblabel] (l7) {6};
  \node[right = 0 of 8,style=rblabel] (l8) {7};
  \draw[] (1) -- node [EV, near start, left=0pt]  {0} (4);
  \draw[] (1) edge[] node [EV, near start, below=0pt]  {1} (3);
  \draw[] (3) edge[] node [EV, near start, above=0pt] {0} (4);
  \draw[] (3) edge[] node [EV, near start, right=0pt] {1} (5);
  \draw[] (3) edge[bend left=45] node [EV, near start, right=0pt] {2} (8);
  \draw[] (4) edge [] node [EV, near start, above=0pt]  {2} (6);
  \draw[] (4) -- node [EV, near start, right=0pt] {0} (7);
  \draw[] (4) edge [bend right=10] node [EV, near start, above=0pt] {1} (8);
  \draw[] (5) edge [bend left=10,] node [EV, near start, above=0pt]   {0} (6);
  \draw[] (5) edge [bend left=20,] node [EV, near start, right=0pt]   {1} (7);
  \draw[] (5) -- node [EV, near start, right=0pt] {2} (8);
\end{tikzpicture} }
    \caption{An OMDD}
    \label{fig:mdd}
  \end{subfigure}
  \caption{Example DD.}
  \label{fig:dd}
\end{figure*}


%
\subsection{Formal Explainability} \label{ssec:fxai}
%
In contrast with well-known model-agnostic approaches to 
XAI~\cite{guestrin-kdd16,lundberg-nips17,guestrin-aaai18,pedreschi-acmcs19},
formal explanations are model-precise, i.e.\ their definition reflects
the model's computed function.

\paragraph{Abductive explanations.}
Prime implicant (PI) explanations~\cite{darwiche-ijcai18} denote a
minimal set of literals (relating a feature value $x_i$ and a constant
$v_i\in\mbb{D}_i$) 
that are sufficient for the prediction. PI-explanations are related
with abduction, and so are also referred to as abductive explanations
(AXp's)~\cite{inms-aaai19}\footnote{%
  PI-explanations were first proposed in the context of boolean
  classifiers based on restricted bayesian
  networks~\cite{darwiche-ijcai18}. Independent work~\cite{inms-aaai19}
  studied PI-explanations in the case of more general classification
  functions, i.e.\ not necessarily boolean, and related instead
  explanations with abduction. This paper follows the formalizations
  used in more recent
  work~\cite{msgcin-nips20,ims-ijcai21,ims-sat21,msgcin-icml21,hiims-kr21,cms-cp21,hiicams-aaai22,iisms-aaai22,msi-aaai22}.
}.
Formally, given $\mbf{v}=(v_1,\ldots,v_m)\in\mbb{F}$ with
$\kappa(\mbf{v})=c$,
a set of features $\fml{X}\subseteq\fml{F}$ is a \emph{weak abductive
  explanation}~\cite{cms-cp21} (or weak AXp) if the following
predicate holds true%
\footnote{%
  Each predicate associated with a given concept will be noted in
  sans-serif letterform. When referring to the same concept in the
  text, the same acronym will be used, but in standard letterform. For
  example, the predicate name $\axp$ will be used in logic statements,
  and the acronym AXp will be used throughout the text.}:
\begin{equation} \label{eq:waxp}
  \begin{array}{lcr}
    \waxp(\fml{X};\mbb{F},\kappa,\mbf{v},c) &
    \:{:=}\quad &
    \forall(\mbf{x}\in\mbb{F}).
    \left[
      \bigwedge\nolimits_{i\in{\fml{X}}}(x_i=v_i)
      \right]
    \limply(\kappa(\mbf{x})=c)
  \end{array}
\end{equation}
Moreover, a set of features $\fml{X}\subseteq\fml{F}$ is an
\emph{abductive explanation} (or (plain) AXp) if the following
predicate holds true: 
\begin{align} \label{eq:axp}
  \axp(\fml{X};\mbb{F},\kappa,\mbf{v},c)\quad{:=}\quad&
  \waxp(\fml{X};\mbb{F},\kappa,\mbf{v},c) ~\land \nonumber \\
  &\forall(\fml{X}'\subsetneq\fml{X}).
  \neg\waxp(\fml{X}';\mbb{F},\kappa,\mbf{v},c)
\end{align}
Clearly, an AXp is any weak AXp that is subset-minimal (or
irrreducible).
It is straightforward to observe that the definition of
predicate $\waxp$ is monotone, and so an AXp can instead be defined as
follows:
\begin{align} \label{eq:axp2}
  \axp(\fml{X};\mbb{F},\kappa,\mbf{v},c) \quad{:=}\quad&
  \waxp(\fml{X};\mbb{F},\kappa,\mbf{v},c) ~\land \nonumber \\
    &\forall(j\in\fml{X}).
    \neg\waxp(\fml{X}\setminus\{j\};\mbb{F},\kappa,\mbf{v},c)
\end{align}
This alternative equivalent definition of abductive explanation is at
the core of most algorithms for computing one AXp.
(Throughout the paper, we will drop the parameterization associated
with each predicate, and so we will write $\axp(\fml{X})$ instead of
$\axp(\fml{X};\mbb{F},\kappa,\mbf{v},c)$, when the parameters are
clear from the context.)

\begin{figure}[t]
	\begin{center}
		\includegraphics[scale=0.45]{./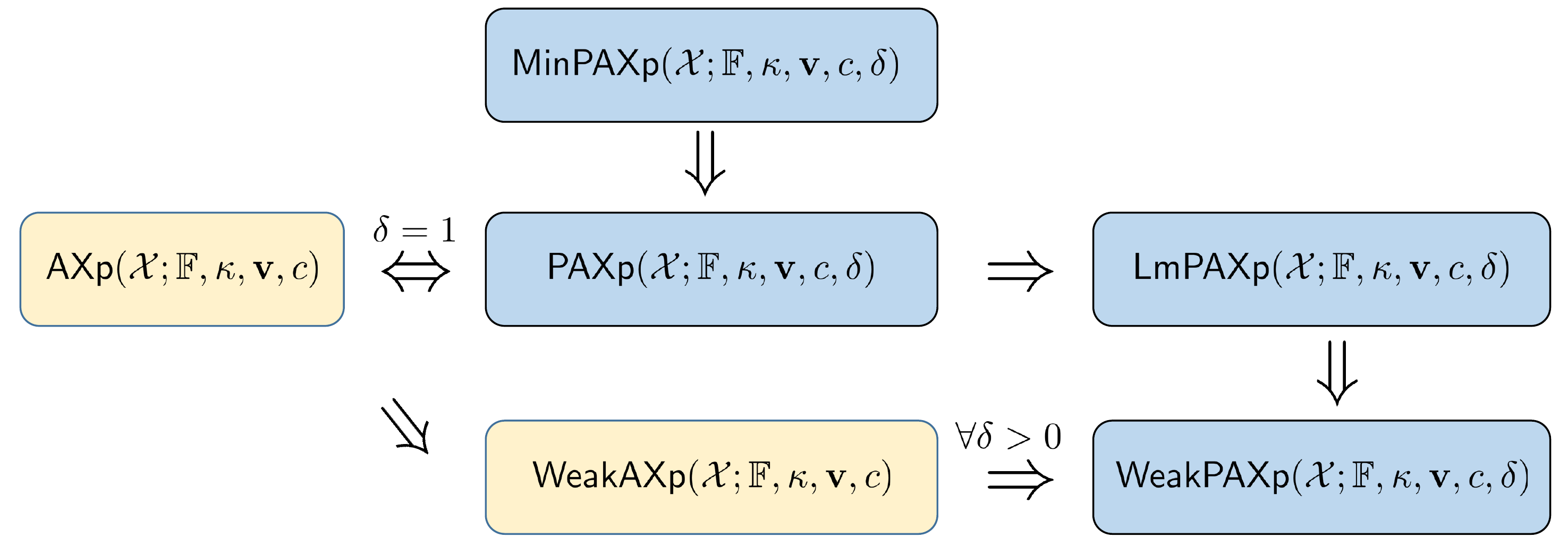}
	\end{center}
	\caption{A schematic representation of relationships between different types of explanations. $E_1(\fml{X};\ldots)\Rightarrow E_2(\fml{X};\ldots)$ means that if $\fml{X}$ is an $E_1$ explanation then $\fml{X}$ must be an $E_2$ explanation.} \label{fig:diag-exps}
\end{figure}
\begin{example}
  The computation of (weak) AXp's is illustrated with the DT
  from~\cref{fig:runex:dt}. The instance considered throughout is
  $\mbf{v}=(v_1,v_2,v_3)=(4,4,2)$, with $c=\kappa(\mbf{v})=\oplus$.
  The point $\mbf{v}$ is consistent with $P_3$, and
  $\mrm{\Phi}(P_3)=\{1,2,3\}$.
  \cref{tab:cprob} (columns 1 to 4) analyzes three sets of features
  $\{1,2,3\}$, $\{1,3\}$ and $\{3\}$ in terms of being a weak AXp or
  an AXp. The decision on whether each set is a weak AXp or an 
  AXp can be obtained by analyzing all the 32 points in feature
  space, or by using an off-the-shelf algorithm. (The analysis of
  all points in feature space is ommited for brevity.)
\end{example}

\begin{table*}[t]
  \centering
  \renewcommand{\arraystretch}{1.05}
  \renewcommand{\tabcolsep}{0.325em}
  \scalebox{0.7}{
    \begin{tabular}{ccccccccccccc} \toprule
      $\fml{S}$ & $\fml{U}$ &
      $\waxp$? & $\axp$? &
      $\prob_{\mbf{x}}(\kappa(\mbf{x})=c|(\mbf{x}_{\fml{S}}=\mbf{v}_{\fml{S}}))$ &
      $\wpaxp$? & $\paxp$? &
      $\#(\fml{S})$ & $\#(P_1)$ & $\#(P_2)$ & $\#(P_3)$ & $\#(Q_1)$ & $\#(Q_2)$
      \\ \toprule
      $\{1,2,3\}$ & $\emptyset$ &
      Yes & No &
      $1\ge\delta$ &
      Yes & No &
      1 & 0 & 0 & 1 & 0 & 0
      \\ \cmidrule(lr){1-2} \cmidrule(lr){3-4} \cmidrule(lr){5-7} \cmidrule(lr){8-13}
      $\{1,3\}$ & $\{2\}$ &
      Yes & Yes &
      $1\ge\delta$ &
      Yes & No &
      4 & 0 & 1 & 3 & 0 & 0
      \\ \cmidrule(lr){1-2} \cmidrule(lr){3-4} \cmidrule(lr){5-7} \cmidrule(lr){8-13}
      $\{3\}$ & $\{1,2\}$ &
      No & -- &
      $\sfrac{15}{16}=0.9375\ge\delta$ &
      Yes & Yes &
      16 & 3 & 3 & 9 & 1 & 0
      \\ 
      \bottomrule
    \end{tabular}
  }
  \caption{Examples of sets of fixed features given $\mbf{v}=(4,4,2)$
    and $\delta=0.93$}
  \label{tab:cprob}
\end{table*}

It is apparent that~\eqref{eq:waxp},~\eqref{eq:axp},
and~\eqref{eq:axp2} can be viewed as representing a (logic)
\emph{rule} of the form:
\begin{equation} \label{eq:axp-rule}
  \tbf{IF~~}\left[\land_{i\in\fml{X}}(x_i=v_i)\right]\tbf{~~THEN~~}
  \left[\kappa(\mbf{x})=c\right]
\end{equation}
Unless otherwise noted, this interpretation of explanations will be
assumed throughout the paper.

Abductive explanations can be viewed as answering a `Why?' question,
i.e.\ why is some prediction made given some point in feature space.
A different view of explanations is a contrastive
explanation~\cite{miller-aij19}, which answers a `Why Not?' question,
i.e.\ which features can be changed to change the prediction.
The formalization of contrastive explanations revealed a minimal
hitting set duality relationship between abductive and contrastive
explanations~\cite{inams-aiia20}.
The paper does not detail further contrastive explanations, as the
focus is solely on abductive explanations.

Figure~\ref{fig:diag-exps} shows relationships between different
classes of explanations that we investigate in the paper. 
$\axp(\fml{X};\mbb{F},\kappa,\mbf{v},c)$ and $\waxp(\fml{X};\mbb{F},\kappa,\mbf{v},c)$ are deterministic classes of explanations. 
These classes are shown in yellow boxes. 
Other classes of explanations, shown in blue boxes, represent
probabilistic counterparts of AXp's. We study them in the following
sections.

\paragraph{Progress in formal explainability.}
The introduction of abductive
explanations~\cite{darwiche-ijcai18,inms-aaai19} also revealed
practical limitations in the case of bayesian network
classifiers~\cite{darwiche-ijcai18,darwiche-aaai19} and neural
networks~\cite{inms-aaai19}. 
However, since then there has been a stream of results, that
demonstrate the practical applicability of formal explainability.
These results can be broadly organized as follows (a more detailed
overview is available in~\cite{msi-aaai22}):
\begin{itemize}
\item Tractable explanations.\\
  Recent work showed that computing one
  explanation is tractable for
  naive Bayes classifiers~\cite{msgcin-nips20},
  decision trees~\cite{iims-corr20,hiims-kr21,iims-corr22},
  graph-based classifiers~\cite{hiims-kr21},
  monotonic classifiers~\cite{msgcin-icml21,cms-cp21}, and
  classifiers represented with well-known classes of propositional
  languages~\cite{hiicams-aaai22}.
  Additional tractability results were obtained in~\cite{cms-cp21}.
\item Efficient explanations.\\
  For some other families of classifiers, recent work showed that
  computing one explanation is computationally hard, but it is
  nevertheless efficient in practice. This is the case with
  decision lists and sets~\cite{ims-sat21},
  random forests~\cite{ims-ijcai21}, and
  tree ensembles in
  general~\cite{inms-aaai19,ignatiev-ijcai20,iisms-aaai22}.
\item Explainability queries.\\
  There has been interest in understanding the complexity of answering
  different queries related with reasoning about
  explainability~\cite{marquis-kr20,hiims-corr21,hiims-kr21,marquis-kr21}.
  For example, the feature membership problem is the decision problem
  of deciding whether some (possibly sensitive) feature occurs in some
  explanation. Although computationally hard in
  general~\cite{hiims-kr21}, it has been shown to be solved
  efficiently in theory and in practice for specific families of
  classifiers~\cite{hiims-kr21,hms-corr22}.
  Queries related with enumeration of explanations have been
  extensively
  studied~\cite{msgcin-nips20,inams-aiia20,ims-sat21,msgcin-icml21,hiims-kr21,hiicams-aaai22,iisms-aaai22}.
\item Properties of explanations.\\
  A number of works studied the connections between explanations and
  robustness~\cite{inms-nips19}, and connections between different
  types of explanations~\cite{inams-aiia20}.
\end{itemize}

Despite the observed progress, formal explainability still faces
several important challenges.
First,  for some widely used families of classifiers, e.g.\ neural
networks, formal explainability does not scale in
practice~\cite{inms-aaai19}.
Second, input distributions are not taken into account, since these
are not readily available. There is however recent work on accounting
for input constraints~\cite{cms-cp21,rubin-aaai22,yisnms-corr22}.
Third, the size of explanations may exceed the cognitive limits of
human decision makers~\cite{miller-pr56}, and computing smallest
explanations does not offer a computationally realistic
alternative~\cite{inms-aaai19}. Recent work studied $\delta$-relevant
sets~\cite{kutyniok-jair21,waldchen-phd22}, and these are also the
focus of this paper.

Finally, we note that there have been different approaches to formal
explainability based on the study of the formal logic 
or the axiomatics of explainers~\cite{hazan-aies19,amgoud-ecsqaru21,lorini-clar21}.
This paper studies exclusively those approaches for which there is
practical supporting evidence of observed progress, as attested
above.

\jnoteF{Cite CoRR report on input distributions.}

\subsection{$\delta$-Relevant Sets}
%
$\delta$-relevant sets were proposed in more recent
work~\cite{kutyniok-jair21,waldchen-phd22} as a generalized
formalization of PI-explanations (or AXp's). $\delta$-relevant sets
can be viewed as \emph{probabilistic} PIs~\cite{waldchen-phd22},
with $\axp$'s representing a special case of $\delta$-relevant 
sets where $\delta = 1$,
i.e.\ probabilistic PIs that are actual PIs. We briefly overview the
definitions related with relevant sets. 
%
%
The assumptions regarding the probabilities of logical propositions
are those made in earlier work~\cite{kutyniok-jair21,waldchen-phd22}.
Let $\prob_{\mbf{x}}(A(\mbf{x}))$ denote the probability of some
proposition $A$ defined on the vector of variables
$\mbf{x}=(x_1,\ldots,x_m)$, i.e.
\begin{equation} \label{eq:pdefs}
  \begin{array}{rcl}
    \prob_{\mbf{x}}(A(\mbf{x})) & = &
    \frac{|\{\mbf{x}\in\mbb{F}:A(\mbf{x})=1\}|}{|\{\mbf{x}\in\mbb{F}\}|}
    \\[8.0pt]
    \prob_{\mbf{x}}(A(\mbf{x})\,|\,B(\mbf{x})) & = &
    \frac{|\{\mbf{x}\in\mbb{F}:A(\mbf{x})=1\land{B(\mbf{x})=1}\}|}{|\{\mbf{x}\in\mbb{F}:B(\mbf{x})=1\}|}
  \end{array}
\end{equation}
(Similar to earlier work, it is assumed that the features are
independent and uniformly distributed~\cite{kutyniok-jair21}.
Moreover, the definitions above can be adapted in case some of the
features are real-valued. As noted earlier, 
the present paper studies only non-continuous features.) 

\begin{definition}[$\delta$-relevant set~\cite{kutyniok-jair21}]\label{def:drs}
  Consider $\kappa:\mbb{B}^{m}\to\fml{K}=\mbb{B}$, $\mbf{v}\in\mbb{B}^m$,
  $\kappa(\mbf{v})=c\in\mbb{B}$, and
  $\delta\in[0,1]$. $\fml{S}\subseteq\fml{F}$ is a $\delta$-relevant
  set for $\kappa$ and $\mbf{v}$ if,
  \begin{equation} \label{eq:drs}
    \prob_{\mbf{x}}(\kappa(\mbf{x})=c\,|\,\mbf{x}_{\fml{S}}=\mbf{v}_{\fml{S}})\ge\delta
  \end{equation}
  (Where the restriction of $\mbf{x}$ to the variables with indices in
  $\fml{S}$ is represented by
  $\mbf{x}_{\fml{S}}=(x_i)_{i\in\fml{S}}$. Concretely,
  the notation $\mbf{x}_{\fml{S}}=\mbf{v}_{\fml{S}}$ represents the
  constraint $\land_{i\in\fml{S}}x_i=v_i$.)
\end{definition}
(Moreover, observe that
$\prob_{\mbf{x}}(\kappa(\mbf{x})=c\,|\,\mbf{x}_{\fml{S}}=\mbf{v}_{\fml{S}})$
is often referred to as the \emph{precision} of
$\fml{S}$~\cite{guestrin-aaai18,nsmims-sat19}.)
Thus, a $\delta$-relevant set represents a set of features which, if
fixed to some pre-defined value (taken from a reference vector
$\mbf{v}$), ensures that the probability of the prediction being the
same as the one for $\mbf{v}$ is no less than $\delta$. 

\begin{definition}[Min-$\delta$-relevant set] \label{def:mdrs}
  Given $\kappa$, $\mbf{v}\in\mbb{B}^{m}$, and $\delta\in[0,1]$, find
  the smallest $k$, such that there exists $\fml{S}\subseteq\fml{F}$, with
  $|\fml{S}|={k}$, and $\fml{S}$ is a $\delta$-relevant set for
  $\kappa$ and $\mbf{v}$.
\end{definition}
With the goal of proving the computational complexity of finding a
minimum-size set of features that is a $\delta$-relevant set, earlier
work~\cite{kutyniok-jair21} restricted the definition to the case
where $\kappa$ is represented as a boolean circuit.
(Boolean circuits were restricted to propositional formulas defined 
using the operators $\lor$, $\land$ and $\neg$, and using a set of
variables representing the inputs; this explains the choice of
\emph{inputs} over \emph{sets} in earlier
work~\cite{kutyniok-jair21}.)
The main complexity result from earlier work is that the computation
of $\delta$-relevant sets is hard for
$\tn{NP}^{\tn{PP}}$~\cite{kutyniok-jair21}. Hence, as noted in earlier
work~\cite{kutyniok-jair21,waldchen-phd22}, it is unlikely that exact
computation of $\delta$-relevant sets will be practically feasible.

%


\section{Relevant Sets -- Probabilistic Abductive Explanations}
\label{sec:paxp}

In contrast with Min-$\delta$-relevant sets, whose focus are smallest-size
explanations, this section investigates alternative definitions of
relevant sets (which we will also and indistinguishably refer to as
\emph{probabilistic abductive explanations}).
%

%
\subsection{Definitions of Probabilistic AXp's}
%
Conceptually, \cref{def:drs} does not need to impose a restriction on
the classifier considered (although this is done in earlier
work~\cite{kutyniok-jair21}), i.e.\ the logical representation of
$\kappa$ need not be a boolean circuit. 
As a result, \cref{def:drs} can also be considered in the case of
multi-class classifiers defined on categorical or ordinal
(non-continuous) features. 

Given the above, a \emph{weak probabilistic} AXp (or weak PAXp) is a
pick of fixed features for which the conditional probability of
predicting the 
correct class $c$ exceeds $\delta$, given $c=\kappa(\mbf{v})$.
Thus, $\fml{X}\subseteq\fml{F}$ is a weak PAXp if the following
predicate holds true,
%
\begin{align} \label{eq:wpaxp}
  \wpaxp&(\fml{X};\mbb{F},\kappa,\mbf{v},c,\delta)  
  \nonumber \\
  :=\,\: & \prob_{\mbf{x}}(\kappa(\mbf{x})=c\,|\,\mbf{x}_{\fml{X}}=\mbf{v}_{\fml{X}})
  \ge \delta
  \\[1.0pt]
  :=\,\: &\frac{%
    |\{\mbf{x}\in\mbb{F}:\kappa(\mbf{x})=c\land(\mbf{x}_{\fml{X}}=\mbf{v}_{\fml{X}})\}|
  }{%
    |\{\mbf{x}\in\mbb{F}:(\mbf{x}_{\fml{X}}=\mbf{v}_{\fml{X}})\}|
  }
  \ge\delta \nonumber
\end{align}
which means that the fraction of the number of points predicting the
target class and consistent with the fixed features (represented by
$\fml{X}$), given the total number of points in feature space
consistent with the fixed features, must exceed $\delta$.
(Observe that the difference to~\eqref{def:drs} is solely that features
and classes are no longer required to be boolean. Hence, weak PAXp's
can be viewed as generalized $\delta$-relevant sets.)
Moreover, a set $\fml{X}\subseteq\fml{F}$ is a \emph{probabilistic}
AXp (or (plain) PAXp) if the following predicate holds true,
\begin{align} \label{eq:paxp}
  \paxp&(\fml{X};\mbb{F},\kappa,\mbf{v},c,\delta) \::= \nonumber \\
  &\wpaxp(\fml{X};\mbb{F},\kappa,\mbf{v},c,\delta) \:\:\land \\
  &\forall(\fml{X}'\subsetneq\fml{X}). %
  \neg\wpaxp(\fml{X}';\mbb{F},\kappa,\mbf{v},c,\delta) \nonumber
\end{align}
Thus, $\fml{X}\subseteq\fml{F}$ is a PAXp if it is a weak PAXp that is
also subset-minimal, 

As can be observed, the definition of weak PAXp (see~\eqref{eq:wpaxp})
does not guarantee monotonicity. In turn, this makes the computation
of (subset-minimal) PAXp's harder.
With the purpose of identifiying classes of weak PAXp's that are
easier to compute, it will be convenient to study
\emph{locally-minimal} PAXp's. A set of features
$\fml{X}\subseteq\fml{F}$ is a locally-minimal PAXp if,
\begin{align} \label{eq:lmpaxp}
  \lmpaxp&(\fml{X};\mbb{F},\kappa,\mbf{v},c,\delta) \::= \nonumber \\
  &\wpaxp(\fml{X};\mbb{F},\kappa,\mbf{v},c,\delta) \:\:\land \\
  &\forall(j\in\fml{X}). %
  \neg\wpaxp(\fml{X}\setminus\{j\};\mbb{F},\kappa,\mbf{v},c,\delta) \nonumber
\end{align}
As observed earlier in~\cref{ssec:fxai}, because the predicate $\waxp$
is monotone, subset-minimal AXp's match locally-minimal AXp's. An
important practical consequence is that most algorithms for computing
one subset-minimal AXp, will instead compute a locally-minimal AXp,
since these will be the same.
Nevertheless, a critical observation is that in the case of
probabilistic AXp's (see~\eqref{eq:wpaxp}), the predicate $\wpaxp$ is
\emph{not} monotone.
Thus, there can exist locally-minimal PAXp's that are not
subset-minimal PAXp's. (As shown in the experiments, computed locally
minimal APXp's are most often PAXp's. However, exceptions do exist,
even though these are rarely observed). 
Furthermore, the fact that a set of features $\fml{X}\subseteq\fml{F}$
may satisfy \eqref{eq:lmpaxp} but not \eqref{eq:paxp} imposes that
subset-minimal PAXp's must be computed by using \eqref{eq:paxp}; as
shown later, this requires more complex algorithms.
%

Finally, minimum-size PAXp's (or a smallest PAXp's) generalize
Min-$\delta$-relevant sets in~\cref{def:mdrs}. A set of features
$\fml{X}\subseteq\fml{F}$ is a minimum-size AXp if, 
\begin{align}
  \mpaxp&(\fml{X};\mbb{F},\kappa,\mbf{v},c,\delta) \::= \nonumber \\
  &\wpaxp(\fml{X};\mbb{F},\kappa,\mbf{v},c,\delta) \:\:\land \\
  &\forall(\fml{X}'\subseteq\fml{F}).
  \left[(|\fml{X}'|<|\fml{X}|)\limply
  \neg\wpaxp(\fml{X}';\mbb{F},\kappa,\mbf{v},c,\delta)\right] \nonumber
\end{align}  
(As stated earlier, throughout the paper, we will drop the
parameterization associated with each predicate, and so we will write
$\paxp(\fml{X})$ instead of
$\paxp(\fml{X};\mbb{F},\kappa,\mbf{v},c,\delta)$, when the parameters
are clear from the context. Although the parameterization on $\delta$
is paramount, we opt instead for simpler notation.)

\jnoteF{Let's be rigorous about the definitions. We have:
  \begin{enumerate}
  \item Weak PAXp's, or WPAXp's
  \item (Subset-minimal or plain) PAXp's
  \item Smallest PAXp's, i.e. SPAXp's
  \item (Locally-minimal) PAXp's, i.e. LPAXp's
  \end{enumerate}
}

%
%
\begin{example}
  The computation of (probabilistic) AXp's is illustrated with the DT
  from~\cref{fig:runex01:dt}.
  %
  The instance considered throughout is
  $\mbf{v}=(v_1,v_2,v_3)=(4,4,2)$, with $c=\kappa(\mbf{v})=\oplus$.
  Clearly, $\mbf{v}$ is consistent with $P_3$. The goal is to compute a
  $\delta$-relevant set  given $\delta=0.93$.
  Let $\#(R_k)$ denote the number of points in feature space that are
  consistent with path $R_k$. Moreover, let $\#(\fml{X})$ denote the
  total number of points in feature space that are consistent with the
  set of \emph{fixed} features $\fml{X}\in\fml{F}$.
  \cref{tab:cprob} summarizes the computation of
  $\prob_{\mbf{x}}(\kappa(\mbf{x})=c | \mbf{x}_{\fml{S}}=\mbf{v}_{\fml{S}})$
  for different sets $\fml{S}$. The table also includes information on
  whether each set is
  a weak AXp,
  an AXp,
  a weak PAXp,
  or a PAXp.
  The set $\{1,3\}$ represents an AXp, since for any point consistent
  with the assignments $x_1=4$ and $x_3=2$, the prediction is $\oplus$.
  However, by setting $\fml{S}=\{3\}$, the probability of predicting
  $\oplus$ given a point consistent with $x_3=2$ still exceeds
  $\delta$, since $\sfrac{15}{16}=93.75\%$. Hence, $\{3\}$ is a PAXp
  for $\mbf{v}=(4,4,2)$ when $\delta=0.93$.
\end{example}
%

\paragraph{Properties of locally-minimal PAXp's.}
Let $\fml{X}$ denote an AXp. Clearly, $\fml{X}$ is also a PAXp. Then,
for any locally-minimal $\fml{A}\subseteq\fml{X}$ (i.e.\ $\fml{A}$ is
computed using $\fml{X}$ as a seed), we have the following properties,
which follow from the definition:
\begin{enumerate}[nosep]
\item $\fml{A}\subseteq\fml{X}$ (by hypothesis);
\item $\fml{A}$ is a weak PAXp (by definition); and
\item There exists at least one PAXp
  $\fml{E}$ such that $\fml{E}\subseteq\fml{A}$.
\end{enumerate}
Thus, given some AXp $\fml{X}$, we can compute a locally-minimal PAXp
$\fml{A}$ that is both a subset of $\fml{X}$ and a a superset of some
PAXp, and such that $\fml{A}$ exhibits the strong probabilistic
properties of relevant sets. Although any locally-minimal PAXp is a
subset of a weak AXp, there can exist locally-minimal PAXp's that are
not subsets of some (plain) AXp.

\subsection{Computing Locally-Minimal PAXp's}

\cref{alg:lmpaxp} shows one approach for computing a locally-minimal
PAXp\footnote{This simple algorithm is often referred to as the 
deletion-based algorithm, namely in settings related with solving
function problems in propositional logic and constraint
programming~\cite{msjm-aij17}. However, the same general algorithm can
be traced at least to the work of Valiant~\cite{valiant-cacm84}, and
some authors~\cite{juba-aaai16} argue that it is implicit in works
from the $\text{19}^{\text{th}}$ century~\cite{mill-bk43}.
}.
As shown, to compute one locally-minimal PAXp, one starts from
$\fml{F}=\{1,\ldots,m\}$ and iterately removes features while it is
safe to do so, i.e.\ while \eqref{eq:wpaxp} holds for the 
resulting set.
Beside ~\cref{alg:lmpaxp}, one could consider for example variants of
the QuickXplain~\cite{junker-aaai04} and the
Progression~\cite{msjb-cav13,msjm-aij17} algorithms. Both of which
also allow computing preferred (locally-minimal) sets subject to
anti-lexicographic preferences~\cite{junker-aaai04,msp-sat14}.
Furthermore, we note that the same algorithms (i.e.\ Deletion,
Progression and QuickXplain, among others) can also be used for
computing one AXp. Moreover, observe that these algorithms
can also be applied to \emph{any classifier} with respect to which we
seek to compute one locally-minimal PAXp.
Furthermore, another simple observation is that explanations can be
enumerated by exploiting hitting set dualization~\cite{inams-aiia20},
e.g.\ using a MARCO-like algorithm~\cite{lpmms-cj16}.

\begin{algorithm}[t]
  \begin{flushleft}
\hspace*{\algorithmicindent}
\textbf{Input}: {Features $\{1,\ldots,m\}$; feature space
  $\mbb{F}$, classifier $\kappa$, instance $(\mbf{v},c)$, threshold
  $\delta$}\\ 
\hspace*{\algorithmicindent}
\textbf{Output}: {Locally-minimal PAXp $\fml{S}$}
\end{flushleft}
\begin{algorithmic}[1]
  \Procedure{$\findlmpaxp$}{$\{1,\dots,m\};\mbb{F},\kappa,\mbf{v},c,\delta$}
  \State{$\fml{S} \gets \{1,\ldots,m\}$}
  \For{$i\in\{1,\ldots,m\}$}
  \If{$\wpaxp(\fml{S}\setminus\{i\};\mbb{F},\kappa,\mbf{v},c,\delta)$}
  \State{$\fml{S} \gets \fml{S}\setminus\{i\}$}
  \EndIf
  \EndFor  
  \State{\bfseries{return}~{$\fml{S}$}}
\EndProcedure
\end{algorithmic}

  \caption{Computing one locally-minimal PAXp
  }
  \label{alg:lmpaxp}
\end{algorithm}

\paragraph{Practically efficient computation of relevant sets.}
Further to the computation of locally-minimal PAXp's, the next few
sections show that the computation of relevant sets (PAXp's) can be achieved
efficiently in practice, for several families of classifiers.
Concretely, over the next few sections we analyze decision trees,
naive Bayes classifiers, but also several families of propositional
and graph-based classifiers, studied in recent
work~\cite{hiims-kr21,hiicams-aaai22}.

\section{Probabilistic Explanations for Decision Trees} \label{sec:rsdt}

This section shows that the problem of deciding whether a set
$\fml{X}\subseteq\fml{F}$ is a PAXp is in NP when $\kappa$ is
represented by a decision tree\footnote{%
  As noted earlier, and for simplicity, the paper considers the case
  of non-continuous features. However, in the case of DTs, the results
  generalize to continuous features.}.
As a result, a minimum-size PAXp can be computed with at most a
logarithmic number of calls to an NP oracle. (This is a consequence
that optimizing a linear cost function, subject to a set of constraints
for which deciding satisfiability is in NP, can be achieved with a
logarithmic number of calls to a NP oracle.)
An SMT formulation of the problem is proposed and the empirical
evaluation confirms its practical effectiveness.
This section also proposes a polynomial time algorithm to compute one
locally-minimal PAXp, thus offering an alternative to computing one
PAXp. The results in~\cref{sec:res} confirm that in practice computed
locally-minimal PAXp's are often subset-minimal, i.e.\ a
locally-minimal PAXp actually represents a (plain) PAXp.

\subsection{Path Probabilities for DTs} \label{sec:pps}
This section investigates how to compute, in the case of DTs, the
conditional probability,
%
%
\begin{equation} \label{eq:cprob}
  \prob_{\mbf{x}}(\kappa(\mbf{x})=c\,|\,\mbf{x}_{\fml{X}}=\mbf{v}_{\fml{X}})
\end{equation}
where $\fml{X}$ is a set of \emph{fixed} features (whereas the other
features are not fixed, being deemed \emph{universal}), and $P_t$ is a
path in the DT consistent with the instance $(\mbf{v},c)$. (Also, note
that \eqref{eq:cprob} is the left-hand side of~\eqref{eq:drs}.)
%
To motivate the proposed approach, let us first analyze how we can
compute $\prob_{\mbf{x}}(\kappa(\mbf{x})=c)$, where 
$\fml{P}\subseteq\fml{R}$ is the set of paths in the DT with
prediction $c$.
Let $\mrm{\Lambda}(R_k)$ denote the set of literals (each of the form
$x_i\in\mbb{E}_i$) in some path $R_k\in\fml{R}$. If a feature $i$ is
tested multiple times along path $R_k$, then $\mbb{E}_i$ is the
intersection of the sets in each of the literals of $R_k$ on $i$.
The number of values of $\mbb{D}_i$ consistent with literal
$x_i\in\mbb{E}_i$ is $|\mbb{E}_i|$.
Finally, the features \emph{not} tested along $R_k$ are denoted by
$\mrm{\Psi}(R_k)$.
For path $R_k$, the probability that a randomly chosen point in
feature space is consistent with $R_k$ (i.e.\ the \emph{path
  probability} of $R_k$) is given by,
\begin{equation} \label{eq:probrk}
\prob(R_k) =
\nicefrac{\left[\prod_{(x_i\in\mbb{E}_i)\in\mrm{\Lambda}(R_k)}|\mbb{E}_i|
    \times\prod_{i\in\mrm{\Psi}(R_k)}|\mbb{D}_i|\right]}{|\mbb{F}|}
\end{equation}
%
%
As a result, we get that,
\begin{equation} \label{eq:probpred}
\prob_{\mbf{x}}(\kappa(\mbf{x})=c)={\textstyle\sum\nolimits}_{R_k\in\fml{P}}\prob(R_k)
\end{equation}

Given an instance $(\mbf{v},c)$ and a set of fixed features $\fml{X}$
(and so a set of universal features $\fml{F}\setminus\fml{X}$), we now
detail how to compute~\eqref{eq:cprob}.
Since some features will now be declared universal, multiple paths
with possibly different conditions can become consistent. For example,
in~\cref{fig:runex01:dt} if feature 1 and 2 are declared universal, then
(at least) paths $P_1$, $P_2$ and $Q_1$ are consistent with some of
the possible assignments.
Although universal variables might seem to complicate the computation
of the conditional probability, this is not the case.

A key observation is that the feature values that make a path
consistent are disjoint from the values that make other paths
consistent. This observation allows us to compute the models 
consistent with each path and, as a result, to
compute~\eqref{eq:drs}.
Let $R_k\in\fml{R}$ represent some path in the decision tree. (Recall
that $P_t\in\fml{P}$ is the target path, which is consistent with
$\mbf{v}$.)
Let $n_{ik}$ represent the (integer) number of assignments to feature
$i$ that are consistent with path $R_k\in\fml{R}$, given
$\mbf{v}\in\mbb{F}$ and $\fml{X}\subseteq\fml{F}$.
For a feature $i$, let $\mbb{E}_{i}$ denote the set of domain values of
feature $i$ that is consistent with path $R_k$. Hence, for path $R_k$,
we consider a literal $(x_i\in{\mbb{E}_{i}})$.
Given the above, the value of $n_{ik}$ is defined as follows:
\begin{enumerate}[nosep]
\item If $i$ is fixed:
  \begin{enumerate}[nosep]
  \item If $i$ is tested along $R_k$ and the value of $x_i$ is
    inconsistent with $\mbf{v}$, i.e.\ there exists a literal
    $(x_i\in\mbb{E}_i)\in\mrm{\Lambda}(R_k)$ and
    $\{v_i\}\cap{\mbb{E}_i}=\emptyset$, then $n_{ik}=0$;
  \item If $i$ is tested along $R_k$ and the value of $x_i$ is
    consistent with $R_k$, i.e.\ there exists a literal
    $(x_i\in\mbb{E}_i)\in\mrm{\Lambda}(R_k)$ and
    $\{v_i\}\cap{\mbb{E}_i}\not=\emptyset$, then $n_{ik}=1$;
  \item If $i$ is not tested along $R_k$, then $n_{ik}=1$.
  \end{enumerate}
\item Otherwise, $i$ is universal:
  \begin{enumerate}[nosep]
  \item If $i$ is tested along $R_k$, with some literal
    $x_i\in\mbb{E}_i$, then $n_{ik}=|\mbb{E}_i|$;
  \item If $i$ is not tested along $R_k$, then $n_{ik}=|\mbb{D}_i|$.
  \end{enumerate}
\end{enumerate}
Using the definition of $n_{ik}$, we can then compute the number of 
assignments consistent with $R_k$ as follows:
\begin{equation}
  \#(R_k;\mbf{v},\fml{X})={\textstyle\prod\nolimits}_{i\in\fml{F}}n_{ik}
\end{equation}
Finally,~\eqref{eq:cprob} is given by,
%
\begin{equation} \label{eq:cprob2}
\prob_{\mbf{x}}(\kappa(\mbf{x})=c\,|\,\mbf{x}_{\fml{X}}=\mbf{v}_{\fml{X}})\,=\,
\nicefrac%
    {\sum_{P_k\in\fml{P}}\#(P_k;\mbf{v},\fml{X})}
    {\sum_{R_k\in\fml{R}}\#(R_k;\mbf{v},\fml{X})}
\end{equation}
As can be concluded, and in the case of a decision tree,
both
$\prob_{\mbf{x}}(\kappa(\mbf{x})=c\,|\,\mbf{x}_{\fml{X}}=\mbf{v}_{\fml{X}})$
and $\wpaxp(\fml{X};\mbb{F},\kappa,\mbf{v},c,\delta)$ are computed in
polynomial time on the size of the DT.

\begin{example}
  With respect to the DT in~\cref{fig:runex01:dt}, and given the instance
  $((4,4,2),\oplus)$, the number of models for each path is shown
  in~\cref{tab:cprob}.
  For example, for set $\{3\}$, we immediately get that
  $\prob_{\mbf{x}}(\kappa(\mbf{x})=c\,|\,\mbf{x}_{\fml{X}}=\mbf{v}_{\fml{X}})=
  \nicefrac{15}{(15+1)}=\nicefrac{15}{16}$.
\end{example}

\subsection{Computing Locally-Minimal PAXp's for DT's}
\label{sec:adrset}

Recent work showed that, for DTs, one AXp can be computed in
polynomial time~\cite{iims-corr20,hiims-kr21,iims-corr22}.
A simple polynomial-time algorithm can be summarized as follows.
The AXp $\fml{X}$ is initialized with all the features in $\fml{F}$.
Pick the path consistent with a given instance $(\mbf{v},c)$.
The features not in the path are removed from $\fml{X}$. Then,
iteratively check whether $\fml{X}\setminus\{i\}$ guarantees that all
paths to a prediction in $\fml{K}\setminus\{c\}$ are still
inconsistent. If so, then update $\fml{X}$.
As argued in~\cref{sec:paxp}, we can use a similar (deletion-based)
approach for computing one locally-minimal PAXp for DTs. 
Such an approach builds on~\cref{alg:lmpaxp}.
In the case of DTs, \eqref{eq:wpaxp} is computed
using~\eqref{eq:cprob2} on some given set $\fml{S}\setminus\{i\}$.
to decide whether the precision of the approximation
$\fml{S}\setminus\{i\}$ is no smaller than the threshold $\delta$.
As stated earlier,~\eqref{eq:cprob2} is computed in polynomial time.
Hence,~\cref{alg:lmpaxp} runs in polynomial time for DTs.

\subsection{Computing Minimum-Size PAXp's for DTs}
\label{sec:mdrset}
For computing a minimum-size PAXp, we propose two SMT encodings, thus
showing that the decision problem is in NP, and that finding a
smallest set requires a logarithmic number of calls to an NP-oracle.
Regarding the two SMT encodings, one involves the multiplication of
integer variables, and so it involves non-linear arithmetic. Given the
structure of the problem, we also show that linear arithmetic can be
used, by proposing a (polynomially) larger encoding.

\paragraph{A multiplication-based SMT encoding.}
Taking into account the definition of path probabilities
(see~\cref{sec:pps}), we now devise a model that computes path
probabilities based on the same ideas.
Let $j\in\fml{F}$ denote a given feature.
Let $n_{jk}$ denote the number of elements in $\mbb{D}_j$ consistent
with path $R_k$ (for simplicity, we just use the path index $k$).
Also, $u_j$ is a boolean variable that indicates whether feature $j$
is fixed ($u_j=0$) or universal ($u_j=1$).
If feature $j$ is not tested along path $R_k$, then if $j$ is fixed,
then $n_{jk}=1$. If not, then $n_{jk}=|\mbb{D}_j|$.
Otherwise, $j$ is tested along path $R_k$.
$n_{jk}$ is 0 if $j$ is fixed (i.e.\ $u_j=0$) and
inconsistent with the values of $\mbb{D}_j$ allowed for path
$R_k$. $n_{jk}$ is 1 if $j$ is fixed and consistent with the values of
$\mbb{D}_j$ allowed for path $R_k$.
If feature $j$ is not fixed (i.e.\ it is deemed universal and so
$u_j=1$), then $n_{jk}$ denotes the number of domain values of $j$
consistent with path $R_k$.
Let the fixed value of $n_{jk}$ be
$n_{0jk}$ and the \emph{universal} value of $n_{jk}$ be $n_{1jk}$.
Thus, $n_{jk}$ is defined as follows,
\begin{equation}
  n_{jk}=\ite(u_j,n_{1jk},n_{0jk})
\end{equation}
Moreover, let $\eta_k$ denote the number of models of path $R_k$.
Then, $\eta_k$ is defined as follows:
\begin{equation} \label{eq:cntprod}
    \eta_k = {\textstyle\prod\nolimits}_{i\in\mrm{\Phi}(k)}n_{ik}
\end{equation}
If the domains are boolean, then we can use a purely boolean
formulation for the problem. However, if the domains are multi-valued,
then we need this formulation.

Recall what we must ensure that~\eqref{eq:wpaxp} holds true.
In the case of DTs, since we can count the models associated with each
path, depending on which features are fixed or not, then the previous
constraint can be translated to:
\begin{equation} \label{eq:cdrscond}
  {\textstyle\sum\nolimits}_{R_k\in\fml{P}}\eta_k \ge
  \delta \times {\textstyle\sum\nolimits}_{R_k\in\fml{P}}\eta_k
  +
  \delta \times {\textstyle\sum\nolimits}_{R_k\in\fml{Q}}\eta_k
\end{equation}
Recall that $\fml{P}$ are the paths with the matching
prediction, and $\fml{Q}$ are the rest of the paths.

Finally, the soft constraints are of the form $(u_i)$, one for each
feature $i\in\fml{F}\setminus\Psi(R_k)$, i.e.\ for the features tested
along path $R_k$. (For each feature $i$ not tested along $R_k$,
i.e.\ $i\in\Psi(R_k)$, enforce that the feature is universal by adding
a hard clause $(u_i)$.)
The solution to the optimization problem will then be a
\emph{smallest} weak PAXp, and so also a (plain) PAXp. (The minimum-cost
solution is well-known to be computed with a worst-case logarithmic
number of calls (on the number of features) to an SMT solver.) 


\begin{table}[t]
  \centering
  \renewcommand{\arraystretch}{1.05}
  \renewcommand{\tabcolsep}{0.425em}
  \scalebox{0.975}{
    \begin{tabular}{ccC{1.0cm}C{0.85cm}C{0.85cm}C{0.85cm}C{0.85cm}} \toprule
      Feature & Attr. & $P_1$ & $P_2$ & $P_3$ & $Q_1$ & $Q_2$ \\ \toprule
      \multirow{3}{*}{1}
      & $n_{01k}$ & 0 & 1 & 1 & 0 & 1 \\
      & $n_{11k}$ & 1 & 3 & 3 & 1 & 3 \\ \cmidrule(lr){2-2} \cmidrule(lr){3-7}
      & $n_{1k}$ &
      \multicolumn{5}{c}{$n_{1k}=\ite(u_1,n_{11k},n_{01k})$}
      \\
      \midrule
      \multirow{3}{*}{2}
      & $n_{02k}$ & 1 & 0 & 1 & 0 & 1 \\
      & $n_{12k}$ & 3 & 1 & 3 & 1 & 3 \\ \cmidrule(lr){2-2} \cmidrule(lr){3-7}
      & $n_{2k}$ &
      \multicolumn{5}{c}{$n_{2k}=\ite(u_2,n_{12k},n_{02k})$}
      \\
      \midrule
      \multirow{3}{*}{3}
      & $n_{03k}$ & 1 & 1 & 1 & 1 & 0 \\
      & $n_{13k}$ & 2 & 2 & 1 & 2 & 1 \\ \cmidrule(lr){2-2} \cmidrule(lr){3-7}
      & $n_{3k}$ &
      \multicolumn{5}{c}{$n_{3k}=\ite(u_3,n_{13k},n_{03k})$}
      \\
      \toprule
      \multicolumn{2}{c}{Path counts} &
      \multicolumn{5}{c}{$\eta_{k}=n_{1k}\times{n_{2k}}\times{n_{3k}}$}
      \\
      \bottomrule
    \end{tabular}
  }
  \caption{SMT encoding for multiplication-based encoding}
  \label{tab:smtenc0}
\end{table}

\begin{example}
For the running example, let us consider $\fml{X}=\{3\}$. This means that
$u_1=u_2=1$. As a result, given the instance and the proposed
encoding, we get~\cref{tab:smtenc0} and%
~\cref{tab:runex0}.
\begin{table}[t]
  \centering
  \renewcommand{\arraystretch}{1.0}
  \renewcommand{\tabcolsep}{0.375em}
  \scalebox{0.975}{
    \begin{tabular}{ccccc} \toprule
      Path & $n_{1k}$ & $n_{2k}$ & $n_{3k}$ & $\eta_{k}$ \\ \toprule
      $R_1$ & 1      &  3       &  1       & 3 \\
      $R_2$ & 1      &  3       &  1       & 3 \\
      $R_3$ & 3      &  3       &  1       & 9 \\ \midrule
      $R_4$ & 1      &  1       &  1       & 1 \\
      $R_5$ & 3      &  3       &  0       & 0 \\
      \bottomrule
    \end{tabular}
  }
  \caption{Concrete values for the multiplication-based encoding for
    the case $\fml{X}=\{3\}$, i.e.\ $u_1=u_2=1$ and $u_3=0$}
  \label{tab:runex0}
\end{table}

Finally, by plugging into~\eqref{eq:cdrscond} the values
from~\cref{tab:runex0}, we get: $15 \ge 0.93 \times (15+1)$.
Thus, $\fml{X}$ is a weak PAXp, and we can show that it is both a
plain PAXp and a smallest PAXp.
Indeed, with $\fml{Y}=\emptyset$, we get
$\prob_{\mbf{x}}(\kappa(\mbf{x})=c\,|\,\mbf{x}_{\fml{Y}}=\mbf{v}_{\fml{Y}})=21/32=0.65625<\delta$. Hence,
$\fml{X}=\{3\}$ is subset-minimal.
Since there can be no PAXp's of smaller size, then $\fml{X}$ is
also a smallest PAXp.
\end{example}

\paragraph{An alternative addition-based SMT encoding.}
A possible downside of the SMT encoding described above is the use of
multiplication of variables in~\eqref{eq:cntprod}; this causes the SMT
problem formulation to involve different theories (which may turn out
to be harder to reason about in practice). Given the problem
formulation, we can use an encoding that just uses linear arithmetic.
This encoding is organized as follows.
Let the order of features be: $\langle1,2,\ldots,m\rangle$.
Define $\eta_{j,k}$ as the sum of models of path $R_k$ taking into
account features 1 up to $j$, with $\eta_{0,k}=1$.
Given $\eta_{{j-1},{k}}$, $\eta_{{j},{k}}$ is computed as follows:
\begin{itemize}[nosep]
\item
  Let the domain of feature $j$ be $\mbb{D}_j=\{v_{j1},\ldots,v_{jr}\}$, and
  let $s_{j,l,k}$ denote the number of models taking into account
  features 1 up to $j-1$ and domain values $v_{j1}$ up to
  $v_{j{l-1}}$. Also, let $s_{j,0,k}=0$.
\item For each value $v_{jl}$ in $\mbb{D}_j$, for $l=1,\ldots,r$:
  \begin{itemize}[nosep]
  \item If $j$ is tested along path $R_k$:
    (i) If $v_{jl}$ is inconsistent with path $R_k$, then
    $s_{j,l,k}=s_{j,l-1,k}$;
    (ii) If $v_{jl}$ is consistent with path $R_k$ and with
    $\mbf{v}$, then $s_{j,l,k}=s_{j,l-1,k}+\eta_{{j-1},{k}}$;
    (iii) If $v_{jl}$ is consistent with path $R_k$ but not with
    $\mbf{v}$, or if feature $j$ is not tested in path $R_k$,
    then $s_{j,l,k}=s_{j,l-1,k}+\ite(u_j,\eta_{{j-1},{k}},0)$.
  \item If $j$ is not tested along path $R_k$:
    (i) If $v_{jl}$ is consistent with $\mbf{v}$, then
    $s_{j,l,k}=s_{j,l-1,k}+\eta_{{j-1},{k}}$;
    (ii) Otherwise,
    $s_{j,l,k}=s_{j,l-1,k}+\ite(u_j,\eta_{{j-1},{k}},0)$.
  \end{itemize}
\item Finally, define $\eta_{{j},{k}}=s_{j,r,k}$.
\end{itemize}

After considering all the features in order, $\eta_{m,k}$
represents the number of models for path $R_k$ given the assigment to 
the $u_j$ variables.
As a result, we can re-write~\eqref{eq:cdrscond} as follows:
\begin{equation} \label{eq:cdrscond2}
  {\textstyle\sum\nolimits}_{R_k\in\fml{P}}\eta_{m,k} \ge
  \delta \times {\textstyle\sum\nolimits}_{R_k\in\fml{P}}\eta_{m,k}
  +
  \delta \times {\textstyle\sum\nolimits}_{R_k\in\fml{Q}}\eta_{m,k}
\end{equation}
As with the multiplication-based encoding, the soft clauses are of the
form $(u_i)$ for $i\in\fml{F}$.



\begin{table*}[t]
  \centering
  \renewcommand{\arraystretch}{1.15}
  \renewcommand{\tabcolsep}{0.325em}
  \scalebox{0.75}{
    \begin{tabular}{cccccc} \toprule
      Var.   &
      $R_1\cong{P_1}$  & $R_2\cong{P_2}$ & $R_3\cong{P_3}$ & $R_4\cong{Q_1}$  & $R_5\cong{Q_2}$ \\
      \toprule
      $s_{1,0,k}$ & $s_{1,0,1}=0$ & $s_{1,0,2}=0$ & $s_{1,0,3}=0$ & $s_{1,0,4}=0$ & $s_{1,0,5}=0$
      \\
      $s_{1,1,k}$ &
      $s_{1,0,1}+\ite(u_1,\eta_{0,1},0)$ &
      $s_{1,0,2}$ &
      $s_{1,0,3}$ &
      $s_{1,0,4}+\ite(u_1,\eta_{0,4},0)$ &
      $s_{1,0,5}$
      \\
      $s_{1,2,k}$ &
      $s_{1,1,1}$ &
      $s_{1,1,2}+\ite(u_1,\eta_{0,2},0)$ &
      $s_{1,1,3}+\ite(u_1,\eta_{0,3},0)$ &
      $s_{1,1,4}$ &
      $s_{1,1,5}+\ite(u_1,\eta_{0,5},0)$
      \\
      $s_{1,3,k}$ &
      $s_{1,2,1}$ &
      $s_{1,2,2}+\ite(u_1,\eta_{0,2},0)$ &
      $s_{1,2,3}+\ite(u_1,\eta_{0,3},0)$ &
      $s_{1,2,4}$ &
      $s_{1,2,5}+\ite(u_1,\eta_{0,5},0)$
      \\
      $s_{1,4,k}$ &
      $s_{1,3,1}$ &
      $s_{1,3,2}+\eta_{0,2}$ &
      $s_{1,3,3}+\eta_{0,3}$ &
      $s_{1,3,4}$ &
      $s_{1,3,5}+\eta_{0,5}$
      \\
      \midrule
      $\eta_{1,k}$ & $s_{1,4,1}$ & $s_{1,4,2}$ & $s_{1,4,3}$ & $s_{1,4,4}$ & $s_{1,4,5}$
      \\
      \midrule
      $s_{2,0,k}$ & $s_{2,0,1}=0$ & $s_{2,0,2}=0$ & $s_{2,0,3}=0$ & $s_{2,0,4}=0$ & $s_{2,0,5}=0$
      \\
      $s_{2,1,k}$ &
      $s_{2,0,1}$ &
      $s_{2,0,2}+\eta_{1,2}$ &
      $s_{2,0,3}$ &
      $s_{2,0,4}+\ite(u_2,\eta_{1,4},0)$ &
      $s_{2,0,5}+\ite(u_2,\eta_{1,5},0)$
      \\
      $s_{2,2,k}$ &
      $s_{2,1,1}+\ite(u_2,\eta_{1,1},0)$ &
      $s_{2,1,2}$ &
      $s_{2,1,3}+\ite(u_2,\eta_{1,3},0)$ &
      $s_{2,1,4}$ &
      $s_{2,1,5}$
      \\
      $s_{2,3,k}$ &
      $s_{2,2,1}+\ite(u_2,\eta_{1,1},0)$ &
      $s_{2,2,2}$ &
      $s_{2,2,3}+\ite(u_2,\eta_{1,3},0)$ &
      $s_{2,2,4}$ &
      $s_{2,2,5}$
      \\
      $s_{2,4,k}$ &
      $s_{2,3,1}+\eta_{1,1}$ &
      $s_{2,3,2}$ &
      $s_{2,3,3}+\eta_{1,3},0$ &
      $s_{2,3,4}$ & $s_{2,3,5}$
      \\
      \midrule
      $\eta_{2,k}$ &  $s_{2,4,1}$ & $s_{2,4,2}$ & $s_{2,4,3}$ & $s_{2,4,4}$ & $s_{2,4,5}$
      \\
      \midrule
      $s_{3,0,k}$ & $s_{3,0,1}=0$ & $s_{3,0,2}=0$ & $s_{3,0,3}=0$ & $s_{3,0,4}=0$ & $s_{3,0,5}=0$
      \\
      $s_{3,1,k}$ &
      $s_{3,0,1}+\ite(u_3,\eta_{2,1},0)$ &
      $s_{3,0,2}+\ite(u_3,\eta_{2,2},0)$ &
      $s_{3,0,3}$ &
      $s_{3,0,4}+\ite(u_3,\eta_{2,4},0)$ &
      $s_{3,0,5}+\ite(u_3,\eta_{2,5},0)$
      \\
      $s_{3,2,k}$ &
      $s_{3,1,1}+\eta_{2,1}$ &
      $s_{3,1,2}+\eta_{2,2}$ &
      $s_{3,1,3}+\eta_{2,3}$ &
      $s_{3,1,4}+\eta_{2,4}$ &
      $s_{3,1,5}$
      \\
      \midrule
      $\eta_{3,k}$ & $s_{3,4,1}$ & $s_{3,4,2}$ & $s_{3,4,3}$ & $s_{3,2,4}$ & $s_{3,2,5}$
      \\
      \bottomrule
    \end{tabular}
  }
  \caption{Partial addition-based SMT encoding for paths with
    prediction $\oplus$, with $(\mbf{v},c)=((4,4,2),\oplus)$,
    and with $\eta_{0,1}=\eta_{0,2}=\eta_{0,3}=1$} \label{tab:smtenc1}
\end{table*}

\begin{example}
  \cref{tab:smtenc1} summarizes the SMT encoding based on iterated
  summations for paths with either prediction $\oplus$ or $\ominus$.
  The final computed values are then used in the linear
  inequality~\eqref{eq:cdrscond2}, as follows,
  \[
  \eta_{3,1}+\eta_{3,2}+\eta_{3,2}\ge%
  \delta\times(\eta_{3,1}+\eta_{3,2}+\eta_{3,2})+
  \delta\times(\eta_{3,4}+\eta_{3,5})
  \]
  The optimization problem also includes
  $\fml{B}=\{(\neg{u_1}),(\neg{u_2}),(\neg{u_3})\}$ as the soft clauses.
  %
  %
  For the counting-based encoding, and from~\cref{tab:smtenc1}, we get
  the values shown in~\cref{tab:runex2}.
  Moreover, we can then confirm that $15\ge0.93\times16$, as intended.
\end{example}

\begin{table}[t]
  \centering
  \renewcommand{\arraystretch}{1.15}
  \renewcommand{\tabcolsep}{0.35em}
  \scalebox{0.85}{
    \begin{tabular}{cccccc} \toprule
      Var.   & $R_1\cong{P_1}$  & $R_2\cong{P_2}$ & $R_3\cong{P_3}$ &
      $R_4\cong{Q_1}$ & $R_5\cong{Q_2}$
      \\
      \toprule
      $s_{1,0,k}$ &
      0 & 0 & 0 & 0 & 0
      \\
      $s_{1,1,k}$ &
      1 &
      0 &
      0 &
      1 &
      0
      \\
      $s_{1,2,k}$ &
      1 &
      2 &
      1 &
      1 &
      1
      \\
      $s_{1,3,k}$ &
      1 &
      2 &
      2 &
      1 &
      2
      \\
      $s_{1,4,k}$ &
      1 &
      3 &
      3 &
      1 &
      3
      \\
      \midrule
      $\eta_{1,k}$ & 1 & 3 & 3 & 1 & 3
      \\
      \midrule
      $s_{2,0,k}$ &
      0 & 0 & 0 & 0 & 0
      \\
      $s_{2,1,k}$ &
      0 &
      3 &
      0 &
      0 &
      3
      \\
      $s_{2,2,k}$ &
      1 &
      3 &
      3 &
      1 &
      3
      \\
      $s_{2,3,k}$ &
      2 &
      3 &
      6 &
      1 &
      3
      \\
      $s_{2,4,k}$ &
      3 &
      3 &
      9 &
      1 &
      3
      \\
      \midrule
      $\eta_{2,k}$ & 3 & 3 & 9 & 1 & 3
      \\
      \midrule
      $s_{3,0,k}$ &
      0 & 0 & 0 & 0 & 0
      \\
      $s_{3,1,k}$ &
      0 &
      2 &
      0 &
      0 &
      0
      \\
      $s_{3,2,k}$ &
      3 &
      3 &
      9 &
      1 &
      0
      \\
      \midrule
      $\eta_{3,k}$ & 3 & 3 & 9 & 1 & 0
      \\
      \bottomrule
    \end{tabular}
  }
  \caption{Assignment to variables of addition-based SMT encoding,
    given $\fml{X}=\{3\}$, i.e.\ $u_1=u_2=1$ and $u_3=0$}
  \label{tab:runex2}
\end{table}

\paragraph{Discussion.} In this as in the following sections, one
might consider the use of a model counter as a possible alternative.
However, a model counter would have to be used for each pick of
features. Given the complexity of exactly computing the number of
models, such approaches are all but assured to be impractical in
practice.

\subsection{Deciding Whether a Locally-Minimal PAXp is a Plain PAXp for DTs}

The problem of deciding whether a set of features $\fml{X}$,
representing an $\apaxp$, is subset-minimal can be achieved by using
one of the models above, keeping the features that are already
universal, and checking whether additional universal features can be
made to exist.
In addition, we need to add constraints forcing universal features to
remain universal, and at least one of the currently fixed features to
also become universal.
Thus, if $\fml{X}$ is the set of fixed features, the SMT models
proposed in earlier sections is extended with the following
constraints:
\begin{equation} \label{eq:chkdrset}
  {\textstyle\bigwedge\nolimits}_{j\in\fml{F}\setminus\fml{X}}(u_j)
  {\textstyle\bigwedge}
  \left({\textstyle\bigvee\nolimits}_{j\in\fml{X}}u_j\right)
\end{equation}
which allow checking whether some set of fixed features can be
declared universal while respecting the other constraints.

\subsection{Instance-Based vs.\ Path-Based Explanations}

The standard definitions of abductive explanations consider a concrete
instance $(\mbf{v},c)$. As argued earlier (see~\eqref{eq:axp-rule}),
each (weak) AXp $\fml{X}$ can then be viewed as a rule of the form:
\[
\tbf{IF~~}\left[\land_{i\in\fml{X}}(x_i=v_i)\right]\tbf{~~THEN~~}
\left[\kappa(\mbf{x})=c\right]
\]
In the case of DTs, a given $\mbf{v}$ is consistent with a concrete
path $P_t$. As argued in recent work~\cite{iims-corr22}, this enables
studying instead generalizations of AXp's, concretely to so-called 
\emph{path-based explanations}, each of which can be viewed as
representing instead a rule of the form:
\begin{equation} \label{eq:axp-rule2}
  \tbf{IF~~}\left[\land_{i\in\fml{X}}(x_i\in{\mbb{E}_i})\right]\tbf{~~THEN~~}
  \left[\kappa(\mbf{x})=c\right]
\end{equation}
where $\mbb{E}_i\subseteq{\mbb{D}_i}$ and where each literal
$x_i\in{\mbb{E}_i}$ is one of the literals in the path $P_t$
consistent with the instance $(\mbf{v},c)$.

Clearly, the literals associated with a path $R_k$ offer more
information than those associated with a concrete point $\mbf{v}$ in
feature space. As a result, in the case of DTs, we consider a
generalization of the definition of relevant set, and seek instead to
compute:
\begin{equation} \label{eq:probdt}
\prob_{\mbf{x}}(\kappa(\mbf{x})=c\,|\,\mbf{x}_{\fml{X}}\in\mbb{E}_{\fml{X}})
\end{equation}
where the notation $\mbf{x}_{\fml{X}}\in\mbb{E}_{\fml{X}}$ represents the
constraint $\land_{i\in\fml{X}}x_i\in{E_i}$, and where $\mbb{E}_i$
denotes the set of values consistent with feature $i$ in path $R_k$.)
Thus, the condition of weak PAXp considers instead the following
probability:
\begin{equation} \label{eq:drsdt}
  \prob_{\mbf{x}}(\kappa(\mbf{x})=c\,|\,\mbf{x}_{\fml{X}}\in\mbb{E}_{\fml{X}})\ge\delta
\end{equation}

The rest of this section investigates the computation of path
probabilities in the case of path-based explanations.
For instance-based explanations, the definition of $n_{ik}$ needs to
be adapted. Let $P_t\in\fml{P}$ be the target path. (For example,
$P_t$ can be the path consistent with $\mbf{v}$.) Moreover, let
$R_K\in\fml{R}$ by some path in the decision tree.
For a feature $i$, let $E_{ik}$ denote the set of domain values of
feature $i$ that is consistent with path $R_k$. Hence, for path $R_k$,
we consider a literal $(x_i\in{E_{ik}})$. Similarly, let $E_{it}$
denote the set of domain values of feature $i$ that is consistent with
path $P_t$. Thus, for path $P_t$, we consider a literal
$(x_i\in{E_{it}})$.
Given the above, the value of $n_{ik}$ is now defined as follows:
\begin{enumerate}[nosep]
\item If $i$ is fixed:
  \begin{enumerate}[nosep]
  \item If $i$ is tested along $R_k$ and the value of $x_i$ is
    inconsistent with $E_{it}$, i.e.\ there exists a literal
    $(x_i\in\mbb{E}_i)\in\mrm{\Lambda}(R_k)$ and
    ${\mbb{E}_{it}}\cap{\mbb{E}_{ik}}=\emptyset$, then $n_{ik}=0$;
  \item If $i$ is tested along $R_k$ and the value of $x_i$ is
    consistent with $R_k$, i.e.\ there exists a literal
    $(x_i\in\mbb{E}_i)\in\mrm{\Lambda}(R_k)$ and
    ${\mbb{E}_{it}}\cap{\mbb{E}_{ik}}\not=\emptyset$, then
    $n_{ik}=1$.
  \item If $i$ is not tested along $R_k$, then $n_{ik}=1$.
  \end{enumerate}
\item Otherwise, $i$ is universal:
  \begin{enumerate}[nosep]
  \item If $i$ is tested along $R_k$, with some literal
    $x_i\in\mbb{E}_{ik}$, then $n_{ik}=|\mbb{E}_{ik}|$;
  \item If $i$ is not tested along $R_k$, then $n_{ik}=|\mbb{D}_i|$.
  \end{enumerate}
\end{enumerate}

Using the modified definition of $n_{ik}$, we can now
compute~\eqref{eq:probdt} as follows:
%
%
\begin{equation} \label{eq:probdt2}
  \prob_{\mbf{x}}(\kappa(\mbf{x})=c\,|\,\mbf{x}_{\fml{X}}\in\mbb{E}_{\fml{X}})\,=\,
  \nicefrac%
      {\sum_{P_k\in\fml{P}}\#(P_k;\fml{F}\setminus\fml{X},\mbf{v})}
      {\sum_{R_k\in\fml{R}}\#(R_k;\fml{F}\setminus\fml{X},\mbf{v})}
\end{equation}

The computation of probabilistic explanations proposed in the previous
sections can either assume instance-based or path-based explanations.
For consistency with the rest of the paper, we opted to investigate 
instance-based explanations. Changing the proposed algorithms to
consider instead path-based explanations would be straightforward, but
that is beyond the scope of this paper.

\jnoteF{The current contents on DTs basically mimics what we submitted
  to IJCAI'22. However, we can do more than that!\\
  The insight is that we can reason in terms of path-based
  explanations instead of instance-based explanations. For that, we
  can use the results from our paper on explaining
  DTs~\cite{iims-corr22}.
}

\section{Probabilistic Explanations for Naive Bayes Classifiers}
\label{sec:rsnbc}
This section investigates the computation of relevant sets 
in the  concrete case of NBCs.

\subsection{Explaining NBCs in Polynomial Time}  \label{sec:xlc}
This section overviews the approach proposed in~\cite{msgcin-nips20}
for computing AXp's for binary NBCs. The general idea is to reduce the
NBC problem into an Extended Linear Classifier (XLC) and then explain
the resulting XLC. 
Our purpose is to devise a new approach that builds on the XLC formulation 
to compute $\delta$-relevant sets for NBCs.  Hence, it is useful to recall 
first the translation of NBCs into XLCs and the extraction of AXp's from XLCs.

\paragraph{Extended Linear Classifiers.}
We consider an XLC with categorical features. 
(Recall that the present paper considers NBCs with binary classes and
categorical data.) 
Each feature $i \in \fml{F}$ has $x_i \in \{1, \dots, d_i\}$, (i.e.\ 
$\mbb{D}_i = \{1, \dots, d_i\}$).
Let,
\begin{equation} \label{eq:xlc01}
  \nu(\mbf{x})\triangleq%
  w_0 + 
  \sum\nolimits_{i\in\fml{F}}\sigma(x_i,v_i^1,v_i^2,\ldots,v_i^{d_i})
\end{equation}
$\sigma$ is a selector function that picks the value $v_i^{r}$ iff
$x_i$ takes value $r$.
Moreover, let us define the decision function, $\kappa(\mbf{x})=\oplus$
if $\nu(\mbf{x})>0$ and $\kappa(\mbf{x}) = \ominus $ if $\nu(\mbf{x})\le0$.

The reduction of a binary NBC, with categorical features, to an XLC 
is completed by setting:
$w_0\triangleq\lprob(\oplus)-\lprob(\ominus)$,
$v_i^1\triangleq\lprob(x_i=1|\oplus)-\lprob(x_i=1|\ominus)$,
$v_i^2\triangleq\lprob(x_i=2|\oplus)-\lprob(x_i=2|\ominus)$, 
$\dots$, $v_i^{d_i}\triangleq\lprob(x_i=d_i|\oplus)-\lprob(x_i=d_i|\ominus)$.
Hence, the argmax in~\eqref{eq:nbc5} is replaced with inequality to
get the following: 
%
%
\begin{align}\label{eq:nbc2xlc}
  \lprob(\oplus) - \lprob(\ominus) +  \sum\nolimits_{i=1}^{m} \sum\nolimits_{k=1}^{k=d_i} & (\lprob(x_i=k|\oplus)  -  
  \lprob(x_i=k|\ominus))  (x_i = k) > 0 
\end{align}

\begin{figure*}[t]
  \begin{subfigure}[b]{1.0\linewidth}
    \centering 
\renewcommand{\tabcolsep}{0.5em}
\renewcommand{\arraystretch}{1.225}
\begin{tabular}{ccccccccccc} \toprule
  $w_0$ &
  $v_1^1$ & $v_1^2$ &
  $v_2^1$ & $v_2^2$ &
  $v_3^1$ & $v_3^2$ &
  $v_4^1$ & $v_4^2$ &
  $v_5^1$ & $v_5^2$
  \\ \midrule
  -2.19 &
  -2.97 & 3.46 &  
  2.95 & -2.95 &
  0.4 & -2.83 &
  1.17 & -1.32 &
  -2.97 & 3.46
  \\ \bottomrule
\end{tabular}

    \caption{Example reduction of NBC to XLC (Example~\ref{ex:ex01:nbc})} \label{fig:nbc2xlc}
  \end{subfigure} 

  \bigskip

  \begin{subfigure}[b]{1.0\linewidth}
    \centering  \renewcommand{\tabcolsep}{0.5em}
\renewcommand{\arraystretch}{1.225}
%
\begin{tabular}{ccccccc} \toprule
  $\Gamma$ &    
  $\delta_1$ &  
  $\delta_5$ & 
  $\delta_2$ &  
  $\delta_3$ &  
  $\delta_4$ &  
  $\Phi$        
  \\ \cmidrule(lr){1-1} \cmidrule(lr){2-6} \cmidrule(lr){7-7}
  9.25  &
  6.43 &
  6.43 &
  5.90 &
  3.23 &
  2.49 &
  15.23
  \\ \bottomrule
\end{tabular}

    \caption{Computing $\delta_j$'s for the XLC 
      (Example~\ref{ex:ex02:nbc})} \label{fig:ex03:nbc}
  \end{subfigure}
  \caption{Values used in the running example (Example~\ref{ex:ex01:nbc}
    and Example~\ref{ex:ex02:nbc})} \label{fig:exs}
\end{figure*}

\begin{example} \label{ex:ex01:nbc}
  \autoref{fig:nbc2xlc} shows the resulting XLC formulation for the
  example in~\autoref{fig:ex02:nbc}. We also let $\lvf$ be associated
  with value 1 and $\lvt$ be associated with value 2, and $d_i=2$.
\end{example}

\paragraph{Explaining XLCs.}
We now describe how AXp's can be computed for XLCs. 
For a given instance $\mbf{x}=\mbf{a}$, define a \emph{constant} slack
(or gap) value $\Gamma$ given by,
\begin{equation} \label{eq:xlc02}
  \Gamma\triangleq\nu(\mbf{a})=%
                     \sum\nolimits_{i\in\fml{F}}\sigma(a_i,v_i^1,v_i^2,\ldots,v_i^{d_i})
\end{equation}
Computing an AXp corresponds to finding a 
subset-minimal set of literals $\fml{S}\subseteq\fml{F}$
such that~\eqref{eq:axp} holds, or alternatively,
\begin{equation} \label{eq:xlc03}
  \forall(\mbf{x}\in\mbb{F}).\bigwedge\nolimits_{i\in\fml{S}}(x_i=a_i) \ \limply \ \left(\nu(\mbf{x})>0\right)
\end{equation}
under the assumption that $\nu(\mbf{a})>0$. 
Thus, the purpose is to find  the \emph{smallest} slack
that can be achieved by allowing the feature not in $\fml{S}$ to
take any value (i.e. \emph{universal}/\emph{free} features), 
given that the literals in $\fml{S}$ are fixed by $\mbf{a}$ (i.e. 
$\bigwedge\nolimits_{i\in\fml{S}}(x_i=a_i)$).

Let $v_i^{\omega}$ denote the \emph{smallest} (or
\emph{worst-case}) value associated with $x_i$. 
Then, by letting every $x_i$ take \emph{any} value,
the \emph{worst-case} value of $\nu(\mbf{e})$ is,
\begin{equation}
  \Gamma^{\omega}=w_0+\sum\nolimits_{i\in\fml{F}}v_i^{\omega}
\end{equation}
Moreover, from~\eqref{eq:xlc02}, we have:  
$\Gamma=w_0+\sum_{i\in\fml{F}}v_i^{a_i}$.
The expression above can be rewritten
as follows,
\begin{equation}
  \begin{array}{rcl}
    \Gamma^{\omega} & = &
    w_0+\sum\nolimits_{i\in\fml{F}}v_i^{a_i}-\sum\nolimits_{i\in\fml{F}}(v_i^{a_i}-v_i^{\omega})\\[3.0pt]
    & = & \Gamma - \sum\nolimits_{i\in\fml{F}}\delta_i = -\Phi \\ 
  \end{array}
\end{equation}
where $\delta_i\triangleq{v_i^{a_i}}-{v_i^{\omega}}$,
and
$\Phi\triangleq\sum_{i\in\fml{F}}\delta_i-\Gamma=-\Gamma^{\omega}$.
Recall the goal is to find a  subset-minimal set $\fml{S}$ such
that the prediction is still $c$ 
(whatever the values of the other features):
\begin{equation} \label{eq:xlc04}
  w_0 + \sum\nolimits_{i \in \fml{S}} v_i^{a_i} + \sum\nolimits_{i \notin \fml{S}} v_i^{\omega} =
  -\Phi + \sum\nolimits_{i\in\fml{S}}\delta_i > 0
\end{equation}
In turn,~\eqref{eq:xlc04} can be represented as the following
knapsack problem~\cite{pisinger-bk04}:
\begin{equation} \label{eq:xlc05}
  \begin{array}{lcl}
    \tn{min}  & \quad & \sum_{i=1}^{m}p_i \\[4.5pt]
    \tn{such that} & \quad &
    \sum_{i=1}^{m}\delta_ip_i>\Phi \\[2.5pt]
    & & p_i\in\{0,1\}\\
  \end{array}
\end{equation}
where the variables $p_i$ assigned value 1 denote the indices included
in $\fml{S}$.
Note that, the fact that the coefficients
in the cost function are all equal to 1 makes the problem solvable in
log-linear time~\cite{msgcin-nips20}.
%
%

\begin{example} \label{ex:ex02:nbc}
  \autoref{fig:ex03:nbc} shows the values used for computing explanations
  for the example in~\autoref{fig:ex02:nbc}.
  For this example, the sorted $\delta_j$'s become
  $\langle\delta_1,\delta_5,\delta_2,\delta_4,\delta_3\rangle$.
  By picking $\delta_1$, $\delta_2$ and $\delta_5$, we ensure that 
  the prediction  is $\oplus$, independently of the values assigned 
  to features 3 and 4.
  Thus $\{1, 2, 5\}$ is an AXp for the NBC shown
  in~\autoref{fig:ex01:nbc}, with the input instance
  $(v_1,v_2,v_3,v_4,v_5)=(\lvt,\lvf,\lvf,\lvf,\lvt)$.
  (It is easy to observe that
  $\kappa(\lvt,\lvf,\lvf,\lvt,\lvt)=\kappa(\lvt,\lvf,\lvt,\lvf,\lvt)=\kappa(\lvt,\lvf,\lvt,\lvt,\lvt)%
  =\oplus$.)
\end{example}

The next section introduces a pseudo-polynomial time algorithm for
computing locally-minimal PAXp's.
Although locally-minimal PAXp's are not necessarily
subset/cardinality minimal, the experiments (see~\cref{sec:res}) show
that the proposed approach computes (in pseudo-polynomial time)
succinct~\cite{miller-pr56} and highly precise locally-minimal
explanations.

\subsection{Counting Models of XLCs} 

Earlier
work~\cite{dyer-stoc03,klivans-focs11,weimann-icalp18,tomescu-ic19}
proposed the use of dynamic programming (DP) for approximating the
number of feasible solutions of the 0-1 knapsack constraint, i.e.\ the
\#knapsack problem.
Here we propose an extension of the basic formulation, to allow
counting feasible solutions of XLCs.

%
We are interested in the number of solutions of,
\begin{equation} \label{eq:dp01}
  \sum\nolimits_{j\in\fml{F}}\sigma(x_j,v_j^1,v_j^2,\ldots,v_j^{d_j})>-w_0
\end{equation}
where we assume all $v^i_j$ to be integer-valued and non-negative
(e.g.\ this is what our translation from NBCs to XLCs yields after scaling and rounding).
Moreover,~\eqref{eq:dp01} can be written as follows:

\begin{equation} \label{eq:dp02}
  \sum\nolimits_{j\in\fml{F}}\sigma(x_j,-v_j^1,-v_j^2,\ldots,-v_j^{d_j})<{w_0}
\end{equation}
which reveals the relationship with the standard knapsack constraint.

For each $j$, let us sort the $-v_j^i$ in non-decreasing order,
collapsing duplicates, and counting the number of duplicates,
obtaining two sequences:
\[
\begin{array}{l}
  \langle w^1_j,\ldots,w^{d^{'}_j}_j\rangle \\[4.5pt]
  \langle n^1_j,\ldots,n^{d^{'}_j}_j\rangle \\
\end{array}
\]
such that $w^1_j<w^2_j<\ldots<w^{d^{'}_j}_j$ and each $n^i_j\ge1$
gives the number of repetitions of weight $w^i_j$.


\paragraph{Counting.}
Let $C(k,r)$ denote the number of solutions of~\eqref{eq:dp02} when
the subset of features considered is $\{1,\ldots,k\}$ and the sum of
picked weights is at most $r$.
To define the solution for the first $k$ features, taking into account
the solution for the first $k-1$ features, we must consider that the
solution for $r$ can be obtained due to \emph{any} of the possible
values of $x_j$.
As a result, for an XLC 
the general recursive definition of $C(k,r)$ becomes,
\[
C(k,r)= \sum\nolimits_{i=1}^{d^{'}_k}n^i_{k}\times{C}(k-1,r-w^i_{k})
\]

Moreover, $C(1,r)$ is given by,
\[
C(1,r)=\left\{%
\begin{array}{lcl}
  0 & \quad & \tn{if $r<w^1_1$} \\[4.5pt]
  n^1_1 & \quad & \tn{if $w^1_1\le r<w^2_1$} \\[5.5pt]
  n^1_1+n^2_1 & \quad & \tn{if $w^2_1\le r<w^3_1$} \\[2pt]
  \ldots \\[2pt]
  \sum\nolimits_{i=1}^{d^{'}_1}n^i_1 & \quad & \tn{if $w^{d^{'}_1}_1\le r$} \\[2pt]
\end{array}
\right.
\]
In addition, if $r<0$, then $C(k,r)=0$, for $k=1,\ldots,m$.
Finally, the dimensions of the $C(k,r)$ table are as follows:
\begin{enumerate}
\item The number of rows is $m$.
\item The (worst-case) number of columns is given by:
  \begin{equation} \label{eq:wval}
    W'=\sum_{j\in\fml{F}}{n^{d'_j}_j}\times{w^{d'_j}_j}
  \end{equation}
  $W'$ represents the largest possible value, in theory. However, in
  practice, it suffices to set the number of columns to $W=w_0+T$,
  which is often much smaller than $W'$.
\end{enumerate}


\begin{example} \label{ex:05}
  Consider the following problem. There are 4 features,
  $\fml{F}=\{1,2,3,4\}$. Each feature $j$ takes values in
  $\{1,2,3\}$, i.e.\ $x_j\in\{1,2,3\}$. The prediction should be 1
  when the sum of the values of the $x_j$ variables is not less than 8.
  We set $w_0 = -7$, and get 
  the formulation,
  \[
  \sum_{j\in\{1,2,3,4\}}\sigma(x_j,1,2,3)>7
  \]
  where each $x_j$ picks value in $\{ 1,2,3 \}$.
  We translate to the extended knapsack formulation and obtain:
  \[
  \sum_{j\in\{1,2,3,4\}}\sigma(x_j,-1,-2,-3)<-7
  \]
  We require the weights to be integer and non-negative, and so we add 
  to each $w_j^k$ the complement of the most negative $w_j^k$ plus 1. 
  Therefore, we add +4 to each $j$ and +16 to right-hand side of the inequality. 
  Thus, we get
  \[
  \sum_{j\in\{1,2,3,4\}}\sigma(x_j,3,2,1)<9
  \]
  For this formulation, $x_j=1$ picks value $3$. (For example, we can
  pick two (but not three) $x_j$ with value 1, which is as expected.)
  
  In this case, the DP table size will be $4\times12$, even though we
  are interested in entry $C(4,8)$.
  \begin{table}[t]
  \begin{center}
    \renewcommand{\arraystretch}{1.125}
    \renewcommand{\tabcolsep}{0.5em}
    \begin{tabular}{cccccccccccccc}\toprule
      \multirow{2}{*}{$k$} & \multicolumn{13}{c}{$r$} \\ \cmidrule(lr){2-14}
      & 0 & 1 & 2 & 3 & 4 & 5 & 6 & 7 & 8 & 9 & 10 & 11 & 12
      \\ \midrule
      1 & 0 & 1 & 2 & 3 & 3 & 3 & 3 & 3 & 3 & -- & -- & -- & --
      \\ \midrule
      2 & 0 & 0 & 1 & 3 & 6 & 8 & 9 & 9 & 9 & -- & -- & -- & --
      \\ \midrule
      3 & 0 & 0 & 0 & 1 & 4 & 10 & 17 & 23 & 16 & -- & -- & -- & --
      \\ \midrule
      4 & 0 & 0 & 0 & 0 & 1 & 5 & 15 & 31 & 50 & -- & -- & -- & --
      \\ \bottomrule
    \end{tabular}
    \caption{DP table for Example~\ref{ex:05}} \label{tab:ex05}
  \end{center}
\end{table}

  \autoref{tab:ex05} shows the DP table, and the number of solutions for
  the starting problem, i.e.\ there are 50 combinations of values
  whose sum is no less than 8.
\end{example}

By default, the dynamic programming formulation assumes that features
can take any value. However, the same formulation can be adapted when
features take a given (fixed) value. Observe that this will be
instrumental for computing $\apaxp$'s.

Consider that feature $k$ is fixed to value $l$. Then, the formulation
for $C(k,r)$ becomes:
\[
C(k,r)=n^l_k\times{C}(k-1,r-w^l_k)={C}(k-1,r-w^l_k)
\]
Given that $k$ is fixed, then we have $n^l_k=1$.
%

\begin{example} \label{ex:05b}
  For Example~\ref{ex:05}, assume that $x_2=1$ and $x_4=3$.
  Then, the constraint we want to satisfy is:
  \[
  \sum_{j\in\{1,3\}}\sigma(x_j,1,2,3)>3
  \]
  Following a similar transformation into knapsack formulation, we get 
  \[ \sum_{j\in\{1,3\}}\sigma(x_j,3,2,1) < 5 \]
  
  After updating the DP table, with fixing features 2 and 4, we get 
  the DP table shown in~\autoref{tab:ex05b}. As a result,  
  we can conclude that the number of solutions is 6.
\end{example}
\begin{table}[t]
  \begin{center}
    \renewcommand{\arraystretch}{1.125}
    \renewcommand{\tabcolsep}{0.5em}
    \begin{tabular}{cccccccccccccc}\toprule
      \multirow{2}{*}{$k$} & \multicolumn{13}{c}{$r$} \\ \cmidrule(lr){2-14}
      & 0 & 1 & 2 & 3 & 4 & 5 & 6 & 7 & 8 & 9 & 10 & 11 & 12
      \\ \midrule
      1 & 0 & 1 & 2 & 3 & 3 & 3 & 3 & 3 & 3 & -- & -- & -- & --
      \\ \midrule
      2 & 0 & 0 & 0 & 0 & 1 & 2 & 3 & 3 & 3 & -- & -- & -- & --
      \\ \midrule
      3 & 0 & 0 & 0 & 0 & 0 & 1 & 3 & 6 & 8 & -- & -- & -- & --
      \\ \midrule
      4 & 0 & 0 & 0 & 0 & 0 & 0 & 1 & 3 & 6 & -- & -- & -- & --
      \\ \bottomrule
    \end{tabular}
    \caption{DP table for Example~\ref{ex:05b}} \label{tab:ex05b}
  \end{center}
\end{table}

The table $C(k,r)$ can be filled out in pseudo-polynomial time.
The number of rows is $m$. The number of columns is
$W$ (see~\eqref{eq:wval}). Moreover, the computation of each entry
uses the values of at most $m$ other entries.
Thus, the total running time is:
$\Theta(m^2\times{W})$.

\paragraph{From NBCs to positive integer knapsacks.}
To assess heuristic explainers, we consider NBCs, and use a
standard transformation from probabilities to positive real
values~\cite{park-aaai02}.
Afterwards, we convert the real values to integer values by scaling
the numbers. However, to avoid building a very large DP table, we
implement the following optimization.
The number of decimal places of the probabilities is reduced while
there is no decrease in the accuracy of the classifier both on
training and on test data. In our experiments, we observed that there
is no loss of accuracy if four decimal places are used, and that there
is a negligible loss of accuracy with three decimal places.

\paragraph{Assessing  explanation precision.}
Given a Naive Bayes classifier, expressed as an XLC, we can assess 
explanation accuracy in pseudo-polynomial time.
Given an instance $\mbf{v}$, a prediction $\kappa(\mbf{v})=\oplus$, and 
an approximate explanation $\mbf{S}$, we can use the approach described 
in this section to count the number of 
instances consistent with the explanation for which the prediction
remains unchanged (i.e.\  number of points $\mbf{x}\in\mbb{F}$ s.t. 
$(\kappa(\mbf{x})=\kappa(\mbf{v})\land(\mbf{x}_{\fml{S}}=\mbf{v}_{\fml{S}}))$). 
Let this number be $n_{\oplus}$ (given the
assumption that the prediction is $\oplus$). Let the number of
instances with a different prediction
be $n_{\ominus}$~\footnote{%
  Recall that we are assuming that $\fml{K}=\{\ominus,\oplus\}$.}. 
Hence, the conditional probability~\eqref{eq:pdefs} can be defined, in 
the case of NBCs, as follow:

\[ %
	\prob_{\mbf{x}}(\kappa(\mbf{x})=\oplus\,|\,\mbf{x}_{\fml{S}}=\mbf{v}_{\fml{S}})  = 
 	\frac{ n_{\oplus} }{  |\{\mbf{x}\in\mbb{F}:(\mbf{x}_{\fml{S}}=\mbf{v}_{\fml{S}})\}| } %
\]

Observe that the numerator %
$ |\{\mbf{x}\in\mbb{F}:\kappa(\mbf{x})=\oplus\land(\mbf{x}_{\fml{S}}=\mbf{v}_{\fml{S}})\}|$ is 
expressed  by  the number of models $n_\oplus$, i.e.\ the points
$\mbf{x}$ in feature space that are consistent with $\mbf{v}$ given
$\fml{S}$ and with prediction $\oplus$. Further, we have 

\begin{align*}
\prob_{\mbf{x}}(\kappa(\mbf{x})=\oplus\,|\,\mbf{x}_{\fml{S}}=\mbf{v}_{\fml{S}})  = &
 1 - \prob_{\mbf{x}}(\kappa(\mbf{x})=\ominus\,|\,\mbf{x}_{\fml{S}}=\mbf{v}_{\fml{S}})  \\
 = & 1 - \frac{ n_{\ominus} }{  |\{\mbf{x}\in\mbb{F}:(\mbf{x}_{\fml{S}}=\mbf{v}_{\fml{S}})\}| } 
\end{align*} 

%

where $n_\ominus = |\{\mbf{x}\in\mbb{F}:\kappa(\mbf{x})=\ominus\land(\mbf{x}_{\fml{S}}=\mbf{v}_{\fml{S}})\}|$.

\subsection{Computing Locally-Minimal PAXp's for NBCs}
Similarly to the case of DTs, we can also use \cref{alg:lmpaxp} for
computing locally-minimal PAXp's in the case of NBCs. The only
difference is in the definition of the predicate $\wpaxp$.
For NBCs, the procedure $\isweakpaxp$ implements the pseudo-polynomial 
approach, described in the previous section,  for model counting.
Hence, in the case of NBCs, it is implicit that the DP table is
updated at each iteration of the main loop of~\cref{alg:lmpaxp}.
More specifically, when a feature $i$ is just set to universal, its
associated cells $C(i,r)$ are recalculated such that 
$C(k,r)= \sum\nolimits_{i=1}^{d^{'}_k}n^i_{k}\times{C}(k-1,r-w^i_{k})$;  
and when $i$ is fixed, i.e.\ $i \in \fml{S}$, then $C(i,r) =
C(i-1,r-v^j_i)$ where
$v^j_i\triangleq\lprob(v_i=j|c)-\lprob(v_i=j|\neg c)$.
Furthermore, we point out that in our experiments, $\fml{S}$ is
initialized to an AXp $\fml{X}$  that we compute initially for all
tested instances using the outlined (polynomial) algorithm
in~\cref{sec:xlc}. 
It is easy to observe that features not belonging to $\fml{X}$ do not
contribute 
in the decision of $\kappa(\mbf{v})$ (i.e.\ their removal does not change 
the value of $n_\ominus$ that is equal to zero) and thus can be set universal 
at the initialisation step, which allows us to improve the performance of 
\cref{alg:lmpaxp}.   

Moreover, we apply a heuristic order over $\fml{S}$ that
aims to remove earlier less relevant features and thus to produce  
shorter  approximate explanations.  Typically, we order $\fml{S}$ 
following the increasing order of $\delta_i$ values, namely the reverse 
order applied to compute the AXp.    
Preliminary experiments conducted using a (naive heuristic)
lexicographic order over the features produced less succinct explanations. 

Finally, notice that~\cref{alg:lmpaxp} can be used to compute 
an AXp, i.e.\ locally-minimal PAXp when $\delta = 1$. Nevertheless,
the polynomial time algorithm for computing AXp's proposed
in~\cite{msgcin-nips20} remains a better choice to use in case of
AXp's than \cref{alg:lmpaxp} which runs in pseudo-polynomial time.

\begin{example} \label{ex:06}
Let us consider again the NBC of the running example 
(Example~\ref{ex:ex01:nbc}) and $\mbf{v}=(\lvt,\lvf,\lvf,\lvf,\lvt)$. 
The corresponding XLC is shown in~\autoref{fig:ex03:nbc} 
(Example~\ref{ex:ex02:nbc}). Also, consider the AXp $\{1,  2, 5\}$ of 
$\mbf{v}$ and $\delta = 0.85$. 
The resulting DP table for $\fml{S} = \{1,  2, 5\}$ is shown 
in~\autoref{tab:ex06a}. Note that for illustrating small tables, we set 
the number of decimal places to zero (greater number of decimal 
places, i.e.\ 1,2, etc, were tested and returned the same results).
(Also, note that the DP table reports  ``\textemdash'' if the cell is not 
calculated during the running of \cref{alg:lmpaxp}.)    
Moreover, we convert  the probabilities into positive integers, 
so we add to each $w_j^k$ the complement of the most 
negative $w_j^k$ plus 1. 
The resulting weights are shown in~\autoref{fig:ex06}. 
Thus, we get 
$\sum\nolimits_{i\in\{1,2,3,4,5\}} \sigma(x_i, w_i^1, w_i^2) < 17$. 
%
Observe that the number of models $n_\oplus = C(5,16)$, and 
$C(5,16)$ is calculated using $C(4,16-w^2_5)=C(4,15)$, i.e.\ $C(4,15)=C(5,16)$ 
(feature 5 is fixed, so it is allowed to take only the value $w^2_5=1$). Next, 
$C(4, 15)  = C(3, 15-w^1_4 )  + C(3, 15-w^2_4) = C(3, 12)  + C(3, 14)$ 
(feature 4 is free, so it is allowed to take any value of $\{w^1_4,w^2_4\}$); 
the recursion ends when k=1, namely for $C(1,5) = C(2,6) = n^2_1 = 1$, 
$C(1,7) = C(2,7) = n^2_1 = 1$, $C(1,8) = C(2,8) = n^2_1 = 1$ and 
$C(1,10) = C(2,11) = n^2_1 = 1$ (feature 1 is fixed and takes value $w^2_1$).
Next, \autoref{tab:ex06b} (resp.\  \autoref{tab:ex06c} and \autoref{tab:ex06d}) report 
the resulting DP table for $\fml{S} = \{2,5\}$ (resp.\  $\fml{S} = \{1,5\}$ and 
$\fml{S} = \{1\}$). 
It is easy to confirm that after dropping feature 2, the precision 
of $\fml{S}= \{1,5\}$ becomes $87.5\%$, i.e.\  $\frac{7}{8} = 0.875 > \delta$.
Furthermore, observe that the resulting  $\fml{S}$ when dropping feature 1 or 
2 and 5, are not weak PAXp's, namely,  the precision of $\{2,5\}$  is 
$\frac{6}{8} = 0.75 < \delta$ and the precision of $\{1\}$ is 
$\frac{9}{16} = 0.5625 < \delta$. 
In summary, \cref{alg:lmpaxp} starts with $\fml{S} = \{1, 2,5\}$, then at  
iteration~\#1, feature 1 is tested and since  $\{2,5\}$ is not a weak
PAXp then 1 is kept in $\fml{S}$; at iteration~\#2, feature 2 is
tested and since $\{1,5\}$ is a weak PAXp, then $\fml{S}$ is updated
(i.e.\ $\fml{S}= \{1,5\}$); at iteration~\#3, feature 5 is tested and
since  $\{1\}$ is not a weak PAXp, then  5 is saved in $\fml{S}$.
As a result, the computed locally-minimal PAXp is $\{1,5\}$.
\end{example}

We underline that we could initialize $\fml{S}$ to $\fml{F}$, in which 
case the number of models would be 1. However, we opt instead 
to always start from an AXp. 
In the example, the AXp is $\{1,2,5\}$  which, because it is an AXp, 
the number of models must be 4 (i.e. $2^2$, since two features are free).

For any proper subset of the AXp, with $r$ free variables, it must be the case 
that the number of models is strictly less than $2^r$. Otherwise, 
we would have an AXp as a proper subset of another AXp; 
but this would contradict the definition of AXp. 
The fact that the number of models is strictly less than $2^r$ is confirmed 
by the examples of subsets considered. 
It must also be the case that if  $\fml{S}'\subseteq\fml{S}$, then the number 
of models of $\fml{S}'$ must not exceed the number of models of  $\fml{S}$. 
So, we can argue that there is monotonicity in the number of models, but not 
in the precision.

\begin{figure}[b]
  \centering
  
\renewcommand{\tabcolsep}{0.3em}
\renewcommand{\arraystretch}{1.225}
\begin{tabular}{ccccccccccc} \toprule
  $W$ &
  $w_1^1$ & $w_1^2$ &
  $w_2^1$ & $w_2^2$ &
  $w_3^1$ & $w_3^2$ &
  $w_4^1$ & $w_4^2$ &
  $w_5^1$ & $w_5^2$
  \\ \cmidrule(lr){2-3} \cmidrule(lr){4-5} \cmidrule(lr){6-7} \cmidrule(lr){8-9} \cmidrule(lr){10-11}
  16 &
  7 & 1 &  
  1 & 6 &
  3 & 6 &
  1 & 3 &
  7 & 1
  \\ \bottomrule
\end{tabular}

  \caption{\#knapsack problem of Example~\ref{ex:06}} \label{fig:ex06} 
\end{figure}
%

\begin{table*}[tb]
  \begin{center}
    \renewcommand{\arraystretch}{0.9}
    \renewcommand{\tabcolsep}{0.4em}
    \begin{tabular}{cccccccccccccccccc}\toprule
      \multirow{2}{*}{$k$} & \multicolumn{17}{c}{$r$} \\ \cmidrule(lr){2-18}
      & 0 & 1 & 2 & 3 & 4 & 5 & 6 & 7 & 8 & 9 & 10 & 11 & 12 & 13 & 14 & 15 & 16
	\\ \midrule
1 & 0 & \textemdash & \textemdash & \textemdash & \textemdash & 1 & \textemdash & 1 & 1 & \textemdash & 1 	& \textemdash & \textemdash & \textemdash & \textemdash & \textemdash & \textemdash\\
	\midrule
2 & 0 & \textemdash & \textemdash & \textemdash & \textemdash & \textemdash & 1 & \textemdash & 1 & 1 & 	\textemdash & 1 & \textemdash & \textemdash & \textemdash & \textemdash & \textemdash\\
	\midrule
3 & 0 & \textemdash & \textemdash & \textemdash & \textemdash & \textemdash & \textemdash & \textemdash 	& \textemdash & \textemdash & \textemdash & \textemdash & 2 & \textemdash & 2 & \textemdash & \textemdash\\
	\midrule
4 & 0 & \textemdash & \textemdash & \textemdash & \textemdash & \textemdash & \textemdash & \textemdash 	& \textemdash & \textemdash & \textemdash & \textemdash & \textemdash & \textemdash & \textemdash & 4 & 		\textemdash\\
	\midrule
5 & 0 & \textemdash & \textemdash & \textemdash & \textemdash & \textemdash & \textemdash & \textemdash 	& \textemdash & \textemdash & \textemdash & \textemdash & \textemdash & \textemdash & \textemdash & \textemdash & 4\\
	\bottomrule
    \end{tabular}
    \caption{DP table for $\fml{S} = \{1, 2, 5\}$ (Example~\ref{ex:06})} \label{tab:ex06a}
  \end{center}
\end{table*}

\begin{table*}[tb]
  \begin{center}
    \renewcommand{\arraystretch}{0.9}
    \renewcommand{\tabcolsep}{0.4em}
    \begin{tabular}{cccccccccccccccccc}\toprule
      \multirow{2}{*}{$k$} & \multicolumn{17}{c}{$r$} \\ \cmidrule(lr){2-18}
      & 0 & 1 & 2 & 3 & 4 & 5 & 6 & 7 & 8 & 9 & 10 & 11 & 12 & 13 & 14 & 15 & 16
      \\ \midrule
1 & 0 & \textemdash & \textemdash & \textemdash & \textemdash & 1 & \textemdash & 1 & 2 & \textemdash & 2 & \textemdash & \textemdash & \textemdash & \textemdash & \textemdash & \textemdash\\
\midrule
2 & 0 & \textemdash & \textemdash & \textemdash & \textemdash & \textemdash & 1 & \textemdash & 1 & 2 & \textemdash & 2 & \textemdash & \textemdash & \textemdash & \textemdash & \textemdash\\
\midrule
3 & 0 & \textemdash & \textemdash & \textemdash & \textemdash & \textemdash & \textemdash & \textemdash & \textemdash & \textemdash & \textemdash & \textemdash & 3 & \textemdash & 3 & \textemdash & \textemdash\\
\midrule
4 & 0 & \textemdash & \textemdash & \textemdash & \textemdash & \textemdash & \textemdash & \textemdash & \textemdash & \textemdash & \textemdash & \textemdash & \textemdash & \textemdash & \textemdash & 6 & \textemdash\\
\midrule
5 & 0 & \textemdash & \textemdash & \textemdash & \textemdash & \textemdash & \textemdash & \textemdash & \textemdash & \textemdash & \textemdash & \textemdash & \textemdash & \textemdash & \textemdash & \textemdash & 6\\
\bottomrule
    \end{tabular}
    \caption{DP table for $\fml{S} = \{2, 5\}$ (Example~\ref{ex:06})} \label{tab:ex06b}
  \end{center}
\end{table*}

\begin{table*}[tb]
  \begin{center}
    \renewcommand{\arraystretch}{0.9}
    \renewcommand{\tabcolsep}{0.4em}
    \begin{tabular}{cccccccccccccccccc}\toprule
      \multirow{2}{*}{$k$} & \multicolumn{17}{c}{$r$} \\ \cmidrule(lr){2-18}
      & 0 & 1 & 2 & 3 & 4 & 5 & 6 & 7 & 8 & 9 & 10 & 11 & 12 & 13 & 14 & 15 & 16
      \\ \midrule
1 & 0 & \textemdash & 1 & 1 & \textemdash & 1 & \textemdash & 1 & 1 & \textemdash & 1 & \textemdash & \textemdash & \textemdash & \textemdash & \textemdash & \textemdash\\
\midrule
2 & 0 & \textemdash & \textemdash & \textemdash & \textemdash & \textemdash & 1 & \textemdash & 2 & 2 & \textemdash & 2 & \textemdash & \textemdash & \textemdash & \textemdash & \textemdash\\
\midrule
3 & 0 & \textemdash & \textemdash & \textemdash & \textemdash & \textemdash & \textemdash & \textemdash & \textemdash & \textemdash & \textemdash & \textemdash & 3 & \textemdash & 4 & \textemdash & \textemdash\\
\midrule
4 & 0 & \textemdash & \textemdash & \textemdash & \textemdash & \textemdash & \textemdash & \textemdash & \textemdash & \textemdash & \textemdash & \textemdash & \textemdash & \textemdash & \textemdash & 7 & \textemdash\\
\midrule
5 & 0 & \textemdash & \textemdash & \textemdash & \textemdash & \textemdash & \textemdash & \textemdash & \textemdash & \textemdash & \textemdash & \textemdash & \textemdash & \textemdash & \textemdash & \textemdash & 7\\
\bottomrule
    \end{tabular}
    \caption{DP table for $\fml{S} = \{1, 5\}$ (Example~\ref{ex:06})} \label{tab:ex06c}
  \end{center}
\end{table*}

\begin{table*}[tb]
  \begin{center}
    \renewcommand{\arraystretch}{0.9}
    \renewcommand{\tabcolsep}{0.4em}
    \begin{tabular}{cccccccccccccccccc}\toprule
      \multirow{2}{*}{$k$} & \multicolumn{17}{c}{$r$} \\ \cmidrule(lr){2-18}
      & 0 & 1 & 2 & 3 & 4 & 5 & 6 & 7 & 8 & 9 & 10 & 11 & 12 & 13 & 14 & 15 & 16
      \\ \midrule
1 & 0 & 0 & 1 & 1 & 1 & 1 & \textemdash & 1 & 1 & \textemdash & 1 & \textemdash & \textemdash & \textemdash & \textemdash & \textemdash & \textemdash\\
\midrule
2 & 0 & \textemdash & 0 & 1 & \textemdash & 1 & 1 & \textemdash & 2 & 2 & \textemdash & 2 & \textemdash & \textemdash & \textemdash & \textemdash & \textemdash\\
\midrule
3 & 0 & \textemdash & \textemdash & \textemdash & \textemdash & \textemdash & 1 & \textemdash & 1 & \textemdash & \textemdash & \textemdash & 3 & \textemdash & 4 & \textemdash & \textemdash\\
\midrule
4 & 0 & \textemdash & \textemdash & \textemdash & \textemdash & \textemdash & \textemdash & \textemdash & \textemdash & 2 & \textemdash & \textemdash & \textemdash & \textemdash & \textemdash & 7 & \textemdash\\
\midrule
5 & 0 & \textemdash & \textemdash & \textemdash & \textemdash & \textemdash & \textemdash & \textemdash & \textemdash & \textemdash & \textemdash & \textemdash & \textemdash & \textemdash & \textemdash & \textemdash & 9\\
\bottomrule
    \end{tabular}
    \caption{DP table for $\fml{S} = \{1\}$ (Example~\ref{ex:06})} \label{tab:ex06d}
  \end{center}
\end{table*}

\section{Probabilistic Explanations for Other Families of Classifiers}
\label{sec:rsxtra}


This section investigates additional families of classifiers, namely
those either based on propositional languages~\cite{hiicams-aaai22} or
based on graphs~\cite{hiims-kr21}. It should be noted that some
families of classifiers can be viewed as both propositional languages
and as graphs.

\jnoteF{The goal is to argue to NP-membership in the following families
  of classifiers:
  \begin{enumerate}
  \item SDDs;
  \item d-DNNF's;
  \item XpG's (subject to standard assumptions;
    see~\cite{hiims-kr21});
  \item Conjecture: GDFs', again subject to standard assumptions.
  \end{enumerate}
  See below proposed organization!
}


\subsection{Propositional Classifiers}

This section considers the families of classifiers represented as
propositional languages and studied in~\cite{hiicams-aaai22}. Examples include classifiers based on d-DNNFs and SDDs.
  
\begin{proposition} \label{prop:wkPAXPP}
For any language that allows conditioning (\tbf{CD}) in polynomial time,
a locally-minimal PAXp can be found in polynomial time (for all $\delta \in [0,1]$) 
if and only if the language 
allows \emph{counting} (\tbf{CT}) in polynomial time.
\end{proposition}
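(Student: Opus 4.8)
The plan is to route both directions through the following observation: given polynomial-time conditioning, evaluating the predicate $\wpaxp$ and answering the query \tbf{CT} are polynomial-time inter-reducible. Write the classifier as a sentence $\Delta\in\tbf{L}$ over $X$ whose models are exactly the inputs mapped to the target class $c$; for a set $\fml{X}$ and the term $\rho=\bigwedge_{i\in\fml{X}}(x_i=v_i)$, \tbf{CD} yields $\Delta|_\rho\in\tbf{L}$ over the $m-|\fml{X}|$ remaining variables, and
\[
\prob_{\mbf{x}}(\kappa(\mbf{x})=c\,\mid\,\mbf{x}_{\fml{X}}=\mbf{v}_{\fml{X}})=\frac{\#(\Delta|_\rho)}{2^{\,m-|\fml{X}|}} ,
\]
so one conditioning, one model count, and a division decide $\wpaxp(\fml{X};\ldots)$; conversely $\#\Delta=2^{m}\cdot\prob_{\mbf{x}}(\Delta(\mbf{x})=1)$ is exactly the value of the left-hand side of \eqref{eq:wpaxp} at $\fml{X}=\emptyset$, so a polynomial procedure for that probability yields \tbf{CT}. (If the target class is $0$ instead of $1$ this only replaces a count by $2^{m-|\fml{X}|}$ minus that count, and the degenerate cases $\Delta\equiv\top$, $\Delta\equiv\bot$ are handled directly.)

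For the ($\Leftarrow$) direction I would simply run \cref{alg:lmpaxp}: it performs at most $m$ iterations, each a single call to $\wpaxp(\fml{S}\setminus\{i\};\ldots)$, which by the paragraph above costs one \tbf{CD}, one \tbf{CT} and a division; hence a locally-minimal PAXp is produced in polynomial time, for every $\delta\in[0,1]$.

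For the ($\Rightarrow$) direction I would reduce \tbf{CT} to the locally-minimal-PAXp problem. By the first paragraph it is enough to compute $\prob_{\mbf{x}}(\Delta(\mbf{x})=1)$ for an arbitrary $\Delta\in\tbf{L}$; as this number lies in $\{j/2^{m}:0\le j\le 2^{m}\}$, a binary search with $O(m)$ queries of the form ``is $\emptyset$ a weak PAXp of $(\Delta,\cdot,\delta)$?'' (each $\delta$ needing only $O(m)$ bits) determines it. Each such query I would answer with a single call to the locally-minimal-PAXp procedure on a gadget classifier $\kappa^\star\in\tbf{L}$, built from $\Delta$ in polynomial time using closure of $\tbf{L}$ under conditioning and under decomposable conjunction with small threshold/cardinality gadgets over fresh variables (together with a masking of the variables of $\Delta$), and calibrated -- along with a recalibrated threshold $\delta^\star$ -- so that $\kappa^\star$ has an explicit positive instance independent of any model of $\Delta$, so that $\wpaxp$ is monotone for $\kappa^\star$ (whence $\kappa^\star$ has a \emph{unique} locally-minimal PAXp), and so that this unique locally-minimal PAXp equals $\emptyset$ exactly when $\prob_{\mbf{x}}(\Delta(\mbf{x})=1)\ge\delta$; the set returned by the oracle then answers the query.

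The hard part will be the monotonicity requirement. As stressed after \eqref{eq:lmpaxp}, $\wpaxp$ is \emph{not} monotone in general, so the set the oracle returns need not be $\emptyset$ even when $\emptyset$ is a weak PAXp, and no amount of binary search over $\delta$ fixes this by itself: the gadget must make $\wpaxp$ monotone, i.e.\ guarantee that conditioning on any single feature value of $\kappa^\star$ never decreases the conditional probability of its predicted class. Fresh padding variables can be made monotone by construction, but conditioning on the original variables of $\Delta$ can move $\prob_{\mbf{x}}(\Delta(\mbf{x})=1\mid\cdot)$ in either direction, so those variables must be prevented from ever being individually -- or in small subsets -- decisive, which is the role of the masking, all while keeping $\kappa^\star$ expressible in $\tbf{L}$ in polynomial time and keeping its model count a known function of $\#\Delta$. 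Producing a construction that achieves these three things simultaneously is the step I expect to be delicate.
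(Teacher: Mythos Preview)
Your $(\Leftarrow)$ direction is the paper's argument verbatim.

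For $(\Rightarrow)$ the paper is much more direct. It simply notes that $\emptyset$ is a locally-minimal PAXp if and only if it is a weak PAXp (there being nothing to delete), i.e.\ if and only if the model count of $\kappa$ is at least $\delta\cdot 2^m$; a binary search on $\delta$ then recovers the count and hence \tbf{CT}. No gadget is built. In fact you already have this observation in your opening paragraph --- the paper just stops there.

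The gap between the two arguments is the reading of the hypothesis. You treat ``a locally-minimal PAXp can be found in polynomial time'' as a black-box oracle that outputs \emph{some} LmPAXp, and you correctly note that non-monotonicity of $\wpaxp$ means such an oracle need not return $\emptyset$ even when $\emptyset$ is a weak PAXp (one can indeed construct $\kappa$ for which $\emptyset$ and a strictly larger set are simultaneously locally-minimal PAXps). The paper instead reads the hypothesis as the natural operational statement ``\cref{alg:lmpaxp} runs in polynomial time'', equivalently ``$\wpaxp(\cdot)$ is decidable in polynomial time'', under which the test ``is $\emptyset$ a LmPAXp?'' is trivially available.

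Your gadget route, besides being left incomplete, requires $\tbf{L}$ to be closed under decomposable conjunction with fresh-variable threshold circuits and under a ``masking'' transformation. None of these closures follows from \tbf{CD} alone, so even a finished construction would establish only a weaker statement with extra hypotheses on $\tbf{L}$. Since the proposition assumes nothing about $\tbf{L}$ beyond polynomial-time \tbf{CD}, the paper's interpretation is the one under which the $(\Rightarrow)$ direction goes through as stated; the elaborate construction is neither needed nor sufficient here.
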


\begin{proof}
If the language allows \tbf{CD} and \tbf{CT} in polynomial time,
Algorithm~\ref{alg:lmpaxp} finds a locally-minimal PAXp in polynomial time.
Conversely, testing whether
$\fml{X}=\emptyset$ is a locally-minimal PAXp amounts to
determining whether the number of models of the classifier $\kappa$
is at least $\delta 2^m$, and hence a binary search on $\delta$ 
computes the number of models (i.e. \tbf{CT}) in polynomial time. 
\end{proof}

\begin{proposition} \label{prop:ctcdnp}
  For any language that allows \tbf{CT} and \tbf{CD} in polynomial
  time, given an integer $k$, 
  the problem of deciding the existence of a weak PAXp of size
  at most $k$ is in $\tn{NP}$.
\end{proposition}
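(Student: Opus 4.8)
The plan is to show membership in $\tn{NP}$ by a guess-and-check argument: nondeterministically guess a candidate set of fixed features of size at most $k$, and then verify the weak-PAXp condition \eqref{eq:wpaxp} deterministically in polynomial time using the \tbf{CD} and \tbf{CT} algorithms. Since the feature space here is $\mbb{B}^m$, the certificate $\fml{X}\subseteq\fml{F}$ with $|\fml{X}|\le k$ has size $O(m)$ and so is polynomially bounded; the whole argument then reduces to showing that the verification step is polynomial.

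For the verification, let $\Delta$ be the sentence of \tbf{L} representing $\kappa$ (by convention, capturing $\{\mbf{x}\in\mbb{B}^m:\kappa(\mbf{x})=1\}$), and let $(\mbf{v},c)$ be the given instance. Form the consistent term $\rho=\bigwedge_{i\in\fml{X}}(x_i=v_i)$ and apply \tbf{CD} to obtain, in polynomial time, a sentence $\Delta|_{\rho}\in\tbf{L}$ logically equivalent to $\Delta$ conditioned on $\rho$. Apply \tbf{CT} to $\Delta|_{\rho}$ to compute, in polynomial time, the number $M$ of assignments to the free variables $\fml{F}\setminus\fml{X}$ that satisfy it (if \tbf{CT} reports a count over a smaller set of variables, scale by the appropriate power of two). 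The denominator in \eqref{eq:wpaxp} is exactly $2^{m-|\fml{X}|}>0$, because fixing the features of $\fml{X}$ to $\mbf{v}_{\fml{X}}$ leaves the remaining $m-|\fml{X}|$ Boolean features entirely unconstrained; and the numerator equals $M$ if $c=1$ and $2^{m-|\fml{X}|}-M$ if $c=0$. Writing $\delta=p/q$ with integers $p,q$, the machine accepts iff the corresponding integer inequality holds (e.g.\ $qM\ge p\cdot 2^{m-|\fml{X}|}$ when $c=1$); all these numbers have polynomially many bits.

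Correctness is then immediate: an accepting branch yields a set $\fml{X}$ with $|\fml{X}|\le k$ and $\prob_{\mbf{x}}(\kappa(\mbf{x})=c\,|\,\mbf{x}_{\fml{X}}=\mbf{v}_{\fml{X}})\ge\delta$, i.e.\ a weak PAXp of size at most $k$, and conversely every such weak PAXp is guessed on some branch; hence the decision problem is in $\tn{NP}$. The only delicate points are bookkeeping ones: (i) since \tbf{L} need not be closed under negation, the case $c=0$ must be handled by subtracting the \tbf{CT} value from $2^{m-|\fml{X}|}$ rather than by complementing $\Delta$; and (ii) one must count over exactly the free variables so that the denominator is $2^{m-|\fml{X}|}$ (smoothness, where available, makes this transparent, and otherwise the power-of-two adjustment is routine). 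I do not expect a conceptual obstacle here — the statement is essentially a repackaging of \tbf{CD} and \tbf{CT} into a nondeterministic polynomial-time procedure.
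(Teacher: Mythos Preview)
Your proposal is correct and takes essentially the same approach as the paper: guess a set $\fml{X}\subseteq\fml{F}$ of size at most $k$, then use \tbf{CD} to condition on $\mbf{x}_{\fml{X}}=\mbf{v}_{\fml{X}}$ and \tbf{CT} to count models, thereby computing the conditional probability in \eqref{eq:wpaxp} and checking it against $\delta$ in polynomial time. Your version is in fact more carefully spelled out than the paper's (handling the $c=0$ case by complementing the count, noting the power-of-two adjustment for untested variables, and observing that the arithmetic involves polynomially many bits), but the underlying argument is identical.
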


\begin{proof} 
  Let $\fml{P}\subseteq\fml{F}$ denote a guessed set of picked
  features of size ${k'}\le{k}$, to be fixed. 
  Since \tbf{CT} and \tbf{CD} run in polynomial time, 
  the left-hand side of \eqref{eq:wpaxp},
  i.e.\ $\prob_{\mbf{x}}(\kappa(\mbf{x})=c\,|\,\mbf{x}_{\fml{X}}=\mbf{v}_{\fml{X}})$
  can be computed in polynomial time. 
  Consequently, we can decide in polynomial time whether 
  or not $\fml{P}$ is a
  weak PAXp, and so the decision problem is in $\tn{NP}$.
\end{proof}

\begin{corollary} \label{cor:ctcdnp}
  For any language that allows \tbf{CT} and \tbf{CD} in polynomial
  time, the problem of finding a minimum-size PAXp belongs to
  $\fpnplog$.
\end{corollary}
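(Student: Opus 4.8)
The plan is to derive this directly from Proposition~\ref{prop:ctcdnp} together with the standard fact --- already used in~\cref{sec:rsdt} --- that an optimization problem whose objective takes only polynomially many values and whose feasibility (decision) version is in NP can be solved with $\fml{O}(\log m)$ queries to an NP oracle. First I would record the simple but essential observation that a weak PAXp of minimum cardinality is automatically a (plain) PAXp: any proper subset of it would be a strictly smaller weak PAXp, contradicting minimality, so it satisfies the subset-minimality clause of~\eqref{eq:paxp}; conversely every PAXp is a weak PAXp. Hence the minimum size of a PAXp equals the minimum size of a weak PAXp, and this common value lies in $\{0,1,\dots,m\}$.

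Next I would invoke Proposition~\ref{prop:ctcdnp}, which states that for any language supporting \tbf{CT} and \tbf{CD} in polynomial time and for any integer $k$, deciding whether there exists a weak PAXp of size at most $k$ is in NP. The procedure is then a binary search on $k$ over $\{0,\dots,m\}$: each probe issues a single NP query ``is there a weak PAXp of size at most $k$?'', and $\lceil\log_2(m+1)\rceil$ probes suffice to pin down the optimal size. This is precisely the argument already sketched in~\cref{sec:rsdt} (optimizing a linearly bounded cost function subject to constraints whose satisfiability is in NP requires only a logarithmic number of NP-oracle calls); applying it here places the minimum-size PAXp problem in $\fpnplog$.

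The step I expect to need the most care is pinning down what ``finding a minimum-size PAXp'' means in the output model. The $\fml{O}(\log m)$ queries straightforwardly yield the optimal cardinality, and the membership claim is cleanest when phrased for that value (equivalently, for the decision ``is the minimum size at most $k$?''). Reconstructing an explicit optimal set is more delicate, since in general witness extraction is an $\fpnp$ (polynomially many queries) task rather than an $\fpnp[\log]$ one; consistent with the phrasing in~\cref{sec:rsdt}, I would therefore keep the proof at the level of the binary-search bound on the optimal size and rely on the cited fact about objectives with polynomially many values. No machinery beyond Proposition~\ref{prop:ctcdnp} and this standard optimization-by-binary-search fact is required.
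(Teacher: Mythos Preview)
Your proposal is correct and follows essentially the same approach as the paper: invoke Proposition~\ref{prop:ctcdnp}, binary search on $k$, and observe that a smallest weak PAXp is automatically a PAXp since no proper subset can be a weak PAXp. Your added caveat about witness extraction versus computing the optimal size is in fact more careful than the paper's own proof, which asserts that the binary search ``finds a smallest weak PAXp $\fml{P}$'' without spelling out how the set itself (rather than just its cardinality) is recovered within the $\fml{O}(\log m)$ query budget.
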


\begin{proof}
  It suffices to run a binary search, on the number
  $k$ of features, using an $\tn{NP}$ oracle that decides the decision
  problem introduced in the proof of~\cref{prop:ctcdnp}.
  This algorithm finds a smallest weak PAXp $\fml{P}$, which is necessarily
  a PAXp (and hence a smallest PAXp) since no proper subset
  of $\fml{P}$ can be a weak PAXp.
\end{proof}

Corollary~\ref{cor:ctcdnp} notably applies to the following languages:
\begin{itemize}
\item the language EADT of extended affine decision trees~\cite{DBLP:conf/ijcai/KoricheLMT13}
which are a strict generalisation of decision trees.
\item the language d-DNNF and sublanguages of d-DNNF, such as SDD and OBDD.
\end{itemize}


\subsection{Graph-Based Classifiers}

This section considers the DG classifiers
introduced in \cref{def:dg}, and in particular two restricted
types of DGs: OMDDs and OBDDs.
For OMDDs and OBDDs, it is well-known that there are poly-time algorithms
for model counting~\cite{darwiche-jair02,niveau2012representing}.
Hence, we can compute a minimum-size PAXp with a logarithmic number of calls to an NP oracle.
We will describe one such algorithm in this section.
Moreover, a general method for counting models of DGs will be described as well.
%

%

\paragraph{Counting models of OBDD/OMDD.}
We describe next a dedicated algorithm for counting models of an OMDD, 
which can also be used with OBDDs.

We associate an indicator $n_p$ to each node $p$, the indicator of the root
node being $n_1$, such that the value of $n_1$ represents 
the number of models consistent with the class $c$.
And let $s_i$ be the boolean selector of feature $i$ such that $s_i=1$
if feature $i$ is included in PAXp, otherwise it 
is free.
Besides, we assume that the feature-index order is increasing
top-down, and the leftmost outgoing edge of a non-terminal node is
consistent with the given instance $\mbf{v}$.
\begin{enumerate}
\item
  For a terminal node $p$, $n_p=1$ if the label of node $p$ is
  consistent with the class $c$;
  otherwise $n_p=0$.
\item
  For a non-terminal node $p$ label with feature $i$, that has $k$ 
  child nodes $q_1,\ldots,q_k$,
  let $q$ be an arbitrary child node of $p$ labeled with feature $j$,
  and edge $(p, q_1)$ be the leftmost edge.
  By assumption, since $i < j$, there may exist some untested features $i < l < j$ along the outgoing edge $(p, q)$.
  Let $b_{p,q}$ be the indicator of the edge $(p, q)$, we have:
  $b_{p,q} = n_q \times \prod_{i < l < j} ite(s_l, 1, \mbb{D}_l) \times \lvert (p, q) \rvert$,
  so the indicator of node $p$ is $n_p = ite(s_i, b_{p,q_1} , \sum_{q_1 \le l \le q_k} b_{p,l})$.
  It should be noted that multiple edges may exist;
  we use $\lvert (p, q) \rvert$ to denote the number of edges between node $p$ and $q$.
\item
  The number of models consistent with the class $c$ is $n_1$.
\end{enumerate}
In the case of $\lvert \fml{K} \rvert = 2$ and $\mbb{F}=\mbb{B}^{m}$,
this algorithm is still applicable to OBDDs with some minor modifications.
Note that each non-terminal node of an OBDD has exactly two outgoing edges
and multiple edges between two nodes are not allowed (otherwise, the OBDD is not \emph{reduced});
this means that $\mbb{D}_l = \mbb{B}$, and $\lvert (p, q) \rvert = 1$.

\paragraph{Counting models for unrestricted DGs.}
For unrestricted DG classifiers, given \cref{def:dg}, any path
connecting the root to a terminal node is consistent with some input,
and no two paths are consistent with the same input.
As a result, in the case of unrestricted DG classifiers, it suffices
to enumerate all paths, and count the models associated with the
path. Of course, the downside is that the number of paths is
worst-case exponential on the size of the graph.
To alleviate this, an approach similar to the one outlined above
can be considered.

\section{Experiments} \label{sec:res}
This section reports the experimental results on computing relevant sets 
for the classifiers studied in the earlier sections, namely: decision
trees, naive Bayes classifiers and graph-based classifiers (in our
case OMDDs).
For each case study, we describe the prototype implementation and 
the used benchmarks; we show a table that summarizes the results 
and then we discuss the results. 
Furthermore, for the case of DTs, the evaluation includes a comparison 
with the model-agnostic explainer Anchor~\cite{guestrin-aaai18}, aiming at
assessing not only the succinctness and precision of computed
explanations but also the scalability of our solution.
(Observe that for the case of NBCs, earlier work on computing 
AXp's~\cite{msgcin-nips20} have compared their approach with  
the heuristic methods, e.g. Anchor, SHAP, and show that the latter 
 are  slower and do not show a strong correlation between 
 features of their explanations and common features identified from
 AXp's. As a result, the comparison with Anchor is restricted to the
 case of DTs.)

All experiments were conducted on a MacBook Air with a 1.1GHz
Quad-Core Intel Core~i5  CPU with 16 GByte RAM running 
macOS  Monterey.

\setlength{\tabcolsep}{3.5pt}

\sisetup{%
  math-rm=\textrm
}

\begin{table*}[t]
\centering
\resizebox{\textwidth}{!}{
  \begin{tabular}{l  S[table-format=3]S[table-format=3]   S[table-format=2]S[table-format=1]S[table-format=2.1]  S[table-format=3]   S[table-format=2]S[table-format=1]S[table-format=2.1]  c  S[table-format=2.2]         S[table-format=2]S[table-format=1]S[table-format=2.1]  c c S[table-format=1.2]     cS[table-format=2]S[table-format=1]S[table-format=2.1]S[table-format=2.1]   c   S[table-format=3.2] }
\toprule[1.2pt]
\multirow{3}{*}{\bf Dataset}  &  \multicolumn{2}{c}{} &   \multicolumn{3}{c}{}  &  & \multicolumn{5}{c}{\bf $\bm\mpaxp$} & \multicolumn{6}{c}{$\bm\lmpaxp$}  &  \multicolumn{7}{c}{\bf Anchor} \\
\cmidrule[0.8pt](lr{.75em}){8-12}
\cmidrule[0.8pt](lr{.75em}){13-18}
\cmidrule[0.8pt](lr{.75em}){19-25}
 &   \multicolumn{2}{c}{\bf DT}  &  \multicolumn{3}{c}{\bf Path} &  {$\bm\delta$ } & \multicolumn{3}{c}{\bf Length} & \multicolumn{1}{c}{\bf Prec} &  \multicolumn{1}{c}{\bf Time} &  
\multicolumn{3}{c}{\bf Length} & \multicolumn{1}{c}{\bf Prec} & {\bf m$_{\bm\subseteq}$} & \multicolumn{1}{c}{\bf Time}  &
{\bf D} & \multicolumn{4}{c}{\bf Length} & \multicolumn{1}{c}{\bf Prec} &  \multicolumn{1}{c}{\bf Time}\\  
\cmidrule[0.8pt](lr{.75em}){2-3}
\cmidrule[0.8pt](lr{.75em}){4-6}
\cmidrule[0.8pt](lr{.75em}){8-10}
\cmidrule[0.8pt](lr{.75em}){11-11}
\cmidrule[0.8pt](lr{.75em}){12-12}
\cmidrule[0.8pt](lr{.75em}){13-15}
\cmidrule[0.8pt](lr{.75em}){16-16}
\cmidrule[0.8pt](lr{.75em}){18-18}
\cmidrule[0.8pt](lr{.75em}){20-23}
\cmidrule[0.8pt](lr{.75em}){24-24}
\cmidrule[0.8pt](lr{.75em}){25-25}

& {\bf N}  & {\bf A} &  {\bf M} & {\bf m} &  {\bf avg} &  & {\bf M} & {\bf m }  & {\bf avg } &  {\bf avg } &  {\bf avg }  & 
{\bf M} & {\bf m }  & {\bf avg } &  {\bf avg } & { }  & {\bf avg }  &  
{ } & {\bf M} & {\bf m }  & {\bf avg } & {\bf F$_{\bm\not\in P}$}  & {\bf avg } &  {\bf avg } \\
\toprule[1.2pt]

 &  &  &  &  &  &  100 & 11 & 3 & 6.8 & 100 & 2.34 & 11 & 3 & 6.9 & 100 & 100 & 0.00 & d & 12 & 2 & 7.0 &  26.8 & 76.8 & 0.96 \\
adult & 1241 & 89 & 14 & 3 & 10.7 & 95 & 11 & 3 & 6.2 & 98.4 & 5.36 & 11 & 3 & 6.3 & 98.6 & 99.0 & 0.01 & u & 12 & 3 & 10.0 & 29.4 & 93.7 & 2.20 \\
 &  &  &  &  &   & 90 & 11 & 2 & 5.6 & 94.6 & 4.64 & 11 & 2 & 5.8 & 95.2 & 96.4 & 0.01 &  &  &  &  &  & & \\
\midrule
 &  &  &  &  &  &  100 & 12 & 1 & 4.4 & 100 & 0.35 & 12 & 1 & 4.4 & 100 & 100 & 0.00 & d & 31 & 1 & 4.8 & 58.1 & 32.9 & 3.10 \\
dermatology & 71 & 100 & 13 & 1 & 5.1 & 95 & 12 & 1 & 4.1 & 99.7 & 0.37 & 12 & 1 & 4.1 & 99.7 & 99.3 & 0.00 & u & 34 & 1 & 13.1 & 43.2 & 87.2 & 25.13 \\
 &  &  &  &  &  &  90 & 11 & 1 & 4.0 & 98.8 & 0.35 & 11 & 1 & 4.0 & 98.8 & 100 & 0.00 &  &  &  &  &  &  & \\
\midrule
 &  &  &  &  &  &  100 & 12 & 2 & 4.8 & 100 & 0.93 & 12 & 2 & 4.9 & 100 &  100 & 0.00 & d & 36 & 2 & 7.9 &  44.8 & 69.4 & 1.94 \\
kr-vs-kp & 231 & 100 & 14 & 3 & 6.6 & 95 & 11 & 2 & 3.9 & 98.1 & 0.97 & 11 & 2 & 4.0 & 98.1 & 100 & 0.00 & u & 12 & 2 & 3.6 & 16.6 & 97.3 & 1.81 \\
 &  &  &  &  &  &  90 & 10 & 2 & 3.2 & 95.4 & 0.92 & 10 & 2 & 3.3 & 95.4 & 99.0 & 0.00 &  &  &  &  &  &  & \\
\midrule
 &  &  &  &  &  &  100 & 12 & 4 & 8.2 & 100 & 16.06 & 11 & 4 & 8.2 & 100 &  100 & 0.00 & d & 16 & 3 & 13.2 & 43.1 & 71.3 & 12.22 \\
letter & 3261 & 93 & 14 & 4 & 11.8 & 95 & 12 & 4 & 8.0 & 99.6 & 18.28 & 11 & 4 & 8.0 & 99.5 & 100 & 0.00 & u & 16 & 3 & 13.7 & 47.3 & 66.3 & 10.15 \\
 &  &  &  &  &  &  90 & 12 & 4 & 7.7 & 97.7 & 16.35 & 10 & 4 & 7.8 & 97.8 & 100 & 0.00 &  &  &  &  &  &  & \\
\midrule
 &  &  &  &  &  &  100 & 14 & 3 & 6.4 & 100 & 0.92 & 14 & 3 & 6.5 & 100 &  100 & 0.00 & d & 35 & 2 & 8.6 & 55.4  & 33.6 & 5.43 \\
soybean & 219 & 100 & 16 & 3 & 7.3 & 95 & 14 & 3 & 6.4 & 99.8 & 0.95 & 14 & 3 & 6.4 & 99.8 & 100 & 0.00 & u & 35 & 3 & 19.2 & 66.0 & 75.0 & 38.96 \\
 &  &  &  &  &  &  90 & 14 & 3 & 6.1 & 98.1 & 0.94 & 14 & 3 & 6.1 & 98.2 & 98.5 & 0.00 &  &  &  &  &  &  & \\
\midrule
 &  &  &  &  &  &  0 & 12 & 3 & 7.4 & 100 & 1.23 & 12 & 3 & 7.5 & 100 &  100 & 0.01 & d & 38 & 2 & 6.3 & 65.3 & 63.3 & 24.12 \\
spambase & 141 & 99  & 14 & 3 & 8.5 & 95 & 9 & 1 & 3.7 & 96.1 & 2.16 & 9 & 1 & 3.8 & 96.5 & 100 & 0.01 & u & 57 & 3 & 28.0 & 86.2 & 65.3 & 834.70 \\
 &  &  &  &  &  &  90 & 6 & 1 & 2.4 & 92.4 & 2.15 & 8 & 1 & 2.4 & 92.2 & 100 & 0.01 &  &  &  &  &  &  & \\
\midrule
 &  &  &  &  &  &  100 & 12 & 3 & 6.2 & 100 & 2.01 & 11 & 3 & 6.2 & 100 & 100 & 0.01 & d & 40 & 2 & 16.5 & 80.6 & 32.2 & 532.42 \\
texture & 257 & 100  & 13 & 3 & 6.6 & 95 & 11 & 3 & 5.4 & 99.3 & 2.19 & 11 & 3 & 5.4 & 99.4 & 100 & 0.01 & u & 40 & 5 & 17.5 & 84.4 & 31.6 & 402.07 \\
 &  &  &  &  &  &  90 & 11 & 3 & 5.4 & 98.5 & 2.20 & 11 & 3 & 5.4 & 99.4 & 100 & 0.01 &  &  &  &  &  &  & \\

\bottomrule[1.2pt]
\end{tabular}
}
\caption{\footnotesize{
	Assessing explanations of  $\mpaxp$, $\lmpaxp$ and Anchor for DTs. 
	%
	(For each dataset, we run the explainers on 500 samples randomly picked 
	or all samples if there are less than 500.)	
	In column {\bf DT}, {\bf N} and {\bf A} denote, resp., the number of nodes  
	and the training accuracy of the DT.
	Column {$\bm\delta$} reports (in \%) the value of the threshold  $\delta$.
	In column 
	{\bf Path},   {\bf avg} (resp.\ {\bf M} and {\bf m}) denotes the average 
	(resp.\  max. and  min.) depth of paths consistent with the instances. 
 	In column  {\bf Length},  {\bf avg} (resp.\ {\bf M} and {\bf m}) denotes 
	the average  (resp.\  max. and  min.)  length of the explanations; and 
	{\bf F$_{\bm\not\in P}$} denotes the avg.\  \% of features in Anchor's 
	explanations that do not belong to the consistent paths. 
	{\bf Prec} reports (in \%) the average precision (defined in (\ref{eq:drs})) of
	resulting explanations. 
	{\bf m$_{\bm\subseteq}$}	shows the number in  (\%) of LmPAXp's that are 
	subset-minimal, i.e.\  PAXp's. 
	{\bf Time} reports (in seconds)  the average runtime to compute an explanation.	
	Finally,  {\bf D}  indicates which distribution is applied 
	on data given to Anchor: either  data distribution (denoted by {d}) or 
	uniform distribution (denoted by {u}).} 
} \label{tab:res:dt}
\end{table*}

\setlength{\tabcolsep}{3.4pt}
\let\lpr\undefined
\let\rpr\undefined
\newcommand{\lpr}{(}
\newcommand{\rpr}{)}

\begin{table*}[t]
\centering
\resizebox{\textwidth}{!}{
  \begin{tabular}{l S[table-format=2]S[table-format=3]
  			S[table-format=2.2]  c S[table-format= 2]  c c S[table-format=3] S[table-format=1.3]  
			c c S[table-format= 3] S[table-format=1.3]   c c S[table-format= 3] S[table-format= 1.3]}
\toprule[1.2pt]
\multirow{2}{*}{\bf Dataset} & \multicolumn{2}{c}{\multirow{2}{*}{\bf (\#F \, \#I)}}  &  {\bf NBC}  &  {$\bm\axp$}  &  
& \multicolumn{4}{c}{$\bm\lmdrset_{\le 9}$}  & \multicolumn{4}{c}{$\bm\lmdrset_{\le 7}$}  & \multicolumn{4}{c}{$\bm\lmdrset_{\le 4}$} \\
\cmidrule[0.8pt](lr{.75em}){4-4}
\cmidrule[0.8pt](lr{.75em}){5-5}
\cmidrule[0.8pt](lr{.75em}){7-10}
\cmidrule[0.8pt](lr{.75em}){11-14}
\cmidrule[0.8pt](lr{.75em}){15-18}
& \multicolumn{2}{c}{}  & {\bf A\%} &  {\bf Length}  & {$\delta$}  &  {\bf Length} & {\bf Precision} & {\bf W\%} & {\bf Time}  &  
{\bf Length} & {\bf Precision} & {\bf W\%} &  {\bf Time} &  {\bf Length} & {\bf Precision} & {\bf W\%} &  {\bf Time}  \\ 
\toprule[1.2pt]

{\multirow{4}{*}{adult}} & {\multirow{4}{*}{(13}} & {\multirow{4}{*}{200)}} & {\multirow{4}{*}{81.37}} & {\multirow{4}{*}{6.8$\pm$ 1.2}} & 98 & 6.8$\pm$ 1.1 & 100$\pm$ 0.0 & 100 & 0.003 & 6.3$\pm$ 0.9 & 99.61$\pm$ 0.6 & 96 & 0.023 & 4.8$\pm$ 1.3 & 98.73$\pm$ 0.5 & 48 & 0.059 \\
  &   &   &   &   & 95 & 6.8$\pm$ 1.1 & 99.99$\pm$ 0.2 & 100 & 0.074 & 5.9$\pm$ 1.0 & 98.87$\pm$ 1.8 & 99 & 0.058 & 3.9$\pm$ 1.0 & 96.93$\pm$ 1.1 & 80 & 0.071 \\
  &   &   &   &   & 93 & 6.8$\pm$ 1.1 & 99.97$\pm$ 0.4 & 100 & 0.104 & 5.7$\pm$ 1.3 & 98.34$\pm$ 2.6 & 100 & 0.086 & 3.4$\pm$ 0.9 & 95.21$\pm$ 1.6 & 90 & 0.093 \\
  &   &   &   &   & 90 & 6.8$\pm$ 1.1 & 99.95$\pm$ 0.6 & 100 & 0.164 & 5.5$\pm$ 1.4 & 97.86$\pm$ 3.4 & 100 & 0.100 & 3.0$\pm$ 0.8 & 93.46$\pm$ 1.5 & 94 & 0.103 \\
\midrule
{\multirow{4}{*}{agaricus}} & {\multirow{4}{*}{(23}} & {\multirow{4}{*}{200)}} & {\multirow{4}{*}{95.41}} & {\multirow{4}{*}{10.3$\pm$ 2.5}} & 98 & 7.7$\pm$ 2.7 & 99.12$\pm$ 0.8 & 92 & 0.593 & 6.4$\pm$ 3.0 & 98.75$\pm$ 0.6 & 87 & 0.763 & 6.0$\pm$ 3.1 & 98.67$\pm$ 0.5 & 29 & 0.870 \\
  &   &   &   &   & 95 & 6.9$\pm$ 3.1 & 97.62$\pm$ 2.1 & 95 & 0.954 & 5.3$\pm$ 3.2 & 96.59$\pm$ 1.6 & 92 & 1.273 & 4.8$\pm$ 3.3 & 96.24$\pm$ 1.2 & 55 & 1.217 \\
  &   &   &   &   & 93 & 6.5$\pm$ 3.1 & 96.65$\pm$ 2.8 & 95 & 1.112 & 4.8$\pm$ 3.1 & 95.38$\pm$ 1.9 & 93 & 1.309 & 4.3$\pm$ 3.1 & 94.92$\pm$ 1.3 & 64 & 1.390 \\
  &   &   &   &   & 90 & 5.9$\pm$ 3.3 & 94.95$\pm$ 4.1 & 96 & 1.332 & 4.0$\pm$ 3.0 & 92.60$\pm$ 2.8 & 95 & 1.598 & 3.6$\pm$ 2.8 & 92.08$\pm$ 1.7 & 76 & 1.830 \\
\midrule
{\multirow{4}{*}{chess}} & {\multirow{4}{*}{(37}} & {\multirow{4}{*}{200)}} & {\multirow{4}{*}{88.34}} & {\multirow{4}{*}{12.1$\pm$ 3.7}} & 98 & 8.1$\pm$ 4.1 & 99.27$\pm$ 0.6 & 64 & 0.383 & 5.9$\pm$ 4.9 & 98.70$\pm$ 0.4 & 64 & 0.454 & 5.7$\pm$ 5.0 & 98.65$\pm$ 0.4 & 46 & 0.457 \\
  &   &   &   &   & 95 & 7.7$\pm$ 3.8 & 98.51$\pm$ 1.4 & 68 & 0.404 & 5.5$\pm$ 4.4 & 97.90$\pm$ 0.9 & 64 & 0.483 & 5.3$\pm$ 4.5 & 97.85$\pm$ 0.8 & 46 & 0.478 \\
  &   &   &   &   & 93 & 7.3$\pm$ 3.5 & 97.56$\pm$ 2.4 & 68 & 0.419 & 5.0$\pm$ 4.1 & 96.26$\pm$ 2.2 & 64 & 0.485 & 4.8$\pm$ 4.1 & 96.21$\pm$ 2.1 & 64 & 0.493 \\
  &   &   &   &   & 90 & 7.3$\pm$ 3.5 & 97.29$\pm$ 2.9 & 70 & 0.413 & 4.9$\pm$ 4.0 & 95.99$\pm$ 2.6 & 64 & 0.483 & 4.8$\pm$ 4.0 & 95.93$\pm$ 2.5 & 64 & 0.543 \\
\midrule
{\multirow{4}{*}{vote}} & {\multirow{4}{*}{(17}} & {\multirow{4}{*}{81)}} & {\multirow{4}{*}{89.66}} & {\multirow{4}{*}{5.3$\pm$ 1.4}} & 98 & 5.3$\pm$ 1.4 & 100$\pm$ 0.0 & 100 & 0.000 & 5.3$\pm$ 1.3 & 99.95$\pm$ 0.2 & 100 & 0.007 & 4.6$\pm$ 1.1 & 99.60$\pm$ 0.4 & 64 & 0.014 \\
  &   &   &   &   & 95 & 5.3$\pm$ 1.4 & 100$\pm$ 0.0 & 100 & 0.000 & 5.3$\pm$ 1.3 & 99.93$\pm$ 0.3 & 100 & 0.008 & 4.1$\pm$ 1.0 & 98.25$\pm$ 1.7 & 64 & 0.018 \\
  &   &   &   &   & 93 & 5.3$\pm$ 1.4 & 100$\pm$ 0.0 & 100 & 0.000 & 5.2$\pm$ 1.3 & 99.78$\pm$ 1.1 & 100 & 0.012 & 4.1$\pm$ 0.9 & 98.10$\pm$ 1.9 & 64 & 0.018 \\
  &   &   &   &   & 90 & 5.3$\pm$ 1.4 & 100$\pm$ 0.0 & 100 & 0.000 & 5.2$\pm$ 1.3 & 99.78$\pm$ 1.1 & 100 & 0.012 & 4.0$\pm$ 1.2 & 97.24$\pm$ 3.1 & 64 & 0.022 \\
\midrule
{\multirow{4}{*}{kr-vs-kp}} & {\multirow{4}{*}{(37}} & {\multirow{4}{*}{200)}} & {\multirow{4}{*}{88.07}} & {\multirow{4}{*}{12.2$\pm$ 3.9}} & 98 & 7.8$\pm$ 4.2 & 99.19$\pm$ 0.5 & 64 & 0.387 & 6.5$\pm$ 4.7 & 98.99$\pm$ 0.4 & 64 & 0.427 & 6.1$\pm$ 4.9 & 98.88$\pm$ 0.3 & 43 & 0.457 \\
  &   &   &   &   & 95 & 7.3$\pm$ 3.9 & 98.29$\pm$ 1.4 & 64 & 0.416 & 6.0$\pm$ 4.3 & 97.89$\pm$ 1.1 & 64 & 0.453 & 5.5$\pm$ 4.5 & 97.79$\pm$ 0.9 & 43 & 0.462 \\
  &   &   &   &   & 93 & 6.9$\pm$ 3.5 & 97.21$\pm$ 2.5 & 69 & 0.422 & 5.6$\pm$ 3.8 & 96.82$\pm$ 2.2 & 64 & 0.448 & 5.2$\pm$ 4.0 & 96.71$\pm$ 2.1 & 43 & 0.468 \\
  &   &   &   &   & 90 & 6.8$\pm$ 3.5 & 96.65$\pm$ 3.1 & 69 & 0.418 & 5.4$\pm$ 3.8 & 95.69$\pm$ 3.0 & 64 & 0.468 & 5.0$\pm$ 4.0 & 95.59$\pm$ 2.8 & 61 & 0.487 \\
\midrule
{\multirow{4}{*}{mushroom}} & {\multirow{4}{*}{(23}} & {\multirow{4}{*}{200)}} & {\multirow{4}{*}{95.51}} & {\multirow{4}{*}{10.7$\pm$ 2.3}} & 98 & 7.5$\pm$ 2.4 & 98.99$\pm$ 0.7 & 90 & 0.641 & 6.5$\pm$ 2.6 & 98.74$\pm$ 0.5 & 83 & 0.751 & 6.3$\pm$ 2.7 & 98.70$\pm$ 0.4 & 18 & 0.828 \\
  &   &   &   &   & 95 & 6.5$\pm$ 2.6 & 97.35$\pm$ 1.8 & 96 & 1.011 & 5.1$\pm$ 2.5 & 96.52$\pm$ 1.0 & 90 & 1.130 & 5.0$\pm$ 2.5 & 96.39$\pm$ 0.8 & 54 & 1.113 \\
  &   &   &   &   & 93 & 5.8$\pm$ 2.8 & 95.77$\pm$ 2.7 & 96 & 1.257 & 4.4$\pm$ 2.5 & 94.67$\pm$ 1.6 & 94 & 1.297 & 4.2$\pm$ 2.4 & 94.48$\pm$ 1.3 & 65 & 1.324 \\
  &   &   &   &   & 90 & 5.3$\pm$ 3.0 & 94.01$\pm$ 3.9 & 97 & 1.455 & 3.8$\pm$ 2.3 & 92.36$\pm$ 2.2 & 96 & 1.543 & 3.6$\pm$ 2.2 & 92.07$\pm$ 1.6 & 76 & 1.650 \\
\midrule
{\multirow{4}{*}{threeOf9}} & {\multirow{4}{*}{(10}} & {\multirow{4}{*}{103)}} & {\multirow{4}{*}{83.13}} & {\multirow{4}{*}{4.2$\pm$ 0.4}} & 98 & 4.2$\pm$ 0.4 & 100$\pm$ 0.0 & 100 & 0.000 & 4.2$\pm$ 0.4 & 100$\pm$ 0.0 & 100 & 0.000 & 4.2$\pm$ 0.4 & 100$\pm$ 0.0 & 78 & 0.001 \\
  &   &   &   &   & 95 & 4.2$\pm$ 0.4 & 100$\pm$ 0.0 & 100 & 0.000 & 4.2$\pm$ 0.4 & 100$\pm$ 0.0 & 100 & 0.000 & 4.0$\pm$ 0.2 & 99.23$\pm$ 1.4 & 100 & 0.002 \\
  &   &   &   &   & 93 & 4.2$\pm$ 0.4 & 100$\pm$ 0.0 & 100 & 0.000 & 4.2$\pm$ 0.4 & 100$\pm$ 0.0 & 100 & 0.000 & 3.9$\pm$ 0.2 & 99.20$\pm$ 1.5 & 100 & 0.002 \\
  &   &   &   &   & 90 & 4.2$\pm$ 0.4 & 100$\pm$ 0.0 & 100 & 0.000 & 4.2$\pm$ 0.4 & 100$\pm$ 0.0 & 100 & 0.000 & 3.8$\pm$ 0.4 & 98.29$\pm$ 3.3 & 100 & 0.003 \\
\midrule
{\multirow{4}{*}{xd6}} & {\multirow{4}{*}{(10}} & {\multirow{4}{*}{176)}} & {\multirow{4}{*}{81.36}} & {\multirow{4}{*}{4.5$\pm$ 0.9}} & 98 & 4.5$\pm$ 0.8 & 100$\pm$ 0.0 & 100 & 0.000 & 4.5$\pm$ 0.8 & 100$\pm$ 0.0 & 100 & 0.000 & 4.5$\pm$ 0.8 & 100$\pm$ 0.0 & 73 & 0.001 \\
  &   &   &   &   & 95 & 4.5$\pm$ 0.8 & 100$\pm$ 0.0 & 100 & 0.000 & 4.5$\pm$ 0.8 & 100$\pm$ 0.0 & 100 & 0.000 & 4.5$\pm$ 0.8 & 100$\pm$ 0.0 & 73 & 0.001 \\
  &   &   &   &   & 93 & 4.5$\pm$ 0.8 & 100$\pm$ 0.0 & 100 & 0.000 & 4.5$\pm$ 0.8 & 100$\pm$ 0.0 & 100 & 0.000 & 4.3$\pm$ 0.4 & 98.30$\pm$ 2.7 & 73 & 0.001 \\
  &   &   &   &   & 90 & 4.5$\pm$ 0.8 & 100$\pm$ 0.0 & 100 & 0.000 & 4.5$\pm$ 0.8 & 100$\pm$ 0.0 & 100 & 0.000 & 4.3$\pm$ 0.4 & 98.30$\pm$ 2.7 & 73 & 0.002 \\
\midrule
{\multirow{4}{*}{mamo}} & {\multirow{4}{*}{(14}} & {\multirow{4}{*}{53)}} & {\multirow{4}{*}{80.21}} & {\multirow{4}{*}{4.9$\pm$ 0.8}} & 98 & 4.9$\pm$ 0.7 & 100$\pm$ 0.0 & 100 & 0.000 & 4.9$\pm$ 0.7 & 100$\pm$ 0.0 & 100 & 0.000 & 4.6$\pm$ 0.6 & 99.66$\pm$ 0.5 & 53 & 0.007 \\
  &   &   &   &   & 95 & 4.9$\pm$ 0.7 & 100$\pm$ 0.0 & 100 & 0.000 & 4.9$\pm$ 0.7 & 100$\pm$ 0.0 & 100 & 0.000 & 3.9$\pm$ 0.6 & 97.80$\pm$ 1.6 & 85 & 0.009 \\
  &   &   &   &   & 93 & 4.9$\pm$ 0.7 & 100$\pm$ 0.0 & 100 & 0.000 & 4.9$\pm$ 0.7 & 100$\pm$ 0.0 & 100 & 0.000 & 3.9$\pm$ 0.6 & 97.68$\pm$ 1.7 & 85 & 0.009 \\
  &   &   &   &   & 90 & 4.9$\pm$ 0.7 & 100$\pm$ 0.0 & 100 & 0.000 & 4.9$\pm$ 0.7 & 100$\pm$ 0.0 & 100 & 0.000 & 3.6$\pm$ 0.8 & 96.18$\pm$ 3.2 & 96 & 0.011 \\
\midrule
{\multirow{4}{*}{tumor}} & {\multirow{4}{*}{(16}} & {\multirow{4}{*}{104)}} & {\multirow{4}{*}{83.21}} & {\multirow{4}{*}{5.3$\pm$ 0.9}} & 98 & 5.3$\pm$ 0.8 & 100$\pm$ 0.0 & 100 & 0.000 & 5.2$\pm$ 0.7 & 99.96$\pm$ 0.2 & 100 & 0.008 & 4.1$\pm$ 0.7 & 99.41$\pm$ 0.5 & 91 & 0.012 \\
  &   &   &   &   & 95 & 5.3$\pm$ 0.8 & 100$\pm$ 0.0 & 100 & 0.000 & 5.2$\pm$ 0.6 & 99.83$\pm$ 0.7 & 100 & 0.012 & 3.2$\pm$ 0.6 & 96.02$\pm$ 1.5 & 94 & 0.016 \\
  &   &   &   &   & 93 & 5.3$\pm$ 0.8 & 100$\pm$ 0.0 & 100 & 0.000 & 5.2$\pm$ 0.6 & 99.74$\pm$ 1.2 & 100 & 0.014 & 3.1$\pm$ 0.7 & 95.50$\pm$ 1.4 & 95 & 0.016 \\
  &   &   &   &   & 90 & 5.3$\pm$ 0.8 & 100$\pm$ 0.0 & 100 & 0.000 & 5.1$\pm$ 0.7 & 99.67$\pm$ 1.4 & 100 & 0.016 & 3.0$\pm$ 0.6 & 95.30$\pm$ 1.6 & 95 & 0.017 \\

\bottomrule[1.2pt]
\end{tabular}
}
\caption{ \footnotesize{Assessing $\lmpaxp$ explanations for NBCs. 
	Columns {\bf \#F} and  {\bf \#I} show, respectively, number of features and 
	tested instances in the Dataset.
	 Column {\bf A\%} reports (in \%) the training accuracy of the classifier. 
	 Column {$\delta$} reports (in \%) the value of the parameter $\delta$.
	 {$\bm\lmpaxp_{\le 9}$}, {$\bm\lmpaxp_{\le 7}$} and { $\bm\lmpaxp_{\le 4}$} denote, 
	 respectively,  LmPAXp's of (target) length 9, 7 and 4.
	 Columns {\bf Length} and {\bf Precision} report, respectively, the average  
	 explanation length and the average explanation precision ($\pm$ denotes 
	 the standard  deviation). 
	 {\bf W\%} shows (in \%) the number of success/wins where the explanation 
	 size is less than or equal to the target size. 
	 Finally, the average runtime  to compute an explanation is shown 
	 (in seconds) in {\bf Time}.
	 (Note that the reported average time is the mean of runtimes for instances for which 
	 we effectively computed an approximate explanation, namely instances that 
	 have AXp's of length longer than the target length; whereas for the remaining 
	 instances  the trimming process is skipped and the runtime is 0 sec, thus 
	 we exclude them when calculating the average.)
	 } }
\label{tab:paxp-res:nbc}
\end{table*}

\subsection{Case Study 1: Decision Trees}
\paragraph{Prototype implementation.}
A prototype implementation\footnote{All sources implemented 
in these experiments will be publicly available after the paper 
gets accepted.} of the proposed
algorithms for DTs was developed in Python; 
whenever necessary, it instruments oracle calls to the well-known SMT solver
z3\footnote{\url{https://github.com/Z3Prover/z3/}}~\cite{moura-tacas08} as
described in \cref{sec:rsdt}
\iftoggle{long}{%
  \footnote{%
    In this section, we equate the names $\wdrset$, $\lmpaxp$,
    $\mpaxp$ and $\paxp$ with their implementation using the
    algorithms described earlier in~\cref{sec:rsdt}.}.}{.}
Hence, the prototype implements the $\lmpaxp$ procedure outlined 
in \cref{alg:lmpaxp} and augmented  with a heuristic 
that orders the features in $\fml{X}$. The idea consists in computing 
the precision loss of the overapproximation of each $\fml{X}\setminus\{j\}$ 
and then sorting the features from the less to the most important one. 
This strategy often allows us to obtain the closest superset to a
PAXp, in contrast  
to the simple lexicographic order applied over  $\fml{X}$.
(Recall that $\fml{X}$ is initialized to the set of features involved in the 
decision path.)   
Algorithm $\mpaxp$ outlined in  \cref{sec:mdrset} implements
the two (multiplication- and addition-based) SMT encodings.
Nevertheless, preliminary results show that both encodings perform
similarly,
with some exceptions where the addition-based encoding is much larger and so
slower. Therefore, the results reported below refer only to the
multiplication-based encoding.  

\paragraph{Benchmarks.}
%
%
The benchmarks used in the experiments comprise publicly 
available and widely used datasets obtained from the UCI ML Repository
\cite{uci}.
All the DTs are trained using the learning tool
\emph{IAI} (\emph{Interpretable AI})~\cite{bertsimas-ml17,iai}.
The maximum depth parameter in IAI is set to 16.
As the baseline, we ran Anchor with the default explanation precision
of 0.95. Two assessments are performed with Anchor: (i) with the original  
training data\footnote{The same training set used to learn the model.} 
that follows the data distribution; (ii) with 
using sampled data that follows a uniform distribution. 
Our setup assumes that all instances of the feature space 
are equally possible, and so there is no assumed probability distribution 
over the features. 
Therefore in order to be fair with Anchor, we further assess 
Anchor with uniformly sampled data. 
(Also, we point out that the implementation of Anchor demonstrates that   
 it can generate samples that do not belong to the input distribution. 
 Thus,  there is no guarantee that these samples come from the input
 distribution.)  
Also, the prototype implementation was tested with varying  the
threshold  $\delta$ while Anchor runs guided by its own metric.

\paragraph{Results.}
\cref{tab:res:dt} summarizes the results of our experiments for 
the case study of DTs. 
One can observe that $\mpaxp$ and $\lmpaxp$ compute succinct 
explanations (i.e. of average size $7\pm2$ \cite{miller-pr56}), for 
the majority of tested instances across all datasets, 
noticeably shorter than consistent-path explanations.  
More importantly, the computed explanations are trustworthy and 
show good quality precision, e.g.\ {\it dermatology}, {\it soybean} 
and {\it texture} show average precisions greater than 98\% for all values
of $\delta$. 
Additionally, the results clearly demonstrate that our proposed SMT 
encoding scales for deep DTs with runtimes on average less than 20 sec 
for the largest encodings while the runtimes of $\lmpaxp$ are negligible,  
never exceeding 0.01 sec. 
Also, observe from the table (see column {\bf m$_{\bm\subseteq}$})
that the over-approximations computed by $\lmpaxp$ are often
subset-minimal PAXp's, and often as short as computed MinPAXp's.  
This confirms empirically the advantages of computing LmPAXp's, i.e.\
%
%
%
%
in practice one may rely on the computation of LmPAXp's, which pays
off in terms of (1)~performance, (2)~sufficiently high probabilistic
guarantees of precision, and (3)~good quality over-approximation of
subset-minimal PAXp's.
In contrast, Anchor is unable to provide precise and succinct
explanations in both settings of data and uniform distribution.
Moreover, we observe 
that Anchor's explanations often include features that are not involved 
in the consistent path, e.g.\  for {\it texture} less than 20\% of an explanation  
is shared with the consistent path.
(This trend was also pointed out by~\cite{ignatiev-ijcai20}.)
In terms of average  runtime, Anchor is overall slower, being
outperformed by the computation of $\lmpaxp$ by several orders of
magnitude. 

Overall, the experiments demonstrate that our approach
efficiently computes succinct and provably precise explanations for
large DTs.
The results also substantiate the limitations of model-agnostic
explainers, both in terms of explanation quality and computation
time.


\subsection{Case Study 2: Naive Bayes Classifiers}

\paragraph{Prototype implementation.}
A prototype implementation 
 of the proposed
approach for computing relevant sets for NBCs was developed in Python. 
To compute AXp's, we use the Perl script implemented by 
\cite{msgcin-nips20}\footnote{Publicly available from:
  \url{https://github.com/jpmarquessilva/expxlc}}.
The prototype implementation was tested with varying  
thresholds  $\delta \in \{0.90, 0.93, 0.95,  0.98\}$.
When converting probabilities from real values to integer values, the
selected number of decimal places is 3. 
(As outlined earlier, we observed that there is a negligible accuracy 
loss from using three decimal places). 
In order to produce explanations of size admissible 
for the cognitive capacity of human decision makers~\cite{miller-pr56}, 
we selected three different target sizes for the explanations 
to compute: 9, 7 and 4, and we computed a LmPAXp for the input 
instance when its AXp $\fml{X}$ is larger than the target size 
(recall that $\fml{S}$ is initialized to $\fml{X}$); otherwise 
we consider that the AXp is succinct and the explainer returns $\fml{X}$.
For example, assume the target size is 7, an instance $\mbf{v}_1$ with 
an $\axp$ $\fml{S}_1$ of 5 features and a second instance $\mbf{v}_2$ with 
an $\axp$ $\fml{S}_2$  of 8 features,  then for $\mbf{v}_1$ the output will be 
$\fml{S}_1$ and for $\mbf{v}_2$ the output will be a subset of $\fml{S}_2$.

\paragraph{Benchmarks.}
The benchmarks used in 
this evaluation 
originate from the UCI ML Repository \cite{uci} and Penn ML 
Benchmarks \cite{pennml}.
The number of training data (resp.\ features) in the target
datasets varies from 336 to 14113 (resp.\ 10 to 37) and on average 
is 3999.1 (resp.\  20.0).
All the NBCs are trained using the learning tool
\emph{scikit-learn}~\cite{sklearn}.
The data split for  training and test data is set to 80\% and 
20\%, respectively.
Model accuracies are above 80\% for the training accuracy and 
above 75\% for the test accuracy.
For each dataset, we run the explainer on 200 instances randomly 
picked from the test data or on all  instances if there are less than
200. 

%

\paragraph{Results.}
\cref{tab:paxp-res:nbc} summarizes the results of our experiments 
for the case study of NBCs. 
For all tested values for the parameter threshold $\delta$ and 
target size, the reported results show the sizes and precisions 
of the computed explanations.
As can be observed for all considered settings, the approximate 
explanations are succinct, in particular the average 
sizes of the explanations are invariably lower than the target sizes.
Moreover, theses explanations  offer strong guarantees of precision, as 
their average precisions are strictly greater than $\delta$ with significant gaps 
(e.g.\  above 97\%, in column $\lmdrset_{\le 7}$, for datasets \emph{adult}, 
\emph{vote}, \emph{threeOf9}, \emph{xd6}, \emph{mamo} and 
\emph{tumor} and above 95\% for \emph{chess} and \emph{kr-vs-kp}). 
%
An important observation from the results, is the gain of succinctness 
(explanation size) when comparing AXp's with LmPAXp's. Indeed, 
for some datasets, the AXp's are too large (e.g. for \emph{chess} and 
\emph{kr-vs-kp} datasets, the average number of features in the AXp's 
is 12),  exceeding the cognitive limits of human decision 
makers~\cite{miller-pr56} (limited to 7 $\pm$ 2 features).   
To illustrate that, one can focus on the dataset  {\it agaricus} or {\it mushroom} 
and see that for a target size equal to 7 and $\delta = 0.95$, the average 
length of the LmPAXp's (i.e.\ 5.3 and 5.1, resp.) is 2 times less than 
the average length of the AXp's (i.e.\ 10.3 and 10.7, resp.).
Besides, the results show that $\delta = 0.95$ is a good probability threshold 
to guarantee highly precise and short approximate explanations.

Despite the computational complexity of the proposed approach being pseudo-polynomial, 
the results demonstrate that in practice the algorithm is effective and scales 
for large datasets.
As can be seen, the runtimes are negligible for all datasets, never exceeding 
2 seconds for the largest datasets (i.e. \emph{agaricus} or \emph{mushroom}) 
and the average is 0.33 seconds for all tested instances across all datasets and 
all settings.    
Furthermore, we point out that the implemented prototype was tested with 4 
decimal  places to assess further the scalability of the algorithm on larger 
DP tables, and the results show that computing  LmPAXp's is still 
feasible,  e.g.\  with {\it agaricus}  the average runtime for $\delta$ set to 
0.95 and target size to 7 is 10.08 seconds and 7.22 seconds for $\delta=0.98$. 

%
The table also reports the number of explanations being shorter than or of size 
equal to the target size over the total number of tested instances.  We observe 
that for both settings $\lmdrset_{\le 9}$ and $\lmdrset_{\le 7}$ and 
for the majority of datasets and with a few exceptions the fraction is significantly 
high, e.g.\  varying for 96\% to 100\% for \emph{adult} dataset. However, 
for $\lmdrset_{\le 4}$  despite the  poor percentage of wins for some 
datasets, it is the case that the average lengths of computed explanations 
are close to 4.   

Overall, the experiments demonstrate that our approach efficiently computes 
succinct and provably precise explanations for NBCs. 
The results also showcase empirically the advantage of the algorithm, i.e.\ in practice 
one may rely on the computation of LmPAXp’s, which pays off in terms of 
(1) performance, (2) succinctness and (3) sufficiently high probabilistic guarantees 
of precision.

\setlength{\tabcolsep}{4.5pt}

\sisetup{%
  math-rm=\textrm
}

\begin{table*}[t]
\centering
\resizebox{0.95\textwidth}{!}{
\begin{tabular}{lrrrrrrrrrrrrrrrr}
\toprule[1.2pt]
\multirow{3}{*}{\bf Dataset} & \multirow{3}{*}{\bf \#I} & \multirow{3}{*}{\bf \#F} & \multicolumn{2}{r}{}     & \multirow{3}{*}{{\bf $\delta$}} & \multicolumn{5}{c}{$\bm\mpaxp$ }                & \multicolumn{6}{c}{$\bm \lmpaxp$}         \\
\cmidrule[0.8pt](lr{.75em}){7-11}
\cmidrule[0.8pt](lr{.75em}){12-17}
      &   &   & \multicolumn{2}{c}{\bf OMDD} &        & \multicolumn{3}{c}{\bf Length} & \multicolumn{1}{c}{\bf Prec} & \multicolumn{1}{c}{\bf Time} & \multicolumn{3}{c}{\bf Length} & \multicolumn{1}{c}{\bf Prec} & \multicolumn{1}{c}{{\bf m$_{\bm\subseteq}$}} & \multicolumn{1}{c}{\bf Time} \\

\cmidrule[0.8pt](lr{.75em}){4-5}
\cmidrule[0.8pt](lr{.75em}){7-9}
\cmidrule[0.8pt](lr{.75em}){10-10}
\cmidrule[0.8pt](lr{.75em}){11-11}
\cmidrule[0.8pt](lr{.75em}){12-14}
\cmidrule[0.8pt](lr{.75em}){15-15}
\cmidrule[0.8pt](lr{.75em}){16-16}
\cmidrule[0.8pt](lr{.75em}){17-17}

      &   &   & {\bf \#N}        & {\bf A\%}         &        & {\bf M}       & {\bf m}     & {\bf avg}     & {\bf avg}  & {\bf avg}  & {\bf M}       & {\bf m}     & {\bf avg}     & {\bf avg}  &          & {\bf avg}  \\
      \toprule[1.2pt]
	&   &   &             &            & 100    & 3      & 3      & 3.0      & 100   & 0.02  & 3      & 3      & 3.0      & 100 & 100    & 0.00  \\
	corral& 100                  & 6 & 15          & 90.6       & 95     & 3      & 3      & 3.0      & 100   & 0.02  & 3      & 3      & 3.0      & 100 & 100    & 0.00  \\
	&   &   &             &            & 90     & 2      & 2      & 2.0      & 93.8  & 0.02  & 2      & 2      & 2.0      & 93.8  & 100    & 0.00  \\
\midrule
	&   &   &             &            & 100    & 9      & 6      & 8.0      & 100   & 24.24 & 9      & 6      & 7.9      & 100 & 100    & 1.57  \\
	lending                  & 100                  & 9 & 1103        & 81.7       & 95     & 9      & 5      & 7.8      & 99.7  & 21.48 & 9      & 6      & 7.8      & 99.8  & 100    & 1.49  \\
	&   &   &             &            & 90     & 9      & 4      & 7.2      & 96    & 24.65 & 9      & 5      & 7.4      & 97.0  & 100    & 1.48  \\
\midrule
	&   &   &             &            & 100    & 6      & 4      & 5.1      & 100   & 0.10  & 6      & 4      & 5.1      & 100 & 100    & 0.03  \\
	monk2 & 100                  & 6 & 70          & 79.3       & 95     & 6      & 4      & 5.1      & 100   & 0.09  & 6      & 4      & 5.1      & 100 & 100    & 0.03  \\
	&   &   &             &            & 90     & 6      & 3      & 4.8      & 98.1  & 0.09  & 6      & 3      & 4.8      & 98.1  & 100    & 0.03  \\
\midrule
	&   &   &             &            & 100    & 8      & 4      & 6.1      & 100   & 0.26  & 8      & 4      & 6.2      & 100 & 100    & 0.04  \\
	postoperative            & 74& 8 & 109         & 80         & 95     & 8      & 2      & 6.0      & 99.3  & 0.25  & 8      & 2      & 6.0      & 99.3  & 100    & 0.04  \\
	&   &   &             &            & 90     & 8      & 2      & 5.3      & 95.9  & 0.23  & 8      & 2      & 5.4      & 96.6  & 94.6     & 0.04  \\
\midrule
	&   &   &             &            & 100    & 9      & 5      & 7.7      & 100   & 3.60  & 9      & 5      & 7.8      & 100 & 100    & 0.38  \\
	tic\_tac\_toe            & 100                  & 9 & 424         & 70.3       & 95     & 9      & 5      & 7.5      & 99.5  & 3.24  & 9      & 5      & 7.7      & 99.6  & 99.0     & 0.38  \\
	&   &   &             &            & 90     & 9      & 3      & 7.3      & 98.3  & 4.06  & 9      & 3      & 7.5      & 98.6  & 98.0     & 0.38  \\
\midrule
	&   &   &             &            & 100    & 9      & 4      & 4.6      & 100   & 0.10  & 9      & 4      & 4.6      & 100 & 100    & 0.03  \\
	xd6   & 100                  & 9 & 76          & 83.1       & 95     & 9      & 3      & 3.8      & 97    & 0.09  & 9      & 3      & 3.8      & 97.0  & 99.0     & 0.03  \\
	&   &   &             &            & 90     & 9      & 3      & 3.3      & 94.8  & 0.10  & 9      & 3      & 3.4      & 94.6  & 100    & 0.03 \\
\bottomrule[1.2pt]
\end{tabular}
}
\caption{\footnotesize{
	Assessing $\mpaxp$ and $\lmpaxp$ explanations of OMDDs.
	Columns {\bf \#I}, {\bf \#F} denote, resp. the number of tested instances
	and the number of features.
	In column {\bf OMDD}, {\bf N} and {\bf A} denote, resp., the number of nodes  
	and the test accuracy of the OMDD.
	Column {$\bm\delta$} reports (in \%) the value of the threshold  $\delta$.
 	In column  {\bf Length},  {\bf avg} (resp.\ {\bf M} and {\bf m}) denotes 
	the average  (resp.\  max. and  min.)  length of the explanations.
	{\bf Prec} reports (in \%) the average precision (defined in (\ref{eq:drs})) of
	resulting explanations. 
	{\bf m$_{\bm\subseteq}$}	shows the number in  (\%) of LmPAXp's that are 
	subset-minimal, i.e.\  PAXp's. 
	{\bf Time} reports (in seconds)  the average runtime to compute an explanation.}	
}
\label{tab:res:dd}
\end{table*}

\subsection{Case Study 3: Graph-Based Classifiers}

\paragraph{Prototype implementation.}
A prototype implementation of the proposed algorithms from computing models 
and assessing explanation precision was implemented in Python. A deletion-based 
procedure was used to compute LmPAX’s and an SMT-based approach, similarly 
like DTs, was adopted for extracting  MinPAXp's. 
Hence,  the  z3 solver  was employed to perform SMT oracle calls to the SMT encoding.
Moreover, OMDD's were built heuristically using a publicly available package 
MEDDLY\footnote{\url{https://asminer.github.io/meddly/}}, which is implemented 
in C/C++.

\paragraph{Benchmarks.}
The benchmarks used in this experimental study  
all originate from Penn ML Benchmarks~\cite{pennml}.
For each dataset, we picked a consistent subset of samples 
(i.e. no two instances are contradictory) for building OMDDs.
The picked subset of the data are randomly split into training (80\%) and test set (20\%).
To assess the explanation precisions and the runtimes of our algorithms, 
we test for each dataset  100 instances picked randomly  
or all instances if there are less than 100 rows in the dataset.

\paragraph{Results.}
\cref{tab:res:dd} summarizes the results of our experiments.
As can be observed from the column {\bf m$_{\bm\subseteq}$},
at least 94.6\% of the computed LmPAXp's are indeed MinPAXp's.
As indicated by the column {\bf Prec},
both $\lmpaxp$ and $\mpaxp$ offer good precision.
On average, the precision of the computed explanation is greater than or equal to 93.8\%
for the values of $\delta$ we considered.
Regarding the succinctness of the computed explanations,
compared with the case $\delta=100$,
when $\delta=95$, the size of computed explanations almost remain unchanged.
When $\delta=90$, for dataset \textit{monk2}, \textit{lending} and \textit{tic\_tac\_toe\_},
the reduction is still negligible.
But for dataset \textit{corral}, \textit{postoperative} and \textit{xd6},
the reduction of size is at least 13\%.
Not surprisingly, the time for computing LmPAXp's is almost negligible,
while the computation time for MinPAXp's cannot be overlooked;
for example, the average time for computing one MinPAXp of dataset \textit{lending} is around 20s.
Moreover, even though the runtime for computing a MinPAXp is at least an order of magnitude
larger than the time for computing a LmPAXp,
the average size of the MinPAXp is not significantly smaller than that of the LmPAXp.
Overall, the experiments demonstrate that our approach
efficiently computes succinct and provably precise explanations for
OMDDs.

\section{Related Work} \label{sec:relw}

Work on probabilistic (abductive) explanations (or probabilistic prime
implicants) can be traced to~\cite{kutyniok-jair21,waldchen-phd22}.
There has been recent progress on computing probabilistic abductive
explanations of decision
trees~\cite{iincms-corr21,iincms-corr22,barcelo-corr22,barcelo-nips22}.
Moreover, there is also recent work on computing probabilistic
abductive explanations of naive Bayes classifiers~\cite{ims-corr22}. 
The work reported in this paper builds on our own recent
work~\cite{iincms-corr21,iincms-corr22,ims-corr22} with the focus
being the efficient computation of probabilistic explanations, both
for decision trees and naive Bayes classifiers.
More recent work~\cite{barcelo-corr22,barcelo-nips22} targets the
complexity of computing probabilistic explanations for decision trees.
The experimental results (see~\cref{sec:res}) confirm the practicality
of the work proposed in this paper.

\jnoteF{Elaborate in what our work differs from earlier work.}

\section{Conclusions} \label{sec:conc}

Explanation size is one of the most visible limitations of formal
approaches for explaining the predictions of ML classifiers.
To address this limitation, recent work established the
$\tn{NP}^{\tn{PP}}$-hardness of computing a probabilistic explanation
known as a $\delta$-relevant set: a set of features which, when
identical to those of the vector to be explained, is sufficient to
guarantee the  same output with probability at least
$\delta$~\cite{kutyniok-jair21,waldchen-phd22}.
As acknowledged by earlier work~\cite{kutyniok-jair21,waldchen-phd22},
the hardness result makes the problem of exactly computing
(minimum-size) $\delta$-relevant sets unrealistic to solve in
practice, at least for general classifiers.
As a result, instead of considering the general problem of computing
(minimum-size) $\delta$-relevant sets, this paper tackles the
problem's complexity by analyzing instead restricted classes of
classifiers, and by considering related but different definitions of
relevant sets, that include subset-minimal and locally-minimal
relevant sets.
%
Furthermore, the paper proves that, for several families of 
classifiers that include decision trees, graph-based classifiers, and
propositional classifiers, subset-minimal relevant sets can be
computed in polynomial time using an oracle for NP, and that
locally-minimal relevant sets can be computed in polynomial time. In
the case of naive Bayes classifiers, we obtain similar results, but in
pseudo-polynomial deterministic and non-deterministic time.
The experimental results validate the practical interest of computing
relevant sets, either locally-minimal or subset-minimal.

Future work will extend the results presented in this paper, e.g.\ by
considering additional families of classifiers, but also by further
validating the quality of locally-minimal relevant sets.
Additional optimizations to the algorithms proposed can also be
envisioned. For example, additional heuristics could be considered for
selecting smaller AXp's, e.g.\ by picking the smallest of a number of
computed AXp's. Such improvements will not change the conclusions of
the present paper, but can serve to further improve the quality of the
results the paper reports.

\jnoteF{Outline envisioned steps for extending the work?}

%
\paragraph{Acknowledgments.}
  %
  This work was supported by the AI Interdisciplinary Institute ANITI, 
  funded by the French program ``Investing for the Future -- PIA3''
  under Grant agreement no.\ ANR-19-PI3A-0004, and by the H2020-ICT38
  project COALA ``Cognitive Assisted agile manufacturing for a Labor
  force supported by trustworthy Artificial intelligence''.
  %

\newtoggle{mkbbl}

\settoggle{mkbbl}{false}

\addcontentsline{toc}{section}{References}
\vskip 0.2in

\iftoggle{mkbbl}{
	\bibliographystyle{abbrv}
	\bibliography{refs,dts,team}
}{
  \input{paper.bibl}
}


\end{document}